\documentclass{article}

 \usepackage[preprint]{neurips_2026}

\usepackage{microtype}
\usepackage{graphicx}
\usepackage{subfigure}
\usepackage{booktabs} 

\usepackage{hyperref}

\usepackage[utf8]{inputenc} 
\usepackage[T1]{fontenc}    
\usepackage{hyperref}       
\usepackage{url}            
\usepackage{booktabs}       
\usepackage{amsfonts}       
\usepackage{nicefrac}       
\usepackage{microtype}      
\usepackage{xcolor}         

\usepackage{algorithm}
\usepackage{algorithmic}

\usepackage{wrapfig}

\usepackage{amsmath}
\usepackage{amssymb}
\usepackage{mathtools}
\usepackage{amsthm}

\usepackage{multirow}
\usepackage{multicol}

\usepackage{chngcntr}

\usepackage[capitalize,noabbrev]{cleveref}

\usepackage{thmtools,thm-restate}

\theoremstyle{plain}
\newtheorem{theorem}{Theorem}[section]

\newtheorem{lemma}[theorem]{Lemma}

\theoremstyle{definition}
\newtheorem{definition}[theorem]{Definition}

\theoremstyle{remark}
\newtheorem{remark}[theorem]{Remark}

\newcounter{motifcount}
\newenvironment{motif}
{
  \refstepcounter{motifcount}  
  \textbf{GEF \themotifcount: }\itshape
}

\usepackage[textsize=tiny]{todonotes}

\usepackage{amsfonts}
\usepackage{bm}
\usepackage{paralist}

 \newcommand{\COMMENTS}[1]{ \textit{// #1}}

\newcommand{\RETURN}{\STATE \textbf{return}~}

\DeclareMathOperator*{\argmin}{arg\,min}
\DeclareMathOperator*{\Poi}{Poi}

\newcommand{\Hom}{\text{Hom}}
\newcommand{\cheng}[1]{}
 \newcommand{\SK}[1]{}
\newcommand{\dan}[1]{}
\newcommand{\A}[1]{}
\def\vertex{\tikz[baseline=.1ex]{
\fill (0,0.1) circle (2pt) coordinate (A);}
}

\def\edge{\tikz[baseline=.1ex]{
\fill (0,0.15) circle (2pt) coordinate (A);
\fill (1.5ex,0.15) circle (2pt) coordinate (B);
\draw (A)--(B);}
}

\title{Flowette: Flow Matching with Graphette Priors for Graph Generation}

\author{%
  Asiri Wijesinghe$^1$\thanks{corresponding author: \texttt{asiriwijesinghe.wijesinghe@data61.csiro.au}} \quad
  Sevvandi Kandanaarachchi$^1$ \quad
  Daniel M. Steinberg$^1$ \\
  \textbf{Cheng Soon Ong}$^{1,2}$ \quad \\
  $^1$CSIRO's Data61 \quad $^2$Australian National University \\
}

\begin{document}

\maketitle

\begin{abstract}

  We study generative modeling of graphs with recurring subgraph motifs. We propose Flowette, a continuous flow matching framework that employs a graph neural network-based transformer to learn a velocity field over graph representations with node and edge attributes. Our model promotes topology-aware alignment through optimal transport-based coupling and encourages global structural coherence through regularisation. To incorporate domain-driven structural priors, we introduce graphettes, a new probabilistic family of graph structure models that generalize graphons via controlled structural edits for motifs such as rings, stars, and trees. We theoretically analyze the coupling, invariance, and structural properties of the framework, evaluate it on synthetic and molecular benchmarks, and isolate the contributions of the structural prior, the optimal-transport coupling, and the regularisation terms through controlled ablations. Flowette achieves competitive performance overall, attaining state-of-the-art results on several metrics across multiple benchmarks, highlighting the effectiveness of combining structural priors with flow-based training for modeling complex graph distributions.
\end{abstract}

\section{Introduction} \label{sec:introduction}

Generative modeling of graphs has applications spanning molecular design, social networks, biological systems, and combinatorial optimization \cite{you2018graphrnn, kipf2016variational, zhu2022survey}. Graphs exhibit complex structure, permutation symmetry, and rich higher-order motifs such as cycles, hubs, and repeated subgraphs. These motifs encode essential domain semantics. For instance, molecular graphs are characterized by ring structures and valence constraints, 
social networks contain hub-and-spoke patterns, and many biological and combinatorial graphs are tree-like. Capturing such recurring substructures within a single framework, rather than via motif-specific architectures, remains a major open challenge for graph generative models.
Existing approaches rely on domain-specific architectures, resulting in specialized models for each motif \cite{jin2018junction, shi2020masked, karami2023higen}. This limits transferability and obscures the underlying principles of structure-aware graph generation. 


Recent progress in continuous-time generative modeling-notably diffusion models and flow matching-has provided powerful tools for high-dimensional generation \cite{songscore, lipman2023flowmatching}. Flow matching (FM) directly learns a velocity field transporting samples from a source distribution to the data distribution \citep{lipman2023flowmatching, liu2023rectifiedflow}. 
However, existing graph FM approaches typically rely on implicit or independent coupling between noise and data graphs, ignoring structural compatibility under permutation and size variability. As a result of forcing models to interpolate between topologically mismatched graphs, velocity supervision can be high-variance and inconsistent.


Three ingredients are essential for structure-aware flow-based graph generation. 
First, supervision pairs in flow matching should be structurally aligned.
Second, the training objective must explicitly regularize global trajectory coherence and domain feasibility.
Third, the source distribution itself should encode meaningful structural priors, reflecting the dominant motifs.
\cheng{This is space, holding some space, much more space.
This is space, holding some space, much more space.
}

We introduce Flowette, a graph generation framework that contains these three ingredients.
{\bf(1)} Flowette couples noise graphs and data graphs using fused Gromov-Wasserstein (FGW) optimal transport \cite{titouan2019optimal}, yielding node permutation-consistent correspondences across variable-sized graphs.
By providing supervision pairs that respect topology, we enable structure aligned flow matching on graphs.
The velocity field is parameterized by a topology respecting Graph Neural Network (GNN) based transformer that jointly evolves node features, edge features, and adjacency values.
{\bf(2)} We introduce regularisation terms in the training objective to preserve long range structures and permutation equivariance. The resulting training objective leads to more stable continuous time generation and improved structural fidelity.
{\bf(3)} To enable structural inductive biases in the source distribution, we introduce a new mathematical object \emph{graphettes}, a probabilistic family of graph priors that generalize graphons via controlled edits. Graphettes allow principled injection or removal of motifs (rings, stars, cycles), providing a unified mechanism to model dense, sparse, and motif-rich graphs within a single framework.
Unlike task-specific heuristics, graphettes decouple structural assumptions from the neural architecture and require only coarse structural knowledge of the target family (whether it contains rings, hubs, or trees), rather than empirical tuning of model components.
We empirically validate that each of these three ingredients contributes independently to performance: replacing the graphette prior with a standard graphon, replacing FGW coupling with random or Euclidean pairing, or removing individual regularisation terms each produces substantial degradation across both synthetic and molecular benchmarks.

\section{Background and setting}
\label{background}
Graphs have interchangeable representations, and we use the following conventions.
A graph $G$ with $n$ vertices (nodes) can be written in terms of vertices and edges, $(V, E)$, or equivalently by its $A \in \{0, 1\}^{n \times n}$ binary adjacency matrix.
We retrieve the sets of vertices and edges in $G$ by $V(G)$ and $E(G)$ respectively and denote the number of vertices by $|V(G)|$.

When we need to model attributed graphs with node and edge features, we use $X \in \mathbb{R}^{n \times d_x}$ for node features and $F\in \mathbb{R}^{n \times n \times d_f}$ for edge features of $G$. Sec. \ref{sec:methods} introduces the flow matching framework for  
attributed graphs, and we denote a graph $G$ by $(A, X, F)$, a triple consisting of the adjacency matrix, node and edge features of $G$. 
In Sec. \ref{sec:Graphettes} we introduce graphette priors focusing on non-attributed graphs $G$, which we denote by  $(V,E)$.  
We use $G \sim \mathcal{W}$ to indicate that a graph is sampled from some distribution or using a sampling function $\mathcal{W}$.

\cheng{This is space, holding some space, much more space.
}

\subsection{Flow Matching on Graphs}

\paragraph{Related Work.} Graph generation has need to capture local and global structures, as motivated by domains like molecular design, combinatorial optimisation, and biological networks \cite{mercado2021graph, sun2023difusco, yi2023graph}. Existing methods can be broadly grouped into autoregressive \citep{you2018graphrnn, liao2019efficient, shi2020masked, goyal2020graphgen} and one-shot paradigms \citep{dai2020scalable, shirzad2022tdgen, karami2023higen, davies2023size}.
Diffusion-based generative models have recently become the dominant one-shot paradigm for graphs \citep{niu2020permutation, jo2022sdegraphs}, but discrete time versions tie sampling trajectory to the training schedule.
Continuous-time discrete diffusion models based on Continuous Time Markov Chains (CTMCs) address this rigidity by decoupling the sampling grid from the training objective \citep{campbell2022continuous, xu2024disco}. DisCo \cite{xu2024disco}, for instance, defines a discrete-state continuous-time diffusion on graphs and inherits the permutation-equivariant/invariant properties of DiGress \citep{vignacdigress} while enabling flexible numerical integration and theoretical control of approximation error \citep{xu2024disco}. 
Discrete flow matching (DFM) extends FM to discrete state spaces, where probability paths are defined by time-inhomogeneous CTMCs and the model parameterises transition rates rather than continuous velocities \citep{campbell2024discretefm, gat2024discretefm,qinmadeira2024defog}. In parallel, some works have explored continuous FM and flow-based formulations for graphs by relaxing discrete structures into continuous spaces~\citep{liu2019graph,eijkelboom2024vfmgraphs}.
However, they typically rely on implicit couplings between source and target graphs.
Concurrent work~\citep{chen2026prior} independently explores domain-informed source distributions for graph flow matching, focusing on graph reconstruction rather than generation.
Extended background on flow matching, together with a more thorough discussion of related work, is provided in Appendix~\ref{sec:more-related-work}.

\paragraph{Flow matching} offers an alternative to diffusion-based training for continuous-time generative models \citep{lipman2023flowmatching}. Rectified-flow variants adopt linear interpolants, and regress the associated constant conditional velocity field \citep{liu2023rectifiedflow}, leading to state-of-the-art results in high-dimensional image and sequence generation \citep{esser2024rectified, ma2024sit}. 
We need coupling between graph samples $G_0^i$ from a source (noise) distribution in $\mathcal{S}$ and target (data) graphs $G_1^j$ from the data distribution in $\mathcal{S}$.
$\mathcal{S}$ will later correspond to a continuous relaxation of adjacency matrices, node features, and edge features. 
FM learns a time-conditioned vector field $v_\theta : \mathcal{S} \times [0,1] \to \mathcal{S}$, parameterized by $\theta$, that approximates the conditional velocity along the path defined by time $t \in [0, 1]$.
It assumes paired samples $(G_0^i, G_1^j)$ together with interpolating states $(G_t)_{t\in[0,1]}$ defining a probability path between the noise distribution and the data distribution.
The FM training objective is defined as a regression loss:
\begin{equation}
\mathcal{L}_{\mathrm{FM}}(\theta)
=
\mathbb{E}_{\substack{G_0, G_1 \\ t \sim \mathcal{U}[0,1]}}
\left[
\left\|
v_\theta(G_t,t) - u^\star(G_t \mid G_0,G_1)
\right\|_2^2
\right].
\label{eq:fm_loss}
\end{equation}
At generation time, new samples are obtained by solving the ordinary differential equation
\begin{equation}
\frac{d G_t}{dt} = v_\theta(G_t,t),
\qquad G_{t=0} \sim p_0,
\label{eq:fm_ode}
\vspace{-1mm}
\end{equation}
using a numerical integrator (e.g. Euler), and taking $G_{t=1}$ as a sample from the learned model distribution.


\begin{figure*}[t]
    \centering
    \includegraphics[width=0.80\linewidth]{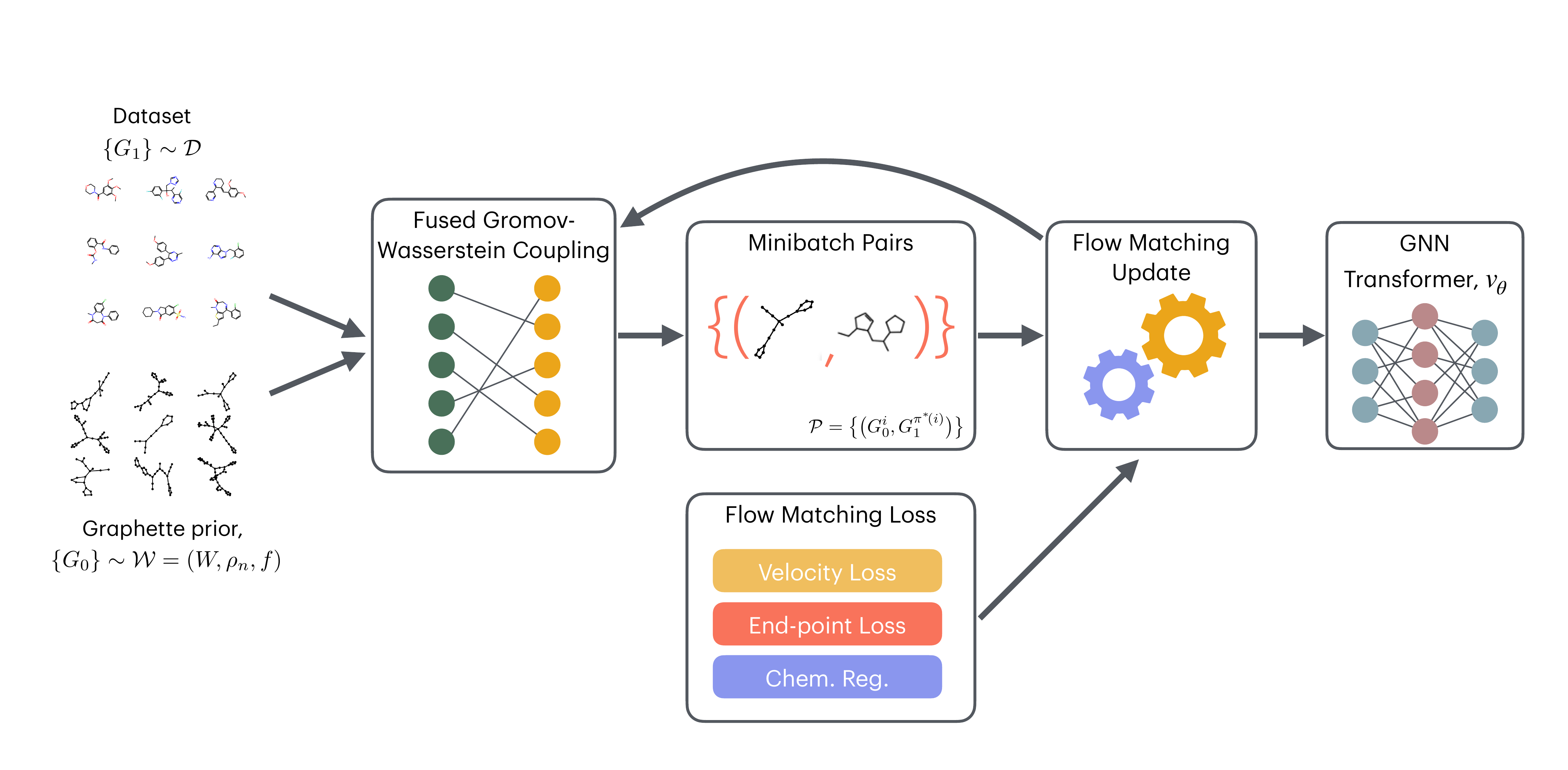}
    \caption{Flowette training scheme. FGW optimal transport with Hungarian matching discussed in Sec. \ref{sec:fgw_fm}, GNN Transformer in App. \ref{sec:gnn_transformer},  Topology-aware loss and regularisation in Sec. \ref{sec:learning_objective} and graphettes in Sec. \ref{sec:Graphettes}.
    See Algorithm \ref{alg:fgw_fm_training} for details.}
    \label{fig:flowette}
    \vspace{-1em}
\end{figure*}

\subsection{Structural priors}

To provide suitable inductive bias for a particular application, we need a flexible class of
probabilistic models to design meaningful priors. This work extends graphons, 
a widely used approach for graph priors \citep{han2022g}.
\begin{definition}\label{def:graphon}(\textbf{Graphon})
A \emph{graphon} is a symmetric, measurable function $W:[0,1]^2\to[0,1]$. Given $n\in\mathbb{N}$, a  graph $G$ with $n$ nodes can be sampled from $W$ by: (i) drawing latent variables $x_1,\dots,x_n \overset{iid}{\sim}\mathcal{U}[0,1]$, and (ii) for each $1\le i<j\le n$, sampling an edge $A_{ij}\sim\mathrm{Bernoulli}(W(x_i,x_j))$ independently, then set $A_{ji}=A_{ij}$. We write $G \sim \mathsf{G}(n,W)$.
\end{definition}

We give an accessible introduction to graphons, sparsified graphons, and graphexes (the structural priors that graphettes generalise) in Appendix~\ref{sec:graphons}.
The limitation of graphons is that graphs sampled from graphons are dense, i.e., the number of edges grow quadratically with the number of nodes. However, many realworld graphs such as social and molecular networks are sparse, i.e., the number of edges grow subquadratically with the nodes.  Sparsified graphons \citep{keriven2020convergence, klopp2017oracle} is an alternative used to model sparse graphs.



\begin{definition}\label{def:sparsifiedGraphon}(\textbf{Sparsified graphon}.) 
Let $W:[0,1]^2 \to [0,1]$ denote a graphon and $\{\rho_i\}_i=(\rho_1, \rho_2,\ldots, \rho_n, \ldots)$ denote a real-valued sequence converging to zero such that $0 \leq \rho_i \leq 1$. Then the graphon $W_n  := \rho_n W$ is called the sparsified graphon. 
We denote a graph $G$ with $n$ nodes  sampled from $W_n$ by  $G \sim \mathsf{G}(n,W, \rho_n)$. 
\end{definition}

\section{Structure Preserving Flows} \label{sec:methods}


We visualise our overall design in \autoref{fig:flowette}.
In this section we address key challenges that arise when learning continuous-time generative dynamics on graphs and clarify how our method departs from the standard flow-matching pipeline. First, naively coupling noise and data graphs (e.g., by minibatch index) ignores structural compatibility and leads to high-variance supervision since the velocity field must interpolate between topologically mismatched source-target pairs. Second, unconstrained velocity fields defined over continuous graph representations can yield invalid structures under finite-step integration, particularly for molecular graphs where chemical constraints must be respected. 
To address these issues, we have three core components: (i) a structure-aware coupling mechanism for supervision (Sec.~\ref{sec:fgw_fm}), which replaces independent or (Euclidean) optimal transport coupling with FGW-distance and Hungarian graph matching; (ii) a permutation-equivariant GNN transformer that parameterizes the velocity field over evolving graph representations (Sec.~\ref{sec:fm_graph} and App. \ref{sec:gnn_transformer}); and (iii) an enhanced training objective (Sec.~\ref{sec:learning_objective}) that augments the flow-matching loss with endpoint consistency and chemistry-aware structural regularization. Together, these design choices yield coherent, permutation-invariant supervision across variable-sized graphs and promote structurally valid generations under finite-step integration.
In this section we use $G$ to refer to attributed graphs, $G = (A, X, F)$
and we present the full training scheme in Algorithm \autoref{alg:fgw_fm_training}.


\subsection{Graph Structure Coupling, $(G_0, G_1)$}
\label{sec:fgw_fm}

For our graphs we extract structural node embeddings $Z = [z_1,\dots,z_n]^\top = f_{\mathrm{enc}}(G) \in \mathbb{R}^{n \times d_z}$, using a pretrained edge-aware Graph Isomorphism Network (GIN) encoder $f_{\mathrm{enc}}$~\cite{xu2018powerful}. The encoder is a lightweight graph autoencoder trained on the training split only; never on validation or test data, and frozen thereafter. It contributes structural embeddings to the FGW cost but does not influence the velocity field, integration, or generated outputs (full details and a sensitivity ablation in Appendix~\ref{sec:sensitivity_analysis_encoder}).
%
%
%


\paragraph{FGW-Distance Computation.} 
Let $G_0 = (A_0, X_0, F_0)$ and $G_1 = (A_1, X_1, F_1)$ denote two undirected attributed graphs with $|V(G_0)| = n_0$ and $|V(G_1)| = n_1$ 
respectively. Let $Z_0 = f_{\mathrm{enc}}(G_0)$ and $Z_1 = f_{\mathrm{enc}}(G_1)$ denote the corresponding encoder-induced structural node embeddings. We define the intra-graph structural cost matrices $C(Z_0) \in \mathbb{R}^{n_0 \times n_0}$ and $C(Z_1) \in \mathbb{R}^{n_1 \times n_1}$ by $C(Z_a)_{ik} = \| z_i^{a} - z_k^{a} \|_2^2$, where $a \in \{0,1\}$. The cross-graph feature cost matrix $M(X_0, X_1) \in \mathbb{R}^{n_0 \times n_1}$ is defined as $M(X_0, X_1)_{ij} = \| x_i^{0} - x_j^{1} \|_2^2$. Using these costs, we can compute a fused optimal transport plan $T^\star \in \mathbb{R}^{n_0\times n_1}$ by solving,
\begin{equation}
\begin{split}
T^\star 
\;&\in\; 
\argmin_{T \in \Pi(p,q)} 
\; \mathrm{FGW}_\alpha(G_0,G_1),
\end{split}
\label{eq:fgw_objective}
\end{equation}
where $\alpha\in[0,1]$ trades off features and structure, 
$\Pi(p,q)=\{T\in\mathbb{R}_{\ge 0}^{n_0\times n_1}:T\mathbf{1}=p,\;T^\top\mathbf{1}=q\}$.
The FGW
objective \cite{titouan2019optimal} we use in Eq.~\eqref{eq:fgw_objective} is 
\begin{equation}
\label{eq:app_fgw}
\mathrm{FGW}_\alpha(G_0,G_1)
:= \min_{T \in \Pi(p,q)}
\bigl(1-\alpha\bigr)\,\langle T, M(X_0,X_1)\rangle
+
\alpha \,\mathcal{L}_{\mathrm{GW}}\!\big(T;\ C(Z_0), C(Z_1)\big),
\end{equation}
where the squared-loss GW term is $
\mathcal{L}_{\mathrm{GW}}\!\big(T;C(Z_0),C(Z_1)\big) =
\sum_{i,k=1}^{n_0}\sum_{j,\ell=1}^{n_1}
\big(C(Z_0)_{ik}-C(Z_1)_{j\ell}\big)^2\,T_{ij}\,T_{k\ell}$.
%
%
We use uniform node measures $p \in \Delta_{n_0}$ and $q \in \Delta_{n_1}$, where $\Delta_{n} \;=\; \left\{ r \in \mathbb{R}^n_{\ge 0} \,\mid\, \mathbf{1}^\top r = 1 \right\}$ denotes the probability simplex on $n$ elements, and $p=n^{-1}_0\mathbf{1}_{n_0}$ for $G_0$ and $q=n^{-1}_1\mathbf{1}_{n_1}$ for $G_1$. 
The resulting FGW distance is the optimal value of Eq.\ \eqref{eq:fgw_objective} and is used as a graph-to-graph discrepancy.


\paragraph{Batch-Level Coupling via Hungarian Matching.}
In each minibatch, we sample a set of $B$ noise graphs $\{G_0^i\}_{i=1}^B \sim \mathcal{W}$ and $B$ target graphs $\{G_1^j\}_{j=1}^B \sim \mathcal{D}$.
We compute a cost matrix $D\in\mathbb{R}^{B\times B}$ with entries
\begin{equation}
D_{ij} = \mathrm{FGW}(G_0^i,G_1^j),
\label{eq:batch_cost}
\end{equation}
then obtain a one-to-one bipartite assignment
\begin{equation}
\pi^\star = \argmin_{\pi\in\mathfrak{S}_B} \sum_{i=1}^{B} D_{i,\pi(i)},
\label{eq:hungarian}
\end{equation}
where $\mathfrak{S}_B$ is the set of permutations \cite{kuhn1955hungarian}. This prevents degenerate training in which many target graphs align to the same noise graph, and yields stable coverage across diverse graph topologies. 

We refer to this Hungarian matching procedure on FGW-distances as \emph{FGW-coupling}. It ensures that $(A_0,X_0,F_0)$ and $(A_1,X_1,F_1)$ correspond in a structure-preserving manner, resulting in lower-variance velocity supervision than naive index-wise or (Euclidean) optimal transport coupling.

Finally, in App.~\ref{sec:fgw_proofs} we show that; FGW-distance is permutation invariant (\autoref{thm:fgw_perm_invariance}), FGW-distance is 0 for isomorphic graphs (\autoref{cor:fgw_zero_isomorphic}), and FGW-coupling recovers the optimal permutation-consistent bipartite assignments.



\subsection{Flow Matching on FGW-Coupled Graphs.}
\label{sec:fm_graph}
For each FGW-coupled graph pair $(G_0,G_1)$, we consider graph representations $(A_0,X_0,F_0)$ and $(A_1,X_1,F_1)$ restricted to a common node budget $n$ under the structure-preserving node correspondence induced by FGW. In particular, the node budget is handled directly in our flow matching scheme by sampling noise graphs from the graphette prior with node counts explicitly matched to their paired target graphs, ensuring $|V_0| = |V_1|$ for every coupled pair. This avoids any size mismatch and removes the need for padding or truncation during interpolation. In this setting, $G_0 \sim \mathcal{W}$ denotes a noise graph sampled from the graphette prior (which will define in the next section), while $G_1 \sim \mathcal{D}$ denotes a target graph sampled from the data distribution. The prior $\mathcal{W}$ is used exclusively to generate the initial graph topology $A_0$. The $X_0$ and $F_0$ are then sampled independently from fixed categorical prior distributions corresponding to atom-type and bond-type priors, respectively. For $t \sim \mathcal{U}[0,1]$, we construct rectified linear interpolants,%
\begin{equation}
A_t = (1-t)A_0 + tA_1, \quad
X_t = (1-t)X_0 + tX_1, \quad
F_t = (1-t)F_0 + tF_1.
\end{equation}
Under this parameterization, the ideal transport velocities are constant and given by $\Delta A = A_1 - A_0$, $\Delta X = X_1 - X_0$, and $\Delta F = F_1 - F_0$. We model the continuous-time transport dynamics using a time-conditioned velocity field
\[
v_\theta : (A_t, X_t, F_t, t) \;\mapsto\; \big(v_A,\; v_X,\; v_F\big),
\]
where $v_A \in \mathbb{R}^{n\times n}$, $v_X \in \mathbb{R}^{n\times d_x}$, and $v_F \in \mathbb{R}^{n\times n\times d_f}$ predict the instantaneous velocities of the adjacency, node features, and edge features, respectively. 
We use a novel \emph{permutation invariant} GNN transformer to implement the velocity field predictor, $v_\theta$. We give details of its architecture in App.~\ref{sec:gnn_transformer} as well as proof of its permutation invariance.
This velocity field predictor is trained via a continuous flow-matching objective, matching $(v_A,v_X,v_F)$ to the rectified displacements $(\Delta A,\Delta X,\Delta F)$ at intermediate times by minimizing a flow matching loss (Eq.~\ref{eq:total_loss}). 
Although the underlying graphs are discrete, all training and integration are performed in continuous space; discreteness is enforced only at generation time via projection (see Appendix \ref{subsec:discretisation}).


\begin{algorithm}[tb]
\caption{Flowette Flow Matching Training}
\label{alg:fgw_fm_training}
\begin{algorithmic}[1]
\small
\INPUT
Dataset $\mathcal{Y}$; pretrained encoder $f_{enc}$; graphette prior $\mathcal{W} = (W, \rho_n, f)$; 
batch size $B$; epochs $E$; weights $(\lambda_x,\lambda_e)$; learning rate $\eta$; FGW tradeoff $\alpha$.

\OUTPUT Velocity Field (GNN Transformer) $v_{\theta}$.\

Initialize $v_{\theta}$ and initialize optimizer $\mathrm{AdamW}(\theta, \eta)$\;

\FOR{$\text{epoch}=1$ to $E$}
  \FORALL{minibatch of target graphs $\mathcal{G}_1=\{G_1^j\}_{j=1}^{B} \sim \mathcal{D}$}
    \STATE Let $n_j \leftarrow |V(G_1^j)|$\;

    \COMMENTS{\scriptsize Sample a batch of noise graphs from a graphette}
    \STATE $\mathcal{G}_0=\{G_0^i\}_{i=1}^{B} \sim \mathcal{W}$\;

    \COMMENTS{\scriptsize Batch FGW-distances (Algorithm~\ref{alg:batch_fgw})}
    \STATE $(D, T) \leftarrow \textsc{BatchFGW}(\mathcal{G}_0, \mathcal{G}_1, f_{enc}, \alpha)$\;

    \COMMENTS {\scriptsize Hungarian matching on FGW costs to get matched graph pairs}
    \STATE $\pi^\star \leftarrow \textsc{HungarianMatching}(D)$ 
    
    \STATE Coupled pairs $\mathcal{P}\leftarrow\{(G_0^i, G_1^{\pi^\star(i)})\}_{i=1}^{B}$\;

    \COMMENTS{\scriptsize Batch gradient update for $v_\theta$ (Algorithm~\ref{alg:fm_update})}
    \STATE $\theta \leftarrow \textsc{FlowMatchingUpdate}(\mathcal{P}, \theta)$
  \ENDFOR
\ENDFOR
\RETURN{$v_{\theta}$}\;
\end{algorithmic}
\end{algorithm}

\subsection{Flow Matching Learning Objective}\label{sec:learning_objective}
We propose a rectified flow-matching objective for graph generation that learns a continuous-time transport from graphette priors to the data distribution while preserving both local and global graph structure. Since locally accurate velocity predictions may fail to integrate into globally coherent graphs under finite-step solvers, an issue exacerbated in graph domains, we couple local velocity matching with endpoint-level structural consistency and domain-specific regularization to ensure stable and structurally valid graph generation. Let $\beta_{\mathrm{end}}, \beta_{\mathrm{val}}, \beta_{\mathrm{atom}} \ge 0$ be scalar weighting coefficients. The overall training objective is as follows where each term is defined below.
\begin{equation}
\label{eq:total_loss}
\mathcal{L}(\theta) = \mathcal{L}_{\mathrm{vel}} + \beta_{\mathrm{end}} \cdot \mathcal{L}_{\mathrm{end}} + \beta_{\mathrm{val}} \cdot \mathcal{L}_{\mathrm{val}} + \beta_{\mathrm{atom}} \cdot \mathcal{L}_{\mathrm{atom}},
\end{equation}

\paragraph{Local Velocity Matching Loss.}
We enforce local flow correctness by matching predicted velocities to the rectified target transport. Let $\lambda_x, \lambda_e \ge 0$ be scalar weighting coefficients:
\begin{equation}\label{eq:app_vel_loss}
\scalebox{0.8}{$
\mathcal{L}_{\mathrm{vel}} =
\|v_A-\Delta A\|_F^2 + \lambda_x\|v_X-\Delta X\|_F^2 + \lambda_e\|v_F-\Delta F\|_F^2
$},
\end{equation}
where $v_A$, $v_X$, and $v_F$ denote the predicted velocity components for $A, X$ and $F$, respectively. The targets $\Delta A=A_1-A_0$, $\Delta X=X_1-X_0$, and $\Delta F=F_1-F_0$ correspond to the rectified transport directions induced by the linear interpolation between the prior graph and the target graph. This loss enforces correct local transport behavior at each intermediate state along the rectified flow path.

\paragraph{Endpoint Consistency Loss.}
While $\mathcal{L}_{\mathrm{vel}}$ constrains the instantaneous transport direction at a randomly sampled intermediate time $t$, accurate graph generation further requires that the learned velocity field induces a globally coherent transport from $t=0$ to $t=1$. We therefore supervise the endpoint implied by the rectified flow parameterization. Given a $A_t, X_t$ and $F_t$ at time $t$, a single-step rectified integration predicts the endpoint
\begin{equation}
\hat{A}_1 = A_t + (1-t)v_A, \quad
\hat{X}_1 = X_t + (1-t)v_X, \quad
\hat{F}_1 = F_t + (1-t)v_F 
\end{equation}
where $\hat{A}_1,\hat{X}_1$ and $\hat{F}_1$ denote the predicted adjacency, node features, and edge features at $t=1$. We penalize deviations from the target graph via
\begin{equation}
\label{eq:loss_end}
\scalebox{0.87}{$
\mathcal{L}_{\mathrm{end}}=\|\hat{A}_1-A_1\|_F^2 + \lambda_x \|\hat{X}_1-X_1\|_F^2 + \lambda_e \|\hat{F}_1-F_1\|_F^2
$}.
\end{equation}
This loss enforces global transport consistency by requiring that local velocity predictions compose into a coherent end-to-end mapping. Such supervision is particularly important under finite-step numerical integration, where small local errors can otherwise accumulate and lead to significant global structural distortions.

\paragraph{Chemistry-aware Regularization.}
To promote chemically valid molecular graphs, we incorporate two lightweight regularization terms applied at the predicted endpoint. First, a \emph{soft valence constraint} ($\mathcal{L}_{\mathrm{val}}$) penalizes violations of atom valence limits by discouraging over-bonding in a fully differentiable manner. Second, an \emph{atom-type marginal matching} ($\mathcal{L}_{\mathrm{atom}}$) term aligns the global distribution of predicted atom types with that of the target graph, preventing drift in node-type composition. Full formulations are provided in Appendix ~\ref{sec:chem_regularisation}.

\paragraph{Theoretical Properties of the Learning Objective}
In \autoref{thm:endpoint_consistency} we show that, under the rectified parameterization, jointly minimizing $\mathcal{L}_{\mathrm{vel}}$ and $\mathcal{L}_{\mathrm{end}}$ to zero forces $v_\theta$ to recover the ideal constant transport field and guarantees exact end-point reconstruction under Euler integration. Then in \autoref{thm:stability_app} we quantify how deviations from the ideal velocity field and finite step size jointly affect the endpoint error under Euler integration. For the computational complexity of these components, see Appendix~\ref{sec:complexity}, and for runtime analysis, see Appendix~\ref{sec:runtime_analysis}.

\section{Structural Bias with Graphette Priors} \label{sec:Graphettes}
\begin{table*}[t]
    \caption{Graphette priors, as used in empirical benchmarks in Section~\ref{sec:results}.}
    \centering
    \small
    \scalebox{0.90}{\begin{tabular}{cl}
        \toprule
         Application  &  Graphette $\mathcal{W} = ( W, \rho_n, f)$  \\
         \midrule
          Community graphs & $f = I$ (GEF \ref{motif:identity}), $\rho_n = 1$, and $W$ is a mixture of stochastic block model graphons \\
          Tree graphs & $f = h$ (GEF \ref{motif:tree}), $\rho_n = (\bar{W} n)^{-1} + \epsilon$ and $W(x,y) = 0.2$ \\
          Egonets & $f = R \circ S$ (GEF \ref{motif:ring3} and \ref{motif:star}), $\rho_n = 1$ and $W(x,y) = \exp\left( -\frac{10}{3}(x + y)\right)$ \\
         Molecular graphs  &  $f = R$ (GEF \ref{motif:ring3}), $\rho_n = (\bar{W} n)^{-1} + \epsilon$, where $\bar{W} = \int_0^1 \int_0^1 W(x,y) dx dy$ and $W(x,y) = 0.2$ \\
          \bottomrule 
    \end{tabular}}
    \label{tab:GraphettePriors}
    \vspace{-1em}
\end{table*}


Inspired by the star function in graphexes \citep{veitch2019sampling} and sparsified graphons we propose a generalized model that includes a graph edit function, which can edit graphs sampled from $W_n$ or $W$. In this section $G = (V,E)$ refers to non-attributed graphs.  
%
 \begin{definition}\label{def:graphette}(\textbf{The graphette})
    We define  $\mathcal{W} = (W, \rho_n, f)$ 
    where $W:[0, 1]^2 \to [0, 1]$ denotes a graphon,  $\{\rho_i\}_i$ denotes a real-valued sequence with $0 \leq  \rho_i \leq 1$ and $f$ denotes a graph edit function. A graph is generated using graphette $\mathcal{W}$ by first sampling $G'$ from $W_n :=  \rho_nW $ and then enacting the function $f$, i.e.,  $G' \sim \mathsf{G}(n, W, \rho_n)$ followed by $G \sim f(G')$.
\end{definition} 
%
%
For the computationally inclined reader, we provide a generative model description of graphettes and graphons in Appendix~\ref{appendix:generativeStories}. Our model has the flexibility to sample dense and sparse graphs. For sparse graphs, the graphette $\mathcal{W}$ is sparsified by $\rho_n \to 0$ (Def.~\ref{def:sparsifiedGraphon}).
As is common in other GNN papers \citep{vignacdigress} when a sample from the sparsified graphon (Def.~\ref{def:sparsifiedGraphon}) returns graphs with multiple components, we choose the largest connected
component as the graph before applying the edits described below.

\subsection{Rich and recurring subgraph motifs}\label{sec:graphEditFunctions}

We can choose a graph edit function $f$ to suit our purpose. If a graphon $W$ can generate the graphs we require, such as stochastic block model graphs, we can let $f$ be the identity function, making no modifications to the graph. If we are modeling social networks, then $f$ can add stars.  Ring addition is useful to model molecular networks as they have ring structures such as benzene. 

Motivated by these problem domains we use four graph edit functions (GEF) in our experiments: \textbf{GEF 1:} $f = I$ the identity function in instances where the graph does not need to be edited, \textbf{GEF 2:} $f = h$, a cycle deletion function when we know the graphs do not have cycles, \textbf{GEF 3:} $f = R$, a ring addition function when the graphs exhibit ring structures, and \textbf{GEF 4:}  $f = S$, a star addition function for graphs with hub structures such as social networks.

\begin{remark}
    Details of graph edit functions along with some characteristics of graphettes are explored in Appendix \ref{Appendix:graphettes}. In particular, graphettes can recover graphon $W$ and the sparsified graphon $W_n$, have the ability to generate graphs similar to those generated by graphexes and generate dense graphs from a family of graphons. Furthermore, the ring function GEF \ref{motif:ring3} can generate dense or sparse graphs depending on $\{\rho_i\}_i$ (see Lemmas in Appendix~\ref{Appendix:graphettes}).
\end{remark}

\subsection{Star and ring functions are well-behaved}\label{sec:wellbehaved}

Graph homomorphims are widely used to count subgraph motifs and have been used in GNNs \citep{jin2024homomorphism}. 

\begin{definition}\label{def:homomorphism} Let $G_1$ and $G_2$ be graphs. A \textbf{graph homomorphism} from $G_1$ to $G_2$ is a map $f : V(G_1) \to V(G_2)$ such that if $uv \in E(G_1)$ then $f(u)f(v) \in E(G_2)$. (Maps edges to edges.) Let $\Hom(G_1, G_2)$ be the set of all such homomorphisms and let $\hom(G_1, G_2) = |\Hom(G_1, G_2)|$ be the number of homomorphisms.
Then \textbf{homomorphism density} is defined as
$$ t(G_1, G_2) = \frac{\hom(G_1, G_2)}{ |V(G_2)|^{|V(G_1)|} } \, . 
$$ 
\end{definition}

\begin{definition}\label{def:triagleCovered} (\textbf{Triangle-covered graphs}.) Let $G = (V, E)$ be a graph. Then $G$ is triangle-covered if for every vertex $u \in V(G)$ there exits vertices $\{v, w\} \in V(G)$ such that edges $\{uv, uw, vw\} \in  E(G)$. We denote the set of triangle-covered graphs by $\mathcal{F}$.
\end{definition}

Roughly stated a graph is triangle-covered if every vertex belongs to a triangle. 
We show that homomorphisms of triangle-covered graphs are preserved by star and ring functions (GEF \ref{motif:ring3}, \ref{motif:star}). 

\begin{restatable}{theorem}{thmtrianglecovered}\label{thm:trianglecovered}
    Let $\mathcal{W} = (W, \rho_n, f)$ denote a graphette and let $G' \sim \mathsf{G}(n, W, \rho_n)$ and $G \sim f(G')$ where $f \in \{R, S\}$ (GEF \ref{motif:ring3}, \ref{motif:star}) with ring size $c > 3$ for $R$. Suppose $|V(G')| = n$ and $|V(G)| = n + m$. Then for any triangle-covered $F \in \mathcal{F}$,  
    $\hom(F, G) =  \hom(F, G')$ and $t(F, G)  = \frac{n^{|V(F)|}}{(n + m)^{|V(F)|}}t(F, G')$. 
\end{restatable}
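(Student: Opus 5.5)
The plan is to show that the edit $f\in\{R,S\}$ adds $m$ new vertices and new edges, but never creates a triangle that uses a new vertex or a new edge. Any homomorphism from a triangle-covered $F$ must then land entirely inside $G'$. I am assuming the ring and star functions work as described in Appendix~\ref{Appendix:graphettes}: $R$ attaches a cycle $C_c$ of length $c>3$ made of new vertices, joined to $G'$ through a single edge (or a single shared vertex), and $S$ attaches new leaf vertices to a hub in $G'$. In both cases $G'$ is an induced subgraph of $G$. I would verify this explicitly from the definitions before using it.

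\textbf{Step 1 (no new triangles).} Every triangle of $G$ has all three vertices and all three edges in $G'$. For $S$, each new vertex is a leaf of degree one, and a vertex of degree one cannot lie on a triangle. For $R$, take any triangle through a new vertex $u$. Its other two vertices are neighbours of $u$, and these are either ring neighbours or the single attachment vertex in $G'$. They must also be adjacent to each other. Two ring neighbours of $u$ are adjacent only if $c=3$, which the hypothesis $c>3$ excludes. A ring neighbour is adjacent to the attachment vertex only when that neighbour is itself the attachment point, which would give a chord. The construction has no such chord, so this case cannot occur either. Since $G'$ is induced in $G$, any triangle with all vertices in $V(G')$ also has all its edges in $E(G')$.

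\textbf{Step 2 (homomorphisms stay in $G'$).} Let $\phi\in\Hom(F,G)$ and $u\in V(F)$. Because $F\in\mathcal{F}$, there are $v,w$ with $uv,uw,vw\in E(F)$. A homomorphism maps edges to edges, so $\phi(u),\phi(v),\phi(w)$ are pairwise adjacent in $G$. They are also pairwise distinct, since $G$ has no loops. Hence they form a triangle, and by Step 1, $\phi(u)\in V(G')$. So $\phi$ maps every vertex into $V(G')$. Every edge of $F$ then maps to an edge of $G$ whose endpoints lie in $V(G')$, and such an edge lies in $E(G')$ because $G'$ is induced. Therefore $\phi\in\Hom(F,G')$. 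Conversely, every homomorphism into $G'$ is a homomorphism into $G$, because $G'\subseteq G$. This gives $\hom(F,G)=\hom(F,G')$.

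\textbf{Step 3 (densities).} By Definition~\ref{def:homomorphism},
\[
t(F,G)=\frac{\hom(F,G')}{(n+m)^{|V(F)|}}=\frac{n^{|V(F)|}}{(n+m)^{|V(F)|}}\,t(F,G').
\]

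The main obstacle is Step 1 for $R$. It depends on exactly how the ring is attached, and in particular on whether ring vertices are new or are rewired existing vertices. If the ring reuses existing vertices, or joins $G'$ by two edges to adjacent vertices, a triangle could be created and the argument fails. I would first pin down the appendix definition and check the attachment case carefully. The hypothesis $c>3$ suggests the intended construction is a fresh cycle with a single attachment point.
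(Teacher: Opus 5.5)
Your proof is correct and follows the same route as the paper's. The shared argument is: stars add only degree-one vertices and rings of length $c>3$ create no triangles, so $\phi(u)\in V(G')$ for every vertex $u$ of a triangle-covered $F$; this gives $\Hom(F,G)=\Hom(F,G')$ together with the trivial inclusion from $G'\subseteq G$, and the density identity follows by rescaling. Your explicit use of the fact that $G'$ is an induced subgraph of $G$ also fills a step the paper passes over when it concludes $h\in\Hom(F,G')$ from the triangles alone, since an edge of $F$ need not lie on a triangle.
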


Theorems \ref{thm:staraddition} and \ref{thm:ringaddition} supplement the above result by computing the effect of star and ring functions on vertex and edge homomorphisms and their densities. 
\section{Numerical Experiments} \label{sec:results}
In this section, we evaluate the proposed flow matching framework on  synthetic graph generation benchmarks and real-world molecular graph generation tasks. The synthetic benchmarks are designed to probe structural fidelity under controlled graph families, while the molecular benchmarks assess chemical validity, diversity, and novelty in realistic settings
It is important to select an appropriate graphette prior that matches the graphs being modeled. Table \ref{tab:GraphettePriors} lists the graphette priors used for each application.

\subsection{Experiments on Synthetic Graphs}\label{sec:resultsSynthetic}
We evaluate Flowette on three standard synthetic graph generation benchmarks: Tree \cite{bergmeisterefficient}, Stochastic Block Model (SBM) \cite{martinkus2022spectre}, and Ego-small graphs \cite{jo2022score}. We adopt the same training, validation, and test splits as in \cite{qinmadeira2024defog}. Generation quality is assessed using seven different metrics, described in more detail in Appendix~\ref{sec:metrics-syn}. We exclude chemistry-specific regularization terms for these synthetic data, by setting $\beta_\mathrm{val} = 0$ and $\beta_\mathrm{atom}=0$. We used $L=3$ attention layers for Tree and Ego-small datasets, and $L=4$ for the SBM dataset. Other hyperparameter settings are provided in Appendix~\ref{sec:hyperparam-syn}.

\begin{figure*}[t]
    \centering
    \includegraphics[width=1\linewidth]{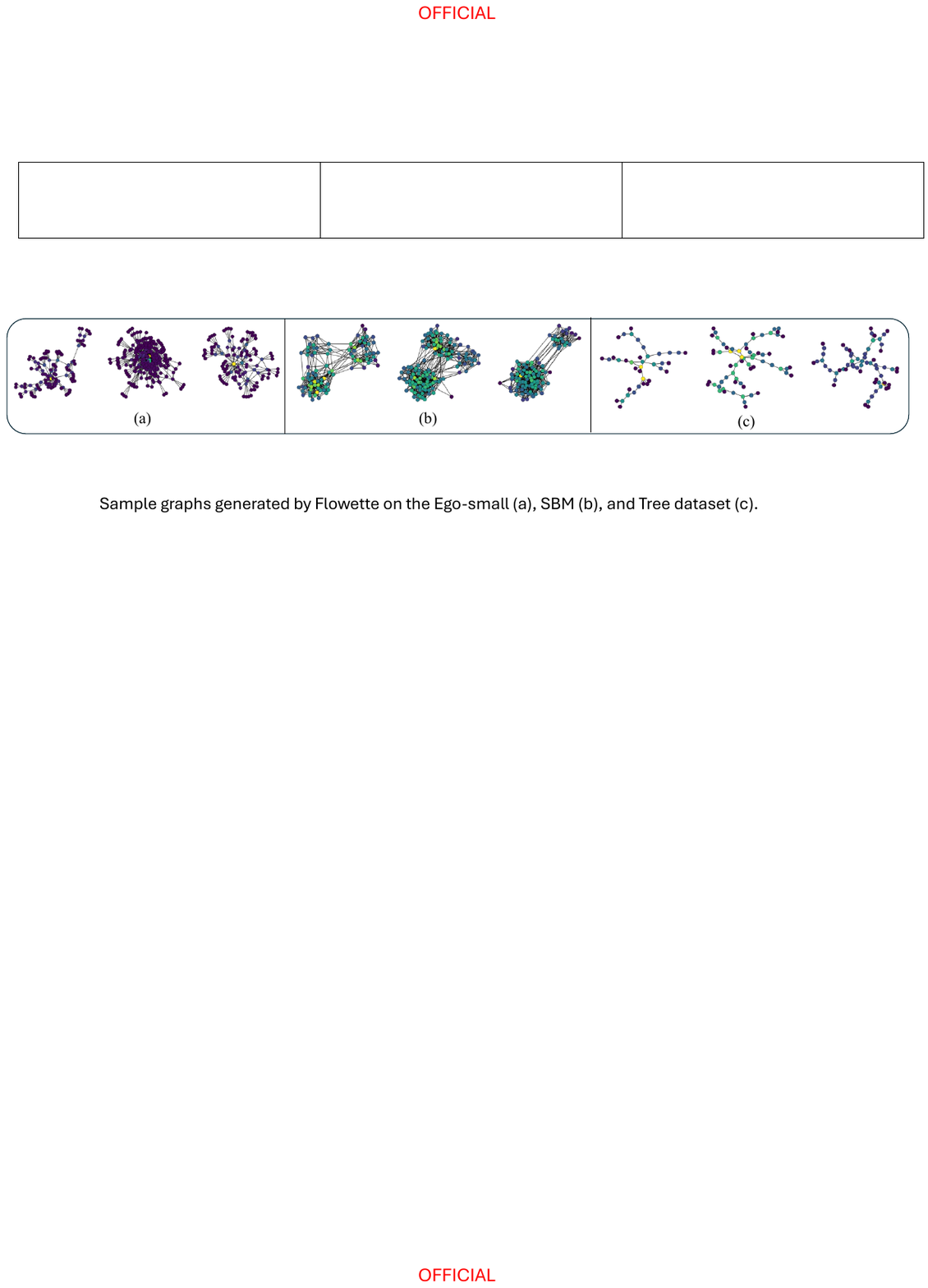}
    \vspace{-1.8em}
    \caption{Flowette-generated synthetic graphs: Ego-small (a), SBM (b), and Tree (c).}
    \label{fig:synthetic_samples}
\end{figure*}




We summarise results on synthetic datasets in Tables~\ref{Tab:ablation_sbm}--\ref{Tab:ego_small}. Across SBM, Tree, and Ego-small, Flowette achieves the strongest performance on higher-order structural metrics (e.g., Orbit) and generative quality measures (e.g., Validity, Uniqueness, Novelty, and V.U.N.), while remaining competitive on Degree and Clustering. This reflects the model's ability to capture global structure and motif-level patterns (e.g., community structure in SBM, acyclicity in Tree, and hub-centric motifs in Ego-small), with minor deviations in lower-order statistics (e.g., Deg. and Clus.) that do not affect overall graph fidelity. Tables of results against baseline models are shown in Appendix~\ref{sec:moreresults-syn}. Figure~\ref{fig:synthetic_samples} presents representative synthetic graphs generated by Flowette on the Ego-small, SBM, and Tree datasets. Additional samples in Appendix~\ref{sec:generated-graphs} further demonstrate the validity of the generative model.


\subsection{Experiments on Molecular Graphs}
\label{sec:expt-molecules}

We further evaluate our framework on four standard molecular graph generation benchmarks: QM9 \cite{wu2018moleculenet}, ZINC250K \cite{sterling2015zinc}, Guacamol \cite{brown2019guacamol}, and MOSES \cite{polykovskiy2020molecular}, which together span small molecules and larger, drug-like compounds. Experiments on QM9 use the conventional dataset splits and evaluation criteria employed in recent molecular generation work \cite{vignacdigress, qinmadeira2024defog}. For MOSES and Guacamol, we follow the respective benchmark protocols for training and assessment \cite{polykovskiy2020molecular, brown2019guacamol}. Details of the metrics are presented in Appendix~\ref{sec:metrics-molecules}.
As can be seen in Table~\ref{Tab:qm9_zinc} and~\ref{Tab:guacamol_moses}, our approach achieves competitive performance across all metrics, with state-of-the-art performance on several metrics while remaining comparable to strong baselines on others. Figure ~\ref{fig:molecular_samples} shows representative molecular graphs generated by Flowette on QM9, ZINC250K, MOSES, and Guacamol datasets. 


We used $L=4$ attention layers for Guacamol and MOSES, and $L=5$ for QM9 and ZINC250K. Further details of our experimental protocol is provided in Appendix~\ref{sec:hyperparam-molecules}. We compare our approach against seven baseline models on the Guacamol and MOSES benchmarks: Digress \cite{vignacdigress}, DisCo \cite{xu2024discrete}, Cometh \cite{siraudincometh}, GraphBFN \cite{song2025smooth}, G2PT \cite{chen2025graph}, GEEL \cite{jangsimple}, and DeFoG \cite{qinmadeira2024defog}. We further evaluate our method on the QM9 and ZINC250K datasets using ten comparable baselines: EDP-GNN \cite{niu2020permutation}, GraphARM \cite{kong2023autoregressive}, DiGress \cite{vignacdigress}, GDSS \cite{jo2022score}, GDSS + TF \cite{jo2022score}, GruM \cite{jograph}, SID \cite{bogetsimple}, DeFoG \cite{qinmadeira2024defog}, GraphBFN \cite{song2025smooth}, and GBD \cite{liuadvancing}.

\begin{table*}[t]
   \caption{Generation performance on QM9 and ZINC250K molecular graphs.}
   \label{Tab:qm9_zinc}
   \centering
   \footnotesize
   \scalebox{0.88}{\begin{tabular}{ c l c c c c c c c c c | c c c c c c c c } 
     \toprule
     \multicolumn{2}{c}{} & \multicolumn{3}{c}{QM9}  &\multicolumn{3}{c}{ZINC250K} \\\cmidrule(lr){3-5} \cmidrule(lr){6-8}
     \multicolumn{2}{c}{Model} & Valid $\uparrow$ & Unique $\uparrow$ & NSPDK $\downarrow$ & Valid $\uparrow$ & Unique $\uparrow$ & NSPDK $\downarrow$ \\
     \midrule
     \multirow{7}{*}{} 
     & EDP-GNN \cite{niu2020permutation} & 47.52 & 99.25 & 0.0046 & 82.97 & 99.79 & 0.0485 \\
     & GraphARM \cite{kong2023autoregressive} & 90.25 & 95.65 & 0.002 & 88.23 & 99.46 & 0.055 \\
     & DiGress \cite{vignacdigress} & 98.19 & 96.20 & 0.0003 & 94.99 & 99.97 & 0.0021 \\
     & GDSS \cite{jo2022score} & 95.72 & 98.46 & 0.0033 & 97.01 & 99.64 & 0.0195 \\
     & GDSS + TF \cite{jo2022score} & 99.68 & - & 0.0024 & 96.04 & - &  0.0326 \\
     & GruM \cite{jograph} & 99.69 & 96.90 & 0.0002 & 98.65 & 99.97 & 0.0015 \\
     & GraphBFN \cite{song2025smooth} & 99.73 & - & 0.0002 & 99.22 & - & 0.0013 \\
     & SID \cite{bogetsimple} & 99.67 & 95.66 & \textbf{0.0001} & 99.50 & 99.84 & 0.0021 \\
     & DeFoG \cite{qinmadeira2024defog} & 99.30 & 96.30 & - & 99.22 & 99.99 & 0.0008 \\
     & GBD \cite{liuadvancing} & \textbf{99.88} & - & 0.0002 & 97.87 & - & 0.0018 \\
     & Flowette (Ours) & 99.81 \tiny $\pm$ 0.09 & \textbf{99.30} \tiny $\pm$ 0.05 & 0.0003 \tiny $\pm$ 0.002 & \textbf{99.90} \tiny $\pm$ 0.10 & \textbf{100.00} \tiny $\pm$ 0.0 & \textbf{0.0006} \tiny $\pm$ 0.004 \\
     \bottomrule
   \end{tabular}}
\end{table*}
\begin{table*}[t]
   \caption{Generation performance on Guacamol and MOSES molecular graphs.}
   \label{Tab:guacamol_moses}
   \centering
   \footnotesize
   \begin{tabular}{ c l c c c c c c c c c | c c c c c c c c } 
     \toprule
     \multicolumn{2}{c}{} & \multicolumn{3}{c}{Guacamol}  &\multicolumn{4}{c}{MOSES} \\\cmidrule(lr){3-5} \cmidrule(lr){6-9}
     \multicolumn{2}{c}{Model} & Valid $\uparrow$ & V.U.N $\uparrow$ & KL div. $\uparrow$ & Unique $\uparrow$ & Novelty $\uparrow$ & SNN $\uparrow$ & Scaff. $\uparrow$ \\
     \midrule
     \multirow{7}{*}{} 
     & DiGress \cite{vignacdigress} & 85.2 & 85.1 & 92.9 & \textbf{100.0} & 95.0 & 0.52 & 14.8 \\
     & DisCo \cite{xu2024discrete} & 86.6 & 86.5 & 92.6 & \textbf{100.0} & 97.7 & 0.50 & 15.1 \\
     & Cometh \cite{siraudincometh} & 98.9 & 97.6 & 96.7 & 99.9 & 92.6 & 0.54 & \textbf{16.0} \\
     & GraphBFN \cite{song2025smooth} & - & - & - & 99.8 & 89.0 & \textbf{0.59} & 10.0 \\
     & G2PT \cite{chen2025graph} & 95.3 & 94.8 & 95.6 & \textbf{100.0} & 79.4 & 0.55 & 2.9 \\
     & GEEL \cite{jangsimple} & 88.2 & 77.1 & 93.1 & \textbf{100.0} & 81.1 & 0.52 & 3.6 \\
     & DeFoG \cite{qinmadeira2024defog} & \textbf{99.0} & 97.9 & 97.7 & 99.9 & 92.1 & 0.55 & 14.4 \\
     & Flowette (Ours)  & 98.6 & \textbf{98.0} & \textbf{97.9} & 99.9 & \textbf{98.10} & 0.58 & 15.3 \\
     \bottomrule
   \end{tabular}
   \vspace{-1em}
\end{table*}

\begin{figure*}[t]
    \centering
    \includegraphics[width=1\linewidth]{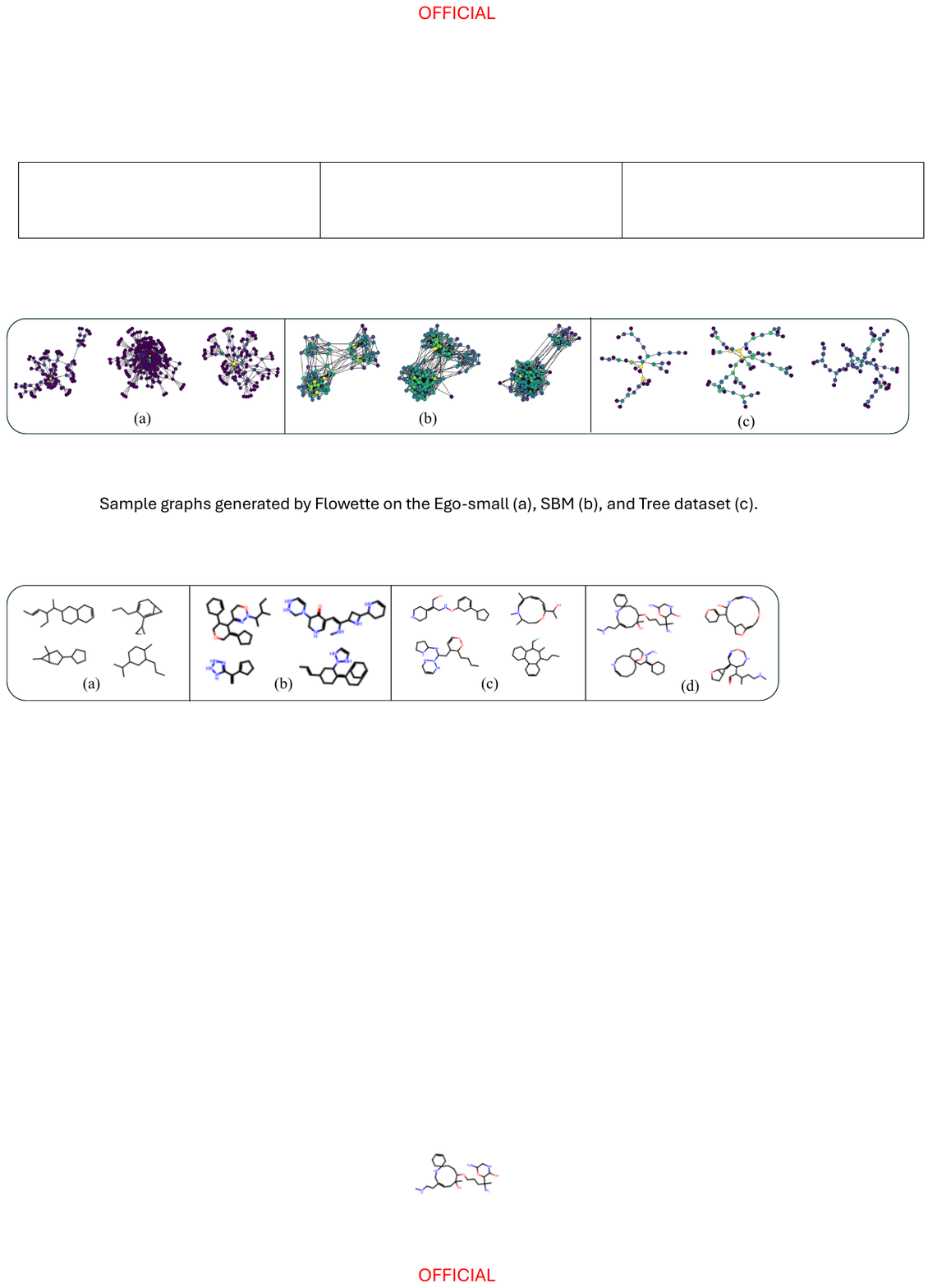}
    \vspace{-1.8em}
    \caption{Flowette-generated molecules: QM9 (a), ZINC250K (b), MOSES (c) and Guacamol (d).}
    \label{fig:molecular_samples}
    \vspace{-0.5em}
\end{figure*}



\paragraph{Ablation Analysis of Regularization Terms.} 

Table~\ref{tab:ablation_qm9_zinc} reports an ablation study of the individual components of the Flowette training objective in Eq.~\eqref{eq:total_loss} on QM9 and ZINC250K. Overall, the results show that each term plays a distinct and complementary role, and that strong performance emerges only when all components are jointly enforced. Endpoint consistency ($\mathcal{L}_\mathrm{end}$) and valence regularization ($\mathcal{L}_\mathrm{val}$) are the most critical for stable and chemically valid generation, while atom-type marginal matching ($\mathcal{L}_\mathrm{atom}$) primarily improves uniqueness and novelty. In particular, removing $\mathcal{L}_\mathrm{end}$ or $\mathcal{L}_\mathrm{val}$ leads to substantial drops in validity, whereas removing $\mathcal{L}_\mathrm{atom}$ has a milder effect on validity but consistently reduces uniqueness and novelty. These trends highlight the complementary roles of global consistency and chemistry-aware constraints. More details and other ablations are in App.~\ref{sec:ablation-moreresults}.

\paragraph{Observed limitations}\label{para:limitations}

While Flowette shows strong performance, several aspects offer avenues for further improvement: (i) FGW coupling incurs $\mathcal{O}(B^2 n^3)$ precomputation cost—amortised offline (Table~\ref{tab:runtime}), which may limit scalability and motivates more efficient couplings; (ii) graphette specification relies on coarse domain knowledge (e.g., rings, hubs, trees), and automating this is promising direction; (iii) the chemistry-aware regularisers $\mathcal{L}_\mathrm{val}$ and $\mathcal{L}_\mathrm{atom}$ are domain-specific; extending them could broaden applicability, noting strong synthetic performance without them ($100\%$ V.U.N.\ on Tree and SBM); (iv) the continuous relaxation requires a discrete projection step at generation time, which may introduce minor artefacts and motivates tighter integration.

\section{Conclusion} \label{sec:conclusion}

We propose Flowette, a generative model for graphs with recurring sub-graph motifs, built around three components: (1) structural alignment of supervision pairs via fused Gromov-Wasserstein optimal transport; (2) a training objective that regularises towards global trajectory coherence, with optional chemistry-aware terms for molecule generation; (3) source distributions that encode structural priors via \emph{graphettes}, a new mathematical object that extends graphons through controlled sub-graph edits (ring or star injection, cycle removal). The velocity field is learned by a GNN-based transformer. Flowette achieves competitive or state-of-the-art performance on several metrics across multiple synthetic and molecular benchmarks, and our ablations confirm that each of the three components contributes independently to performance. We hope each of these components; structural alignment, coherence regularisation, and graphette priors, will be useful in their own right for future work in graph generation.

\bibliography{references}
\bibliographystyle{abbrv}

\newpage
\onecolumn
\appendix

\section{Background on Graphons}\label{sec:graphons}

Graphons are symmetric, measurable functions defined on a unit square denoted by $W:[0,1]^2 \to [0,1]$.  Graphons are graph limits, i.e., by mapping the adjacency matrix to the unit square and taking the limit as the number of nodes go to infinity, we can obtain a graphon for converging graph sequences. When the adjacency matrix is mapped to the unit square, we obtain the empirical graphon, defined below.  

\subsection{Graphons are graph limits}
\begin{definition}\label{def:empiricalgraphon}
    Given a graph $G$ with $n$ vertices labeled $\{1, \ldots, n\}$, we define its \textbf{empirical graphon} $W_G: [0, 1]^2 \rightarrow [0, 1]$ as follows: We split the interval $[0, 1]$ into $n$ equal intervals $\{I_1, I_2, \ldots, I_n \}$ (first one closed, all others half open) and for $x \in I_i, y \in I_j$ define
    $$ W_G(x,y) = \begin{cases}
        1 & \, \text{if} \quad  ij \in E(G) \, \\
        0 & \, \text{otherwise} \, ,
    \end{cases}
    $$
    where $E(G)$ denotes the edges of $G$. The empirical graphon replaces the the adjacency matrix with a unit square and the $(i,j)$th entry of the adjacency matrix is replaced with a square of size $(1/n) \times (1/n)$. 
\end{definition}

Figure \ref{fig:graphConvergence} shows the empirical graphons for 3 graphs generated from the Erdős–Rényi $G(n,p)$ model where $G$ has $n$ vertices for $n \in \{50, 100, 500 \}$ with edge probability $p = 0.2$. This graph sequence converges to $W = 0.2$. 

\begin{figure}[!ht]
    \centering
    \includegraphics[width=0.9\linewidth]{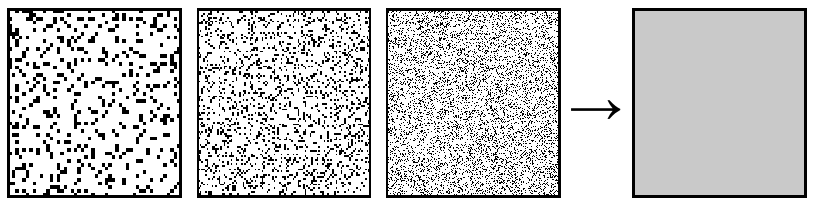}
    \caption{Three $G(n,p)$ graphs for $n \in \{50, 100, 500 \}$ with edge probability $p = 0.2$ converging to $W = 0.2$.}
    \label{fig:graphConvergence}
\end{figure}

Graph convergence is defined in terms of graph homomorphism density and the cut-metric \cite{borgs2008convergent}. Both these notions are equivalent \cite{borgs2011limits}. We define the cut norm and the cut metric below. 

\begin{definition}\label{def:cut1}The \textbf{cut norm} of graphon $W$   \citep{frieze1999quick, borgs2008convergent} is defined as
$$ \lVert W \rVert_\square = \sup_{S, T} \left\vert\int_{S\times T} W(x,y) \, dx dy \right\vert  \, , 
$$
where the supremum is taken over all measurable sets $S$ and $T$ of $[0 ,1]$.  \end{definition}

\begin{definition}\label{def:cut2} Given two graphons $W_1$ and $W_2$ the \textbf{cut metric}  \citep{borgs2008convergent} is defined as 
$$ \delta_{\square}(W_1, W_2) = \inf_\varphi \left\lVert W_1 - W_2^\varphi \right\rVert_\square \, , 
$$
where the infimum is taken over all measure preserving bijections $\varphi:[0,1] \rightarrow [0,1]$.  
\end{definition}

The role of $\varphi$ is to account for graph isomorphisms. Two isomorphic graphs $G$ and $H$ may have different node labels making $\left\lVert W_G - W_H \right\rVert_\square$ non-zero. However, for these two graphs $\inf_\varphi \left\lVert W_G - W_H^\varphi \right\rVert_\square $ is zero as $\varphi$ accounts for bijections.  

As shown in \cite{borgs2011limits} a graph sequence $\{G_n\}_n$ is convergent if and only if it is convergent in the cut metric. 

\begin{theorem}[Graph convergence \cite{borgs2011limits}]\label{thm:BCconvergent2} 
A sequence of graphs $\{G_n\}_n$ is convergent if and only if it is Cauchy in the $\delta_\square$ distance. The sequence  $\{G_n\}_n$ converges to $W$ if and only if $\delta_\square(W_{G_n}, W) \to 0$. Furthermore, if this is the case, and $|V(G_n)| \to \infty$, then there is a way to label the nodes of the graphs $G_n$ such that $\lVert W_{G_n}  - W \rVert_\square \to 0$. 
\end{theorem}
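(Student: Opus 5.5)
We follow the route of \cite{borgs2008convergent,borgs2011limits}, treating the space of graphons modulo $\delta_\square$-zero as a metric space and showing that homomorphism densities are continuous in $\delta_\square$ and jointly determine a point of it. Recall that $\{G_n\}_n$ is called convergent when $t(F,G_n)$ converges for every finite simple graph $F$, and that $t(F,G)=t(F,W_G)$ with $t(F,W)=\int\prod_{ij\in E(F)}W(x_i,x_j)\,dx$.

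\emph{Step 1 (counting lemma).} We first show that for any two graphons,
$$|t(F,U)-t(F,W)|\le |E(F)|\,\delta_\square(U,W).$$
Write the product difference as a telescoping sum over the edges of $F$, replacing one factor $U$ by $W$ at a time. Each summand has the form
$$\int (U-W)(x_i,x_j)\prod(\text{other factors})\,dx.$$
Fixing the variables other than $x_i,x_j$, the remaining factors split into a function of $x_i$ and a function of $x_j$, both with values in $[0,1]$. The cut norm controls such bilinear integrals, since the supremum over $[0,1]$-valued functions equals the supremum over indicators of sets. This gives the bound with $\|U-W\|_\square$. Since $t(F,\cdot)$ is invariant under measure preserving relabelings $\varphi$, taking the infimum over $\varphi$ gives the bound with $\delta_\square$. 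Consequently a $\delta_\square$-Cauchy sequence has Cauchy, hence convergent, densities; and $\delta_\square(W_{G_n},W)\to0$ implies $t(F,G_n)\to t(F,W)$.

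\emph{Step 2 (compactness and uniqueness).} For the converse directions we need two facts.

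\textbf{Compactness:} the space of graphons modulo $\delta_\square$-zero is compact. We would prove this via the Frieze--Kannan weak regularity lemma. Every $W$ is within $2/\sqrt{\log k}$ in cut norm of a step function with $k$ steps. Take nested step approximations of every $W_{G_n}$, use a diagonal argument to extract subsequences in which the finitely many step values converge, and obtain limit step functions $U_k$. These $U_k$ form a martingale, so by martingale convergence they tend in $L^1$ to a graphon $U$. The subsequence then converges to $U$ in $\delta_\square$.

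\textbf{Uniqueness:} if $t(F,U)=t(F,W)$ for all $F$, then $\delta_\square(U,W)=0$. We would prove this by sampling. The random graphs $\mathsf{G}(k,U)$ and $\mathsf{G}(k,W)$ have the same distribution, since their law is determined by the densities through inclusion--exclusion. The sampling lemma says $\delta_\square(W_{\mathsf{G}(k,W)},W)\to0$ in probability. Coupling the two samples to be equal and applying the triangle inequality gives $\delta_\square(U,W)=0$.

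\emph{Step 3 (combining).} Suppose the densities $t(F,G_n)$ converge. By compactness every subsequence has a further subsequence converging in $\delta_\square$. By Step 1 any such limit has the limiting densities, and by uniqueness all these limits coincide. Hence the whole sequence converges in $\delta_\square$ to a single $W$, and is in particular Cauchy. Conversely, a Cauchy sequence has convergent densities by Step 1, and then $W$ is identified as above. This establishes both "if and only if" statements. We expect the sampling/uniqueness step to be the main obstacle, because it requires concentration of cut norms of random samples, and we would assume it from \cite{borgs2008convergent}.

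\emph{Step 4 (labelling).} It remains to replace the infimum over measure preserving $\varphi$ by a relabelling of the vertices of $G_n$. Our plan is to first approximate $W$ in $L^1$ by a step function $U$ with $k$ equal-measure steps. A near-optimal $\varphi$ for $\delta_\square(W_{G_n},U)$ can be taken as a fractional overlay of the $n$ vertex intervals with the $k$ steps. Once $n\gg k$, this overlay can be rounded to an assignment of whole vertices to steps; the rounding error is at most $O(k/n)$ in cut norm. Such an integral assignment is exactly a relabelling of $V(G_n)$. Letting $k\to\infty$ slowly as $n\to\infty$ then yields $\|W_{G_n}-W\|_\square\to0$ for the resulting labelling.
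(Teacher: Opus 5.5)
The paper gives no proof of this theorem; it quotes it from Borgs et al., so the comparison is with the standard argument. Your Steps 1--3 are that argument: the counting lemma, Lov\'asz--Szegedy compactness via weak regularity and martingale convergence, and uniqueness via the law of $\mathsf{G}(k,\cdot)$ and the sampling lemma. They are fine as a sketch.

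\textbf{The gap is in Step 4.} You claim that a near-optimal fractional overlay $X$ of the $n$ vertex intervals with the $k$ steps of $U$ can be rounded to an assignment of whole vertices at a cost of $O(k/n)$ in cut norm. This is not justified.

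\emph{Why the claim is unsupported.} Such a cheap rounding would follow if $X$ were a vertex of the transportation polytope $\{X\ge 0:\ \sum_a X_{ia}=1/n,\ \sum_i X_{ia}=1/k\}$, because a vertex splits at most $k-1$ of the $n$ vertices. But the overlay cut distance $X\mapsto d_\square(G_n,U,X)$ is a maximum of absolute values of quadratic forms in $X$, not a linear function. So nothing forces a minimizer to be a vertex. The $\varphi$ you start from is arbitrary, $\varphi^{-1}(I_a)$ need not be close to a union of vertex intervals, and the overlay may split a constant fraction of the vertices. Moving the mass of a split vertex from step $a$ to step $b$ changes the overlaid difference by $U_{a\cdot}-U_{b\cdot}$ on a set of measure comparable to that mass. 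The only a priori bound on the rounding error is therefore the total split mass, which can be of order one. Showing that near-optimal overlays are nearly integral is exactly the comparison between $\delta_\square$ and its relabelling version $\hat\delta_\square$ (the minimum over vertex bijections of the cut norm of the difference of the labelled empirical graphons). That is a separate, nontrivial theorem in Borgs et al., and it does not come from rounding alone.

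\emph{The missing idea} is to regularize $G_n$ before overlaying, so that only a bounded number of classes of whole, interchangeable vertices has to be split.
Take $U$ to be the average of $W$ over the grid of $k$ equal intervals $I_1,\dots,I_k$. Then $\lVert W-U\rVert_1\to 0$ as $k\to\infty$, and vertex blocks can be laid into the steps.
Take a Frieze--Kannan partition $\mathcal P=\{P_1,\dots,P_{k'}\}$ of $V(G_n)$ with $\lVert W_{G_n}-(W_{G_n})_{\mathcal P}\rVert_\square\le 2/\sqrt{\log k'}$, and let $Q_c$ be the union of the vertex intervals of $P_c$.
For step functions, replacing $\varphi$ by its overlay matrix $Y_{ca}=\lambda(Q_c\cap\varphi^{-1}(I_a))$ does not increase the cut norm. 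Hence some $k'\times k$ overlay $Y$ achieves at most $\delta_\square((W_{G_n})_{\mathcal P},U)$.
Round $nY$ to an integer matrix $N$ with $|N_{ca}-nY_{ca}|<1$, row sums $|P_c|$, and column sums within $1$ of $n/k$. Then place $N_{ca}$ arbitrary vertices of $P_c$ in the block of positions corresponding to $I_a$.
This is a genuine relabelling $\sigma$. Since $(W_{G_n})_{\mathcal P}$ is constant on classes, it does not matter which vertices of $P_c$ are chosen: the relabelled step function realizes the overlay $N/n$ with a slightly shifted copy of $U$.
Let $W^{\sigma}_{G_n}$ be the empirical graphon of $G_n$ under $\sigma$. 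Tracking the rounding and the misalignment of block boundaries gives
\[
\lVert W^{\sigma}_{G_n}-W\rVert_\square\le\delta_\square(W_{G_n},W)+2\lVert W-U\rVert_1+\frac{4}{\sqrt{\log k'}}+O\Bigl(\frac{k(k+k')}{n}\Bigr).
\]
This tends to $0$ if $k,k'\to\infty$ slowly enough.

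Alternatively, you can cite the comparison $\hat\delta_\square(G,G')\le 32\,\delta_\square(G,G')^{1/67}$ for graphs on the same number of nodes. Then compare $G_n$ with a sample of $\mathsf{G}(n,W)$ whose vertices are ordered by their latent values.
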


\subsection{Graphons are graph generative models}

Graphons can be used to generate new graphs. Given a graphon $W$ we can sample a graph with $n$ nodes as follows:

\begin{definition}\label{def:wrandomgraphs}Uniformly pick $x_1, x_2, \ldots, x_n$ from $[0,1]$. A \textbf{W-random graph} $\mathsf{G}(n,W)$ has the vertex set $1, 2, \ldots, n$ and vertices $i$ and $j$ are connected with probability $W(x_i, x_j)$. That is, for each node pair $i$ and $j$ we add an edge $A_{ij} \sim \text{Bernoulli}(W(x_i, x_j))$.
\end{definition}

Figure \ref{fig:samplingGraphsFromW} shows two graphons and graphs generated from them for different values of $n$.  The graphon on the top row generates ring type graphs and the graphon on the bottom row generates stochastic block model type graphs.

\begin{figure}[!ht]
    \centering
    \includegraphics[width=0.9\linewidth]{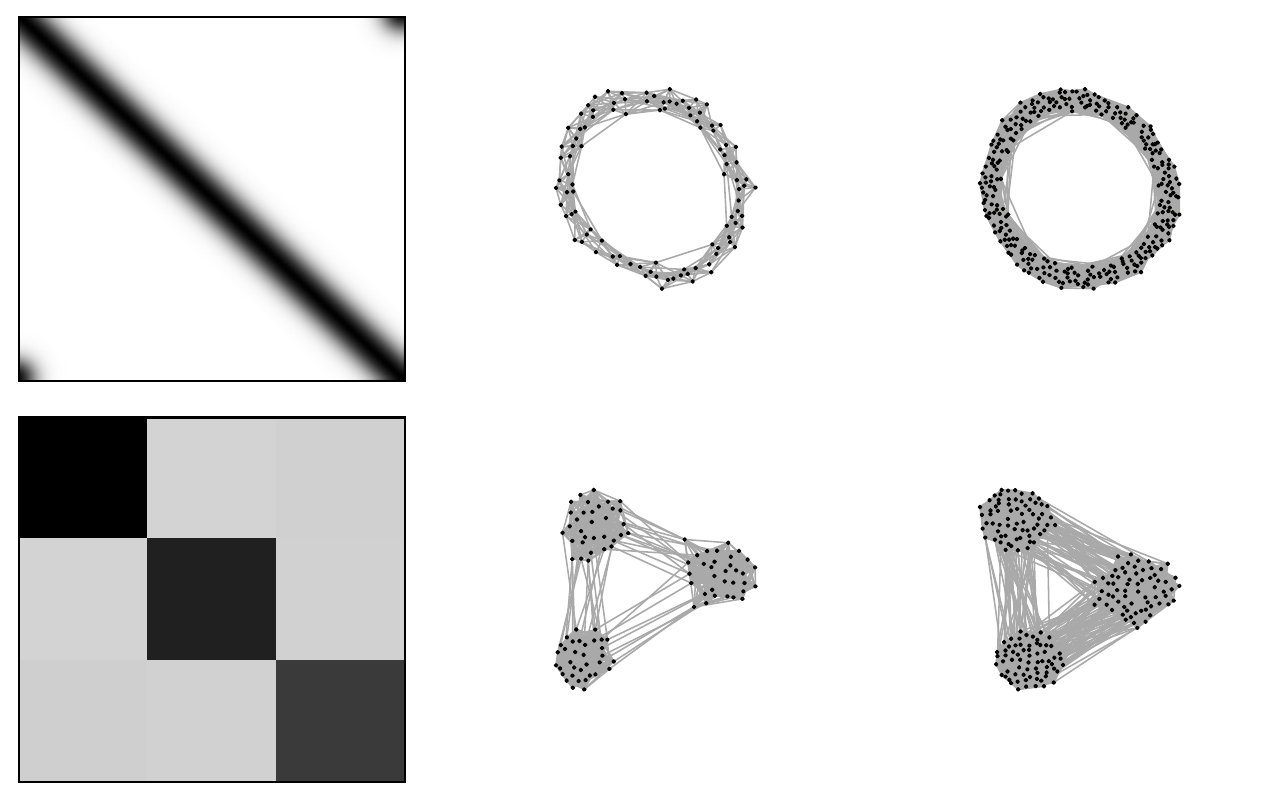}
    \caption{Two graphons on the left and W-random graphs $\mathsf{G}(n,W)$ for different $n$ on the right.}
    \label{fig:samplingGraphsFromW}
\end{figure}

Next we introduce dense and sparse graph sequences. 

\begin{definition}[\bf Dense graph sequences]\label{def:dense}
    A sequence of graphs $\{G_n\}_n$ is dense if the number of edges $m$ grow quadratically with the number of nodes $n$, i.e.,  $m \in \Theta(n^2)$.  
\end{definition}
\begin{definition}[\bf Sparse graph sequences]\label{def:sparse}
     A sequence of graphs $\{G_n\}_n$ is sparse if the number of edges $m$ grow sub-quadratically with the number of nodes $n$, i.e., $m \in o(n^2)$.
\end{definition}

The limitation of graphons is that as a consequence of the Aldous-Hoover theorem \cite{aldous1981representations}, graphs generated from graphons are dense or empty. However, most real-world graphs such as social networks and molecular graphs are sparse. As such, many graphon-based extensions are proposed to mitigate this limitation. 

\subsection{Graphon extensions for sparse graphs}
We discuss two sparse graph graphon extensions: graphon sparsification \citep{klopp2017oracle}, and graphexes \cite{veitch2015class}.  

\begin{definition}\label{def:sparsifiedGraphon}(\textbf{Sparsified graphon}.) 
Let $W:[0,1]^2 \to [0,1]$ denote a graphon and $\{\rho_i\}_i=(\rho_1, \rho_2,\ldots, \rho_n, \ldots)$ denote a real-valued sequence converging to zero such that $0 \leq \rho_i \leq 1$. Then the graphon $W_n  := \rho_n W$ is called the sparsified graphon. 
We denote a graph $G$ with $n$ nodes  sampled from $W_n$ by  $G \sim \mathsf{G}(n,W, \rho_n)$. 
\end{definition}

Note that $\rho_n W \to 0$ with $n$. As such, if $G_n$ is sampled from $\rho_nW $ then the sequence $\{G_n\}_n$ is sparse. Figure \ref{fig:sparsifiedGraphon} shows a stochastic block model graphon $W$ and two sparsified versions $\rho_nW$ for $\rho_n \in \{0.1, 0.05 \}$ along with graphs sampled from these three graphons for $n \in \{100, 200, 300\}$. We see the block structure is diluted for smaller values of $\rho_n$ both in the graphon and in the sampled graph. In addition, the number of isolated nodes increase as $\rho_n$ decreases. 

\begin{figure}[t]
    \centering
    \includegraphics[width=0.9\linewidth]{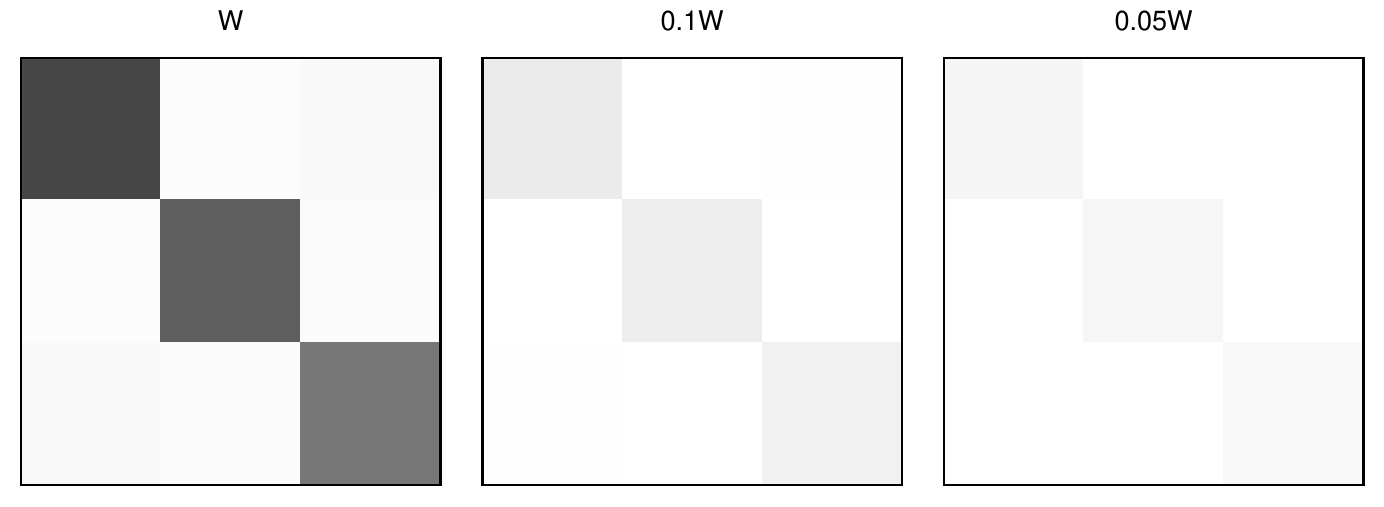}
    \includegraphics[width=0.9\linewidth]{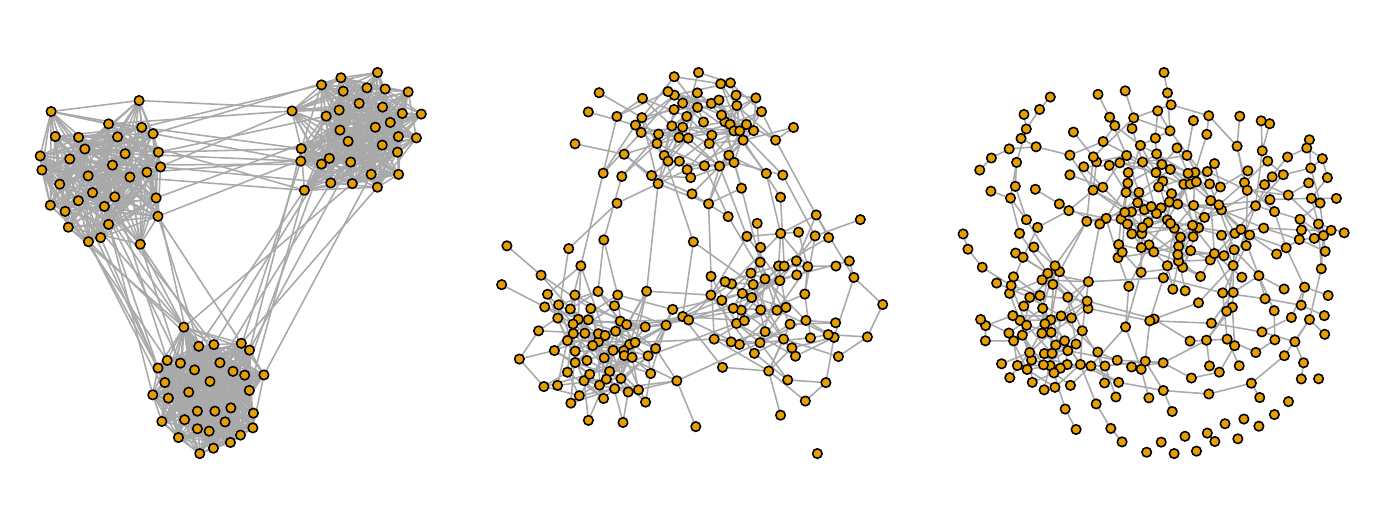}
    \caption{Graphon sparsification where $W_n = \rho_n W$. Top row shows the original graphon $W$ and two instances of $\rho_n W$. The bottom row shows 3 graphs sampled from the graphon directly above. }
    \label{fig:sparsifiedGraphon}
\end{figure}

While graphon sparsification achieves sparsity, it has limitations. The main limitation is that it doesn't create large hub structures prevalent in social networks. Graphexes \cite{veitch2015class} have the capacity to address this issue. 

\paragraph{Graphexes.}
Graphexes were originally introduced by Veitch and Roy \cite{veitch2015class} as a triple $\mathcal{W}=(W,S,I)$ where
\begin{itemize}
\item $W:\mathbb{R}_+^2\to[0,1]$ is a symmetric measurable function (graphon component),
\item $S:\mathbb{R}_+\to\mathbb{R}_+$ is an integrable function (star component),
\item $I\ge 0$ is a constant (dust/isolated edges component).
\end{itemize}
The main difference of graphexes from the original graphon framework is that 
motivated from the work by Caron and Fox \cite{caron2017sparse} 
they sample graphs via a Poisson process construction and use Kallenberg exchangeability in their regularity framework to account for the exchangeability of vertices that are modeled as reals ($\mathbb{R}^+$) instead of integers.  
The Poisson process construction coupled with a graphon $W$ defined on $\mathbb{R}_+^2$ allows them to generate sparse graphs. 
Having a separate star function gives the capability to add stars akin to social networks.

Our proposed structural prior, \textit{graphettes}, takes inspiration from these two sparse graph models. We keep the graphon defined on the unit square, as this simplifies computations, and employ a sparsified graphon in our construction. To account for structures such as rings or stars that are not generated by the sparsified graphon $W_n$, we introduce a graph edit function $f$, similar to the star function $S$ in graphexes. However, the graph edit function $f$ has broader functionality than the star function, as it can modify the graph in multiple ways, for example, by adding rings or deleting edges. As a result, graphettes allow us to generate a diverse range of graphs, from molecular structures to social networks.





\section{Historical Background and Related Work on Flow Matching}
\label{sec:more-related-work}

\subsection{Graph generative models}

Graph generative modeling aims to learn distributions over graphs that capture both local connectivity patterns and global structural properties, with applications in molecular design, combinatorial optimisation, and biological networks \cite{mercado2021graph, sun2023difusco, yi2023graph}. Existing methods can be broadly grouped into autoregressive and one-shot paradigms. Autoregressive models sequentially grow a graph by inserting nodes and edges according to a chosen ordering \citep{you2018graphrnn, liao2019efficient, shi2020masked, goyal2020graphgen}. This sequential view allows fine-grained control and the injection of domain-specific constraints (e.g.\ valency checks in molecular generation), but comes at a cost: models must either learn an effective node/edge ordering or commit to a hand-crafted one, which enlarges the hypothesis space and breaks permutation invariance \citep{kong2023autoregressive, han2023autoregressive}. As a result, autoregressive approaches tend to be computationally expensive, difficult to parallelise, and brittle on graphs with complex long-range dependencies.

To mitigate these issues, a large body of work has explored one-shot generation, where all graph variables are predicted in a single pass. One-shot models instantiate graph-adapted versions of VAEs, GANs, and normalising flows \citep{kipf2016variational, de2018molgan, liu2019graph}. While they restore permutation equivariance/ invariance and avoid learning an ordering, purely one-shot models often underfit rich higher-order structure unless equipped with carefully engineered architectures or hierarchical decompositions \citep{dai2020scalable, shirzad2022tdgen, karami2023higen, davies2023size}.

\textbf{Graph diffusion models.}
Diffusion-based generative models have recently become the dominant one-shot paradigm for graphs. Early work adapted continuous diffusion frameworks to graph-structured data by treating adjacency and features as continuous variables \citep{niu2020permutation, jo2022sdegraphs}, which simplifies optimisation but departs from the discrete nature of graph topology. Subsequent methods introduced discrete-state diffusion over node and edge categories using discrete-time Markov chains \citep{austin2021structured,vignacdigress,haefeli2022diffusion}, achieving strong performance on benchmark graph and molecular datasets. However, these models tie the sampling trajectory to the training schedule: the number of reverse steps must match the (fixed) number of forward noising steps, and sampling cost is therefore tightly coupled to training hyperparameters.

Continuous-time discrete diffusion models based on Continuous Time Markov Chains (CTMCs) address this rigidity by decoupling the sampling grid from the training objective \citep{campbell2022continuous, xu2024disco}. DisCo \cite{xu2024disco}, for instance, defines a discrete-state continuous-time diffusion on graphs and inherits the permutation-equivariant/invariant properties of DiGress while enabling flexible numerical integration and theoretical control of approximation error \citep{xu2024disco}. Other variants exploit degree information or hierarchical decompositions to improve scalability and sample quality \citep{chen2023efficient, bergmeisterefficient, limnios2023sagess}. Despite these advances, diffusion-based graph models still rely on carefully designed rate matrices or noise schedules, and their local node/edge updates make it difficult to preserve global topology and higher-order motifs without additional inductive biases or auxiliary constraints.

\textbf{Graph flow matching models.}
Flow matching (FM) offers an efficient alternative to diffusion-based training for continuous-time generative models. Rather than learning to reverse a fixed stochastic process, FM directly regresses a time-dependent vector field whose trajectories transport a simple source distribution to a complex target distribution along a chosen probability path \citep{lipman2023flowmatching}. Rectified-flow variants further simplify training by adopting linear interpolants between data and noise and regressing the associated constant conditional velocity field \citep{liu2023rectifiedflow}, leading to state-of-the-art results in high-dimensional image and sequence generation \citep{esser2024rectified, ma2024sit}. 

Extending FM to discrete state spaces has given rise to discrete flow matching (DFM), where probability paths are defined by time-inhomogeneous CTMCs and the model parameterises transition rates rather than continuous velocities \citep{campbell2024discretefm, gat2024discretefm}. Compared to discrete diffusion, DFM replaces hand-designed forward rate matrices with learnable dynamics and decouples the choice of interpolation path from the training objective. Building on this, DeFoG introduces the first DFM-based framework for graph generation, combining CTMC-based flows with a graph transformer backbone and a training–sampling decoupling that improves flexibility over purely diffusion-based methods \citep{qinmadeira2024defog}.

In parallel, some works have explored continuous FM and flow-based formulations for graphs by relaxing discrete structures into continuous spaces. Graph normalising flows and related models learn invertible transports over continuous embeddings of adjacency and attributes \citep{liu2019graph}, while variational flow matching for graph generation embeds graphs into continuous latent spaces and trains vector fields via a variational objective \citep{eijkelboom2024vfmgraphs}. These approaches demonstrate that FM-style training can be competitive to diffusion-based graph generators in terms of efficiency and flexibility. However, they typically rely on implicit couplings between source and target graphs (e.g.\ through shared encoders or attention over batched graphs) and do not explicitly enforce structural alignment under permutation and size variability. Consequently, velocity supervision can become inconsistent across permutations or mismatched graph components, making it difficult to guarantee stable preservation of global topology and higher-order motifs during integration.

Our work situates itself in this emerging line of flow-matching graph generators, but departs from prior art in two key ways: (1) we adopt a continuous FM framework over relaxed adjacency and node features while explicitly aligning noise and data graphs via structure-aware matching, and (2) we incorporate principled structural priors into both the source distribution and the training objective to capture local and global structural properties. This design yields coherent, permutation-invariant velocity supervision across variable-sized graphs and enables Flowette to learn stable continuous-time graph transformations that preserve rich combinatorial structure beyond what is typically captured by existing diffusion- or flow-based models.



\subsection{Flow matching training}
FM provides a framework for learning continuous-time generative models by directly regressing a time-dependent velocity field that transports samples from a simple source distribution to a target data distribution along a prescribed probability path \citep{lipman2023flowmatching}. Unlike diffusion-based models, FM does not require simulating or inverting a stochastic corruption process, and avoids likelihood evaluation or score estimation.

Let $\mathcal{S}$ denote a continuous state space and let $p_0$ and $p_1$ be probability distributions over $\mathcal{S}$, representing a source (noise) distribution and a target (data) distribution, respectively. In the graph setting, $\mathcal{S}$ will later correspond to a continuous relaxation of adjacency matrices, node features, and edge features. Flow matching assumes access to paired samples $(s_0, s_1)$ with $s_0 \sim p_0$ and $s_1 \sim p_1$, together with a family of interpolating states $(s_t)_{t\in[0,1]}$ defining a probability path between $p_0$ and $p_1$.

A commonly used choice is the \emph{rectified linear path},
\begin{equation}
s_t = (1-t)s_0 + t s_1,
\qquad t \sim \mathcal{U}[0,1],
\label{eq:rectified_path}
\end{equation}
which yields a constant conditional velocity
\begin{equation}
u^\star(s_t \mid s_0,s_1) = s_1 - s_0.
\end{equation}
Flow matching learns a time-conditioned vector field $v_\theta : \mathcal{S} \times [0,1] \to \mathcal{S}$, parameterised by $\theta$, that approximates the conditional velocity along the path.

The FM objective is defined as a regression loss:
\begin{equation}
\mathcal{L}_{\mathrm{FM}}(\theta)
=
\mathbb{E}_{\substack{s_0 \sim p_0,\, s_1 \sim p_1 \\ t \sim \mathcal{U}[0,1]}}
\left[
\left\|
v_\theta(s_t,t) - u^\star(s_t \mid s_0,s_1)
\right\|_2^2
\right].
\end{equation}
Minimising Eq.~\eqref{eq:fm_loss} encourages the learned vector field to induce trajectories that transport mass from $p_0$ to $p_1$.

At generation time, new samples are obtained by solving the ordinary differential equation
\begin{equation}
\frac{d s_t}{dt} = v_\theta(s_t,t),
\qquad s_{t=0} \sim p_0,
\end{equation}
using a numerical integrator (e.g.\ Euler or Runge-Kutta), and taking $s_{t=1}$ as a sample from the learned model distribution.

\section{GNN Transformer for Learning Velocity Fields}\label{sec:gnn_transformer}
At time $t\in[0,1]$, we represent the continuous graph state as $(A_t,\, X_t,\, F_t)$, where $A_t\in\mathbb{R}^{n\times n}$ is a soft adjacency matrix, $X_t\in\mathbb{R}^{n\times H}$ denotes node features, and $F_t\in\mathbb{R}^{n\times n\times H}$ denotes edge features. The adjacency $A_t$ is treated as a continuous variable during flow matching and serves as a structural bias rather than a discrete connectivity constraint.



\textbf{Input projections and time conditioning.}
A scalar time $t \in [0,1]$ is embedded via
\begin{equation}
c_t = \psi_t(t) \in \mathbb{R}^{H},
\end{equation}
where $\psi_t$ is a multi-layer perceptron (MLP).
To enable joint node–edge reasoning, we map node and edge features into a common latent space of dimension $H$ using separate learnable transformations. Specifically, node features are embedded via a function
\[
\psi_x:\mathbb{R}^{d_x}\rightarrow\mathbb{R}^{H},
\qquad
h_i^{(0)} = \psi_x\!\big(X_t(i)\big) + \mathbf{1}_N c_t^\top,
\]
while edge features are embedded via a distinct function
\[
\psi_f:\mathbb{R}^{d_f+1}\rightarrow\mathbb{R}^{H},\;\;
b_{ij}^{(0)} = \psi_f\!\big([\,F_t(i,j)\;\|\;A_t(i,j)\,]\big) + c_t,
\]
where $\|\,$ denotes concatenation. In practice, $\psi_x$ and $\psi_f$ are implemented as learnable linear layers or MLPs.

We denote the node and edge hidden states at layer $\ell$ by $h_i^{(\ell)} \in \mathbb{R}^{H}$ and $b_{ij}^{(\ell)} \in \mathbb{R}^{H}$. 
Given node features $X^h_t \in \mathbb{R}^{n \times d_x}$, edge features $F_t \in \mathbb{R}^{n \times n \times d_f}$, 
and adjacency $A_t \in \mathbb{R}^{n \times n}$, the GNN transformer predicts velocity fields $(v_A, v_X, v_F)$ used for flow.




\paragraph{Stacked attention layers.}
We apply $L$ layers of the edge-aware velocity attention:
\begin{equation}
h_i^{(\ell)}, b_{ij}^{(\ell)} =
\mathrm{AttnLayer}\big(
h_i^{(\ell-1)}, b_{ij}^{(\ell-1)}, A_t^{(\ell-1)} \big),
\quad \ell=1,\dots,L.
\end{equation}

\paragraph{Velocity prediction heads.}
Final velocity fields are obtained via linear projections:
\begin{align}
v_X(i) &= h_i^{(L)} W_{\text{out}}^X \in \mathbb{R}^{n \times d_x}, \\
v_F(i,j) &= b_{ij}^{(L)} W_{\text{out}}^E \in \mathbb{R}^{d_f}, \\
v_A(i,j) &= w_A^\top b_{ij}^{(L)} \in \mathbb{R}.
\end{align}

\paragraph{Symmetry enforcement.}
To ensure undirected graph consistency, we symmetrize outputs as
\begin{equation}
v_A \leftarrow \sigma(\tfrac{1}{2}(v_A + v_A^\top)),
\qquad
v_E(i,j) \leftarrow \tfrac{1}{2}\big(v_E(i,j) + v_E(j,i)\big)
\end{equation}
where $\sigma$ denotes the non-negative activation function.

This architecture jointly models node, edge, and adjacency dynamics under a unified time-conditioned attention mechanism. By explicitly propagating dense edge states and topology-aware attention biases, the model captures higher-order structural evolution, which is critical for flow matching on molecular graphs.

\subsection{Edge-Aware Velocity Attention Layer}

\paragraph{Structural-biased attention.} 
We first compute query and key projections
\begin{equation}
Q_i^{(\ell)} = h_i^{(\ell)} W_Q^{(\ell)}, 
\qquad 
K_j^{(\ell)} = h_j^{(\ell)} W_K^{(\ell)},
\end{equation}
where $W_Q^{(\ell)}, W_K^{(\ell)} \in \mathbb{R}^{H \times H}$.
The attention logits are defined as
\begin{equation}
S_{ij}^{(\ell)} 
= 
\frac{\langle Q_i^{(\ell)}, K_j^{(\ell)} \rangle}{\sqrt{H}} 
+ A^{(\ell)}_t(i,j),
\end{equation}
where the adjacency $A^{(\ell)}_t(i,j)$ acts as an additive structural bias. 
Row-wise normalized attention weights are then given by
\begin{equation}
\tilde{\alpha}_{ij}^{(\ell)} = \mathrm{softmax}_j(S_{ij}) =
\frac{\exp(S_{ij}^{(\ell)})}{\sum_{k=1}^{N} \exp(S_{ik}^{(\ell)})}.
\end{equation}

\paragraph{Edge-conditioned message construction.}
For each ordered node pair $(i,j)$, we form a joint edge-conditioned concatenated representation
\begin{equation}
u_{ij}^{(\ell)} = [\, h_i^{(\ell)} \,\|\, h_j^{(\ell)} \,\|\, \psi(b_{ij}^{(\ell)}) \,] \in \mathbb{R}^{3H},
\end{equation}
where $\psi(\cdot)$ is a MLP, and then $u_{ij}$ is processed by an edge MLP
\begin{equation}
m_{ij}^{(\ell)} = \phi_e(u_{ij}^{(\ell)}) \in \mathbb{R}^{H}.
\end{equation}

\paragraph{Node aggregation and updates.}
Node-wise aggregated messages are computed as
\begin{equation}
\tilde{m}_i^{(\ell)} = \sum_{j=1}^{N} \tilde{\alpha}_{ij}^{(\ell)} m_{ij}^{(\ell)},
\end{equation}
and node states are updated via a MLP
\begin{equation}
h_i^{(\ell+1)} = h_i^{(\ell)} + \phi_n(\tilde{m}_i^{(\ell)}).
\end{equation}
Edge states are updated symmetrically using
\begin{equation}
b_{ij}^{(\ell+1)} = b_{ij}^{(\ell)} + \phi_u(m_{ij}^{(\ell)}),
\end{equation}
where $\phi_n$ and $\phi_u$ are learnable MLPs with SiLU activations.

Unlike standard attention layers that aggregate only node features, our formulation explicitly maintains and updates pairwise edge states. The adjacency bias $A^{(\ell)}_t(i,j)$ allows the model to softly favor plausible bonds while retaining the flexibility to modify connectivity during flow integration, which is essential for preserving cycles and long-range structural motifs in molecular graphs.

\begin{restatable}{proposition}{propvelequivariance}[Permutation equivariance of the velocity field]
\label{prop:vel_equivariance}
Let $v_\theta : (A_t, X_t, F_t, t) \;\mapsto\; \bigl(v_A,\; v_X,\; v_F\bigr)$ denote the time-conditioned GNN transformer described in Section~\ref{sec:methods}, with $L$ layers of edge-aware velocity attention, shared learnable linear maps and MLPs, and adjacency bias $A_t$ inside the attention logits. For any permutation $\pi \in \mathfrak{S}_n$ with permutation matrix $P_\pi$, define the permuted graph state $(A_t', X_t', F_t') = (P_\pi A_t P_\pi^\top,\; P_\pi X_t,\; P_\pi F_t P_\pi^\top).$ Then there exists a parameterization of $v_\theta$ such that $v_\theta(A_t', X_t', F_t', t) = \bigl(P_\pi v_A P_\pi^\top,\; P_\pi v_X,\; P_\pi v_F P_\pi^\top\bigr),$ i.e., the predicted adjacency, node, and edge velocities transform equivariantly under any node relabeling.
\end{restatable}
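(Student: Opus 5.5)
The plan is an induction over the layers. We track how every intermediate tensor transforms when the input is relabeled by $\pi$. Write $\pi$ for the index map, so that $(P_\pi X)(i)=X(\pi^{-1}(i))$ and $(P_\pi F P_\pi^\top)(i,j)=F(\pi^{-1}(i),\pi^{-1}(j))$. We say a node-indexed tensor $h$ is \emph{equivariant} if $h'_i=h_{\pi^{-1}(i)}$, and an edge-indexed tensor $b$ is equivariant if $b'_{ij}=b_{\pi^{-1}(i)\pi^{-1}(j)}$. Here primes denote quantities computed from the permuted input $(A_t',X_t',F_t')$. The "existence of a parameterization" clause is met by the architecture as written. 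All learnable maps ($\psi_t,\psi_x,\psi_f,\psi,\phi_e,\phi_n,\phi_u,W_Q,W_K$, the output heads) act identically at every node or pair index and contain no positional encodings or index-dependent parameters. So any parameter choice works, and we only need to make sure no node-ordering-dependent input is introduced.

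\textbf{Base case.} The time embedding $c_t$ does not depend on node indices. Since $\psi_x$ is applied row-wise, $h^{(0)}{}'_i=\psi_x(X_t(\pi^{-1}(i)))+c_t=h^{(0)}_{\pi^{-1}(i)}$. Similarly $\psi_f$ is applied entrywise to the concatenation $[F_t(i,j)\,\|\,A_t(i,j)]$, and both inputs are conjugated by $P_\pi$, so $b^{(0)}$ is edge-equivariant.

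\textbf{Inductive step.} Assume $h^{(\ell)},b^{(\ell)}$ are equivariant and the adjacency bias satisfies $A'^{(\ell)}_t=P_\pi A^{(\ell)}_tP_\pi^\top$. The paper uses $A_t^{(\ell)}=A_t$ throughout; if some layer-dependent refinement is intended, it must itself be built equivariantly, and we would state this as an assumption.
\begin{itemize}
\item $Q_i,K_j$ are row-wise linear maps, so $S'_{ij}=S_{\pi^{-1}(i)\pi^{-1}(j)}$.
\item The row softmax sums over all $k$. Reindexing by $k\mapsto\pi^{-1}(k)$, which is a bijection of $\{1,\dots,n\}$, leaves the denominator invariant. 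Hence $\tilde\alpha'_{ij}=\tilde\alpha_{\pi^{-1}(i)\pi^{-1}(j)}$.
\item $u_{ij}$ and $m_{ij}=\phi_e(u_{ij})$ are built pointwise from equivariant pieces, so they are edge-equivariant.
\item The aggregate $\tilde m'_i=\sum_j\tilde\alpha'_{ij}m'_{ij}=\sum_j\tilde\alpha_{\pi^{-1}(i)\pi^{-1}(j)}m_{\pi^{-1}(i)\pi^{-1}(j)}$. Reindexing the sum gives $\tilde m_{\pi^{-1}(i)}$.
\item The residual updates with $\phi_n,\phi_u$ are pointwise, so $h^{(\ell+1)},b^{(\ell+1)}$ are equivariant.
\end{itemize}
This reindexing of summations over all nodes, in the softmax normaliser and in the aggregation, is the only non-pointwise operation and hence the one real step. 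The point to check is that the sums run over the full node set, or over a set defined by the permuted $A_t$, never over a fixed index range tied to the ordering. Here they run over all $N=n$ nodes, so the step goes through.

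\textbf{Output heads.} The heads $v_X(i)=h^{(L)}_iW^X_{\mathrm{out}}$, $v_F(i,j)=b^{(L)}_{ij}W^E_{\mathrm{out}}$ and $v_A(i,j)=w_A^\top b^{(L)}_{ij}$ are pointwise. Therefore $v_X'=P_\pi v_X$, $v_F'=P_\pi v_FP_\pi^\top$ and $v_A'=P_\pi v_AP_\pi^\top$.

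\textbf{Symmetrisation.} Transposition commutes with conjugation: $(P_\pi vP_\pi^\top)^\top=P_\pi v^\top P_\pi^\top$. The activation $\sigma$ is elementwise, so $\sigma(\tfrac12(v_A+v_A^\top))$ and $\tfrac12(v_F(i,j)+v_F(j,i))$ stay conjugation-equivariant. This yields the claimed identity $v_\theta(A_t',X_t',F_t',t)=(P_\pi v_AP_\pi^\top,P_\pi v_X,P_\pi v_FP_\pi^\top)$.
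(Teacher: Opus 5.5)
Your proposal is correct and follows essentially the same layer-wise induction as the paper's proof: equivariant initial embeddings, conjugation-equivariant attention logits and row softmax, pointwise messages, reindexed aggregation, residual updates, and then pointwise output heads. Your explicit reindexing of the softmax normaliser and your treatment of the final symmetrisation step fill in details the paper leaves as ``one checks'' or omits, but they do not change the argument.
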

See Sec.~\ref{sec:predictor_proof} for the proof.

\section{Chemistry-Aware Regularization}\label{sec:chem_regularisation}
While the global transport consistency loss $\mathcal{L}_{\mathrm{end}}$ enforces structural coherence of the generated graph, it does not explicitly guarantee chemical feasibility. We therefore introduce differentiable, chemistry-aware regularizers applied exclusively to the predicted endpoint $\hat{G}_1 = (\hat{A}_1, \hat{X}_1, \hat{F}_1)$, encouraging the model to produce globally valid molecular graphs at generation time.

\emph{Soft Valence Constraint.}
Let $A^{w}
\;=\;
\sigma\Big(\frac{1}{2}\Big(\hat{A}_1+\hat{A}_1^\top\Big)\Big) \in \mathbb{R}_+^{n\times n}$ be a symmetric, non-negative weighted adjacency matrix.
Let $\hat{F}_1{(ij)}\in\mathbb{R}^{\Tilde{B}}$ be the bond-type logits for pair $(i,j)$ and define bond-type probabilities
\begin{equation}
p_{ij} \;=\; \mathrm{softmax}(\hat{F}_1{(ij)}) \in \Delta^{\Tilde{B}-1},
\end{equation}
where $\Delta^{\Tilde{B}-1} = \{p \in \mathbb{R}^{\Tilde{B}} \mid p \ge 0,\ \mathbf{1}^\top p = 1\}$. Let $b\in\mathbb{R}^{\Tilde{B}}$ be the vector of bond orders (e.g., single/double/triple/aromatic),
and define the expected bond order
\begin{equation}
\widehat{\mathrm{ord}}_{ij}
\;=\;
\langle p_{ij}, b\rangle
\;=\;
\sum_{k=1}^{\Tilde{B}} p_{ij,k}\, b_k .
\end{equation}
The expected valence of node $i$ is then
\begin{equation}
\widehat{\mathrm{val}}_{i}
\;=\;
\sum_{j=1}^{n} A^{w}_{ij}\,\widehat{\mathrm{ord}}_{ij}.
\end{equation}

Given the maximum allowable valence $\mathrm{val}_i^{\max}$ determined by the predicted atom type, we penalize soft violations via
\begin{equation}
\label{eq:loss_val}
\mathcal{L}_{\mathrm{val}}
=
\frac{1}{n}\sum_{i=1}^n
\max \bigl\{0,\widehat{\mathrm{val}}_i - \mathrm{val}_i^{\max}\bigr\}.
\end{equation}
This loss couples adjacency and bond-type predictions in a fully differentiable manner, discouraging chemically implausible bonding patterns without imposing hard constraints.

\emph{Atom-type Marginal Matching.}
To prevent global drift in node-type composition (e.g., overproduction of common
atoms), we align the predicted atom-type marginal distribution with that of the
target graph at the endpoint:
\begin{equation}
\label{eq:loss_atom_hist}
\mathcal{L}_{\mathrm{atom}}
=
\left\|
\frac{1}{n}\sum_{i=1}^n \mathrm{softmax}(\hat{X}_{1,i})
-
\frac{1}{n}\sum_{i=1}^n X_{1,i}
\right\|_2^2.
\end{equation}
Unlike per-node classification losses, this term is permutation-invariant and operates purely at the graph level, enforcing global semantic consistency of the node-type composition. As a result, it complements endpoint reconstruction and improves overall chemical validity.

\section{Proofs for Structure Preserving Flows (Section \ref{sec:methods})}
\label{sec:proofs}

\counterwithin{theorem}{section}

\subsection{Theoretical Properties of FGW-Coupling}
\label{sec:fgw_proofs}


\begin{restatable}{lemma}{lemmaperminvariance}[Permutation invariance of FGW-distance]
\label{thm:fgw_perm_invariance}
Let $G_0, G_1$ be two graphs as above, and let $\pi \in \mathfrak{S}_{n_0}$, $\tau \in \mathfrak{S}_{n_1}$ be arbitrary node permutations with permutation matrices $P_\pi$, $P_\tau$. Define permuted graphs $\widetilde{G}_0 = (\pi \cdot A_0,\ \pi \cdot X_0,\ \pi \cdot F_0)$ and $\widetilde{G}_1 = (\tau \cdot A_1,\ \tau \cdot X_1,\ \tau \cdot F_1)$. Then, (1) The FGW-distance is invariant: $\mathrm{FGW}_\alpha(G_0,G_1) = \mathrm{FGW}_\alpha(\widetilde{G}_0,\widetilde{G}_1)$; (2) If $T^\star$ is an optimal coupling for $\mathrm{FGW}_\alpha(G_0,G_1)$, then $\widetilde{T}^\star := P_\pi T^\star P_\tau^\top$ is an optimal coupling for $\mathrm{FGW}_\alpha(\widetilde{G}_0,\widetilde{G}_1)$. Conversely, any optimal coupling $\widetilde{T}^\star$ for the permuted problem corresponds to an optimal coupling $T^\star = P_\pi^\top \widetilde{T}^\star P_\tau$ for the original problem.
\end{restatable}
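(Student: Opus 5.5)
The proof is a change of variables on the transport polytope. The map $\Phi: T \mapsto P_\pi T P_\tau^\top$ is a bijection of $\mathbb{R}_{\ge 0}^{n_0\times n_1}$, with inverse $\widetilde T \mapsto P_\pi^\top \widetilde T P_\tau$. I will show that $\Phi$ maps $\Pi(p,q)$ onto itself and preserves the FGW objective. Both claims (1) and (2) then follow at once.

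\textbf{Step 1: the polytope is preserved.} Since $p = n_0^{-1}\mathbf{1}_{n_0}$ and $q = n_1^{-1}\mathbf{1}_{n_1}$ are uniform, we have $P_\pi p = p$ and $P_\tau q = q$. For $T \in \Pi(p,q)$,
\[
(P_\pi T P_\tau^\top)\mathbf{1} = P_\pi T \mathbf{1} = P_\pi p = p,
\]
and symmetrically the column sums equal $q$. Nonnegativity is preserved because $\Phi$ only permutes entries. Hence $\Phi(\Pi(p,q)) \subseteq \Pi(p,q)$, and the same argument applied to $\Phi^{-1}$ gives equality.

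\textbf{Step 2: the cost matrices transform equivariantly.} This needs the frozen encoder $f_{\mathrm{enc}}$ (an edge-aware GIN) to be permutation equivariant, i.e.\ $f_{\mathrm{enc}}(\widetilde G_0) = P_\pi Z_0$. I will state this as the standing property of message-passing GNNs with sum aggregation. It is the one genuinely external ingredient, and I expect it to be the main obstacle, since the paper does not state it explicitly. I would justify it layer by layer: each GIN update is a sum over neighbours passed through a shared MLP, and each such layer commutes with relabelling.

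Granting this, $C(\widetilde Z_0) = P_\pi C(Z_0) P_\pi^\top$ and $C(\widetilde Z_1) = P_\tau C(Z_1) P_\tau^\top$, because pairwise squared distances are equivariant. Likewise $M(\widetilde X_0,\widetilde X_1) = P_\pi M(X_0,X_1) P_\tau^\top$.

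\textbf{Step 3: the objective is preserved.} Write $\sigma = \pi$ and $\rho = \tau$ as index maps. For the linear term,
\[
\langle P_\pi T P_\tau^\top,\, P_\pi M P_\tau^\top\rangle = \langle T, M\rangle,
\]
by orthogonality of permutation matrices and cyclicity of the trace. For the GW term, $\mathcal{L}_{\mathrm{GW}}(\Phi(T); \widetilde C_0, \widetilde C_1)$ is a quadruple sum whose summands are $(C_{0,\sigma(i)\sigma(k)} - C_{1,\rho(j)\rho(\ell)})^2\, T_{\sigma(i)\rho(j)}\, T_{\sigma(k)\rho(\ell)}$, up to the direction convention for $P_\pi$. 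Reindexing $i,k$ by $\sigma$ and $j,\ell$ by $\rho$ is a bijection of the index set, so this sum equals $\mathcal{L}_{\mathrm{GW}}(T; C_0, C_1)$.

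Combining the steps, the objective of the permuted problem evaluated at $\Phi(T)$ equals the objective of the original problem at $T$, for every feasible $T$.

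\textbf{Conclusion.} Because $\Phi$ is a bijection of the feasible set that preserves objective values, the two minima coincide, which proves (1). The same fact shows that $\Phi$ maps minimizers to minimizers, and $\Phi^{-1}$ does likewise. This gives both directions of (2): $\widetilde T^\star = P_\pi T^\star P_\tau^\top$ is optimal for the permuted problem, and any optimal $\widetilde T^\star$ yields the optimal $T^\star = P_\pi^\top \widetilde T^\star P_\tau$.
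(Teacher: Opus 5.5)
Your proposal is correct and follows essentially the same route as the paper: uniform marginals make $T\mapsto P_\pi T P_\tau^\top$ a bijection of $\Pi(p,q)$, the cost matrices transform equivariantly, the linear term is invariant by orthogonality and the GW term by reindexing, and so minima and minimizers correspond. The one difference is that you explicitly isolate the permutation equivariance of the frozen GIN encoder. The paper uses this silently when it writes $C(\pi\cdot Z_0) = P_\pi C(Z_0) P_\pi^\top$, so your extra step is a worthwhile clarification rather than a departure.
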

%
\begin{proof}
We first express how the costs transform under permutations.

For the appearance term, since $\pi$ only reorders rows of $X_0$ and
$\tau$ only reorders rows of $X_1$, we have
\[
M(\pi \cdot X_0,\ \tau \cdot X_1)
= P_\pi\, M(X_0,X_1)\, P_\tau^\top.
\]
Similarly, the intra-graph distance matrices satisfy
\[
C(\pi \cdot Z_0) = P_\pi\, C(Z_0)\, P_\pi^\top, \qquad
C(\tau \cdot Z_1) = P_\tau\, C(Z_1)\, P_\tau^\top.
\]
We now show that any feasible coupling $T$ for the original problem
corresponds to a feasible coupling $\widetilde{T}$ for the permuted
problem with the same objective value.

Let $T \in \Pi(p,q)$, so $T\mathbf{1}=p$ and $T^\top \mathbf{1} = q$.
Define $\widetilde{T} = P_\pi T P_\tau^\top$.
Since $P_\pi$ and $P_\tau$ are permutation matrices,
\[
\widetilde{T}\mathbf{1}
= P_\pi T P_\tau^\top \mathbf{1}
= P_\pi T \mathbf{1}
= P_\pi p
= p,
\]
because $p$ is uniform and invariant under permutation.
Analogously,
\[
\widetilde{T}^\top \mathbf{1}
= P_\tau T^\top P_\pi^\top \mathbf{1}
= P_\tau T^\top \mathbf{1}
= P_\tau q
= q.
\]
Non-negativity is preserved since $P_\pi$ and $P_\tau$ just reorder
entries. Thus $\widetilde{T} \in \Pi(p,q)$.

\paragraph{Feature cost.}
The linear appearance term for the permuted problem is
\[
\begin{aligned}
\langle \widetilde{T}, M(\pi \cdot X_0,\ \tau \cdot X_1)\rangle
&= \langle P_\pi T P_\tau^\top,\; P_\pi M(X_0,X_1) P_\tau^\top \rangle \\
&= \langle T,\ M(X_0,X_1)\rangle,
\end{aligned}
\]
where we used invariance of the Frobenius inner product under
simultaneous left- and right-multiplication by orthogonal matrices.

\paragraph{GW term.}
Let $C_0 = C(Z_0)$ and $C_1 = C(Z_1)$.
For the permuted graphs, the structural costs are
$\widetilde{C}_0 = P_\pi C_0 P_\pi^\top$ and
$\widetilde{C}_1 = P_\tau C_1 P_\tau^\top$.
The GW term of Eq.~\eqref{eq:app_fgw} for $(\widetilde{G}_0,\widetilde{G}_1)$
and coupling $\widetilde{T}$ is
\[
\begin{aligned}
\mathcal{L}_{\mathrm{GW}}(\widetilde{T};\widetilde{C}_0,\widetilde{C}_1)
&=
\sum_{i,k}\sum_{j,\ell}
\bigl(\widetilde{C}_0(i,k) - \widetilde{C}_1(j,\ell)\bigr)^2
\widetilde{T}_{ij} \widetilde{T}_{k\ell}.
\end{aligned}
\]
Using the definitions of $\widetilde{C}_0,\widetilde{C}_1,\widetilde{T}$
and reindexing by $i' = \pi^{-1}(i)$, $k' = \pi^{-1}(k)$,
$j' = \tau^{-1}(j)$, $\ell' = \tau^{-1}(\ell)$, we obtain
\[
\mathcal{L}_{\mathrm{GW}}(\widetilde{T};\widetilde{C}_0,\widetilde{C}_1)
=
\mathcal{L}_{\mathrm{GW}}(T;C_0,C_1).
\]
Indeed, each term in the sum is just a relabeling of the indices of the
corresponding term for $(T;C_0,C_1)$.

\paragraph{FGW Objective.}
Combining the two parts,
\[
(1-\alpha)\,\langle \widetilde{T}, M(\pi \cdot X_0,\ \tau \cdot X_1)\rangle
+
\alpha\,\mathcal{L}_{\mathrm{GW}}(\widetilde{T};\widetilde{C}_0,\widetilde{C}_1)
\]
equals
\[
(1-\alpha)\,\langle T, M(X_0,X_1)\rangle
+
\alpha\,\mathcal{L}_{\mathrm{GW}}(T;C_0,C_1).
\]
Thus every feasible $T$ for the original problem corresponds to a
feasible $\widetilde{T}$ for the permuted problem with identical
objective value, and conversely via
$T = P_\pi^\top \widetilde{T} P_\tau$.
Therefore the minimal values of the two optimization problems coincide,
proving (1), and optimal solutions map to each other, proving (2).
\end{proof}

As a useful special case, if two graphs are identical up to a node
relabeling, the FGW distance is zero and the optimal coupling is the
corresponding permutation coupling.


\begin{restatable}{corollary}{corfgwzeroisomorphic}[Zero FGW-distance for isomorphic graphs]
\label{cor:fgw_zero_isomorphic}
Assume $n_0 = n_1 = n$ and suppose there exists a permutation $\sigma \in \mathfrak{S}_n$ such that $x^0_i = x^1_{\sigma(i)}$ and $C(Z_0)_{ik} = C(Z_1)_{\sigma(i),\sigma(k)}$ for all $i,k$. Let $T_\sigma = \frac{1}{n} P_\sigma$. Then $T_\sigma$ is feasible, achieves objective value $0$ in Eq. \eqref{eq:app_fgw}, and hence $\mathrm{FGW}_\alpha(G_0,G_1) = 0$.
\end{restatable}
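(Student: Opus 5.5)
The plan is a direct verification: show that $T_\sigma$ is feasible, show that both terms of the objective in Eq.~\eqref{eq:app_fgw} vanish at $T_\sigma$, and then use nonnegativity of the objective to conclude that the minimum is $0$.

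\textbf{Feasibility.} Here $P_\sigma$ is the permutation matrix with $(P_\sigma)_{i,\sigma(i)}=1$ and all other entries $0$, so $(T_\sigma)_{ij}=\frac1n\mathbf{1}[j=\sigma(i)]$. Every row and every column of $P_\sigma$ contains exactly one $1$. Hence $T_\sigma\mathbf{1}=\frac1n\mathbf{1}_n=p$ and $T_\sigma^\top\mathbf{1}=\frac1n\mathbf{1}_n=q$, which are exactly the uniform marginals. Nonnegativity is immediate, so $T_\sigma\in\Pi(p,q)$.

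\textbf{Zero objective.} For the feature term, $\langle T_\sigma,M\rangle=\frac1n\sum_i\|x^0_i-x^1_{\sigma(i)}\|_2^2=0$ by the hypothesis $x^0_i=x^1_{\sigma(i)}$. For the GW term, only index pairs with $j=\sigma(i)$ and $\ell=\sigma(k)$ carry weight, so
\[
\mathcal{L}_{\mathrm{GW}}(T_\sigma;C(Z_0),C(Z_1))=\frac{1}{n^2}\sum_{i,k}\bigl(C(Z_0)_{ik}-C(Z_1)_{\sigma(i)\sigma(k)}\bigr)^2=0
\]
by the second hypothesis. Thus the $\alpha$-weighted combination is $0$ for every $\alpha\in[0,1]$.

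\textbf{Minimality.} For any feasible $T$, both terms are nonnegative: $M$ has nonnegative entries, the squared differences are nonnegative, and $T\ge 0$. So the objective is at least $0$ on $\Pi(p,q)$. Since $T_\sigma$ attains $0$, it is a minimizer and $\mathrm{FGW}_\alpha(G_0,G_1)=0$.

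One could instead reduce to \autoref{thm:fgw_perm_invariance}, relabelling $G_1$ by $\sigma$ so that the problem becomes one with identical data and the identity coupling. The direct computation above is simpler, though.

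There is no real obstacle here. The only point needing care is the index convention for $P_\sigma$, which must match the hypothesis $x^0_i=x^1_{\sigma(i)}$. If the paper's convention places the $1$ at $(\sigma(i),i)$ instead, I would use $P_\sigma^\top$, or equivalently relabel by $\sigma^{-1}$.
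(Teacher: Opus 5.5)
Your proof is correct and follows the same route as the paper: feasibility of $T_\sigma$ under the uniform marginals, vanishing of both the feature and GW terms at $T_\sigma$ by the alignment hypotheses, and nonnegativity of the objective to conclude that $0$ is the minimum. Your version spells out the index computations that the paper states only in words, such as the collapse of the GW double sum to $\frac{1}{n^2}\sum_{i,k}(C(Z_0)_{ik}-C(Z_1)_{\sigma(i)\sigma(k)})^2$.
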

%
\begin{proof}
By construction $T_\sigma \in \Pi(p,q)$ with uniform node measures. Under the stated assumptions, $M(X_0,X_1)$ and $C(Z_0),C(Z_1)$ align exactly under the permutation $\sigma$, so each term in both the appearance cost and the GW term of Eq. \eqref{eq:app_fgw} vanishes when evaluated at $T_\sigma$. Non-negativity of the integrand implies that the minimal value of Eq. \eqref{eq:app_fgw} is zero; hence $T_\sigma$ is optimal and the distance is zero.
\end{proof}

\begin{restatable}{theorem}{theorembatchpairing}[Permutation-consistent FGW-coupling]
\label{thm:batch_pairing}
Let $\{G_0^i\}_{i=1}^B$ and $\{G_1^j\}_{j=1}^B$ be two collections of graphs. Form the FGW-distance matrix $D \in \mathbb{R}_{\ge 0}^{B \times B}$ with $D_{ij} := \mathrm{FGW}_\alpha(G_0^i, G_1^j)$ as in Eq.~\eqref{eq:batch_cost}. Suppose there exists a permutation $\pi^\star \in \mathfrak{S}_B$ such that: 
\begin{itemize}
    \item For each $i$, the pair $(G_0^i,G_1^{\pi^\star(i)})$ satisfies the conditions of Corollary~\ref{cor:fgw_zero_isomorphic}, hence $D_{i,\pi^\star(i)} = 0$.
    \item For every $i$ and every $j \neq \pi^\star(i)$ we have $D_{ij} \ge 0$, and the assignment $\pi^\star$ is the unique permutation achieving total cost zero: for any $\pi \neq \pi^\star$, $\sum_{i=1}^B D_{i,\pi(i)} > 0$.
\end{itemize} 
Then any solution of the Hungarian assignment problem
\begin{equation}
\label{eq:app_hungarian}
\widehat{\pi}
\;\in\;
\argmin_{\pi \in \mathfrak{S}_B}
\sum_{i=1}^B D_{i,\pi(i)}
\end{equation}
recovers $\pi^\star$, i.e., $\widehat{\pi} = \pi^\star$. More generally, if there are several zero-cost permutations, the set of minimizers of Eq. ~\eqref{eq:app_hungarian} coincides with the set of such zero-cost permutations.
\end{restatable}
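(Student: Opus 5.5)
The argument is an elementary optimality argument for the linear assignment problem. It relies only on two facts: every entry of $D$ is nonnegative, and the objective vanishes at $\pi^\star$.

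First, we establish nonnegativity of $D$. For any feasible $T\in\Pi(p,q)$ we have $T\ge 0$. Moreover, $M(X_0,X_1)$ has nonnegative entries, being squared Euclidean distances, and each summand $(C(Z_0)_{ik}-C(Z_1)_{j\ell})^2T_{ij}T_{k\ell}$ of $\mathcal{L}_{\mathrm{GW}}$ is nonnegative. Since $\alpha\in[0,1]$, the FGW objective in Eq.~\eqref{eq:app_fgw} is nonnegative at every feasible $T$. Hence so is its minimum, giving $D_{ij}\ge 0$ for all $i,j$. Consequently, for every $\pi\in\mathfrak{S}_B$ the total cost $\Phi(\pi):=\sum_{i=1}^B D_{i,\pi(i)}$ is bounded below by $0$.

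Second, we show $\pi^\star$ attains this lower bound. By the first hypothesis, each pair $(G_0^i,G_1^{\pi^\star(i)})$ satisfies the conditions of Corollary~\ref{cor:fgw_zero_isomorphic}, so $D_{i,\pi^\star(i)}=0$ and therefore $\Phi(\pi^\star)=0$. Thus $\min_{\pi}\Phi(\pi)=0$, and the set of minimizers of Eq.~\eqref{eq:app_hungarian} is exactly $\{\pi:\Phi(\pi)=0\}$, the set of zero-cost permutations. This already proves the general statement in the theorem.

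Third, uniqueness follows. Let $\widehat\pi$ be any minimizer. Then $\Phi(\widehat\pi)=0$, and the second hypothesis says every $\pi\neq\pi^\star$ has $\Phi(\pi)>0$, so $\widehat\pi=\pi^\star$. Since the Hungarian algorithm returns an exact minimizer of the assignment problem, it recovers $\pi^\star$.

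No step is a genuine obstacle. The only point requiring care is the nonnegativity of the FGW value, which must hold at the minimum and not merely at some particular coupling. This is what guarantees that zero is a global lower bound on $\Phi$ and hence that zero-cost permutations are exactly the minimizers.
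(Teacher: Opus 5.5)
Your proof is correct and follows essentially the same route as the paper's: nonnegativity of every FGW entry makes $0$ a lower bound on the assignment cost, and Corollary~\ref{cor:fgw_zero_isomorphic} shows $\pi^\star$ attains it. The uniqueness hypothesis then forces $\widehat{\pi}=\pi^\star$, and in general the minimizers are exactly the zero-cost permutations; you argue nonnegativity of the FGW value at the minimum more carefully than the paper's one-line remark, but the argument is otherwise the same.
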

\begin{proof}
By construction of $D$, all entries are non-negative because both the appearance and GW terms in Eq. \eqref{eq:app_fgw} are non-negative. Assumption (i) and Corollary~\ref{cor:fgw_zero_isomorphic} imply that $D_{i,\pi^\star(i)} = 0$ for each $i$, hence
\[
\sum_{i=1}^B D_{i,\pi^\star(i)} = 0.
\]
Consider any other permutation $\pi \neq \pi^\star$. By assumption (ii), the corresponding total cost is strictly positive: $\sum_{i=1}^B D_{i,\pi(i)} > 0$. Therefore $\pi^\star$ is the unique minimizer of Eq. \eqref{eq:app_hungarian}, and the Hungarian algorithm, which solves this linear assignment problem exactly, must return $\widehat{\pi} = \pi^\star$.

If multiple permutations achieve total cost zero, then any such permutation is a minimizer of Eq. ~\eqref{eq:app_hungarian}, and no permutation with positive total cost can be optimal. Thus the set of minimizers is precisely the set of zero-cost permutations.

This theorem shows that, whenever the FGW-distance separates the correct pairings from all others, our Hungarian coupling step recovers a batch-wise one-to-one matching that is consistent with feature- and structure-preserving node permutations.
\end{proof}

\subsection{Theoretical Analysis of the Velocity Field Predictor}
\label{sec:predictor_proof}

\propvelequivariance*
\begin{proof}
We prove the claim by induction on the $L$ attention layers, using the
fact that all linear maps have shared weights and that permutations act
by row/column reordering.

\textbf{Initialization.}
At time $t$, initial node and edge embeddings are
\[
h_i^{(0)} = \psi_x(X_t(i)) + c_t,
\qquad
b_{ij}^{(0)} = \psi_f\bigl([F_t(i,j), A_t(i,j)]\bigr) + c_t,
\]
where $\psi_x$ and $\psi_f$ are shared MLPs applied row-wise and
$c_t$ depends only on $t$.
For the permuted state $(A_t',X_t',F_t')$ we obtain
\[
h_i^{\prime (0)} = \psi_x(X_t' (i)) + c_t = \psi_x(X_t(\pi^{-1}(i))) + c_t = h_{\pi^{-1}(i)}^{(0)},
\]
so in matrix form $H^{\prime (0)} = P_\pi H^{(0)}$. Similarly,
\[
b_{ij}^{\prime (0)} = \psi_f([F_t' (i,j), A_t' (i,j)]) + c_t = \psi_f([F_t(\pi^{-1}(i),\pi^{-1}(j)), A_t(\pi^{-1}(i),\pi^{-1}(j))]) + c_t = b_{\pi^{-1}(i),\pi^{-1}(j)}^{(0)},
\]
i.e., $B^{\prime (0)} = P_\pi B^{(0)} P_\pi^\top$.

\paragraph{Inductive step.}
Assume that at layer $\ell$ we have $H^{(\ell)} \in \mathbb{R}^{N \times H}$, $B^{(\ell)} \in \mathbb{R}^{N \times N \times H}$ and
\[
H^{\prime (\ell)} = P_\pi H^{(\ell)}, \qquad
B^{\prime (\ell)} = P_\pi B^{(\ell)} P_\pi^\top.
\]

\textbf{Attention logits.} We compute
\[
Q^{(\ell)} = H^{(\ell)} W_Q^{(\ell)},
\qquad
K^{(\ell)} = H^{(\ell)} W_K^{(\ell)},
\]
with shared weight matrices $W_Q^{(\ell)}, W_K^{(\ell)}$. For the permuted state,
\[
Q^{\prime (\ell)} = H^{\prime (\ell)} W_Q^{(\ell)}
= P_\pi H^{(\ell)} W_Q^{(\ell)}
= P_\pi Q^{(\ell)},
\]
and similarly $K^{\prime (\ell)} = P_\pi K^{(\ell)}$. The attention logits are
\[
S^{(\ell)}_{ij} = \frac{\langle Q^{(\ell)}_i, K^{(\ell)}_j \rangle}{\sqrt{H}} + A_t(i,j),
\]
and
\[
S^{\prime (\ell)}_{ij} = \frac{\langle Q^{\prime (\ell)}_i, K^{\prime (\ell)}_j \rangle}{\sqrt{H}} + A_t'(i,j).
\]
Using $Q^{\prime (\ell)} = P_\pi Q^{(\ell)}$, $K^{\prime (\ell)} = P_\pi K^{(\ell)}$, and $A_t' = P_\pi A_t P_\pi^\top$, we obtain
\[
S^{\prime (\ell)} = P_\pi S^{(\ell)} P_\pi^\top.
\]
Row-wise softmax is equivariant under row permutations, so the attention weights satisfy
\[
\alpha^{\prime (\ell)} = P_\pi \alpha^{(\ell)} P_\pi^\top.
\]

\textbf{Edge messages.}
For each pair $(i,j)$, we compute
\[
u_{ij}^{(\ell)} =
\bigl[h_i^{(\ell)} \,\|\, h_j^{(\ell)}
      \,\|\, \psi(b_{ij}^{(\ell)})\bigr],
\qquad
m_{ij}^{(\ell)} = \phi_e(u_{ij}^{(\ell)}),
\]
where $\psi$ and $\phi_e$ are shared MLPs. By the inductive hypothesis and shared weights,
\[
u_{ij}^{\prime (\ell)} = u_{\pi^{-1}(i),\pi^{-1}(j)}^{(\ell)},
\quad m_{ij}^{\prime (\ell)} = m_{\pi^{-1}(i),\pi^{-1}(j)}^{(\ell)}.
\]

\textbf{Node aggregation.}
Node-wise aggregated messages are
\[
\widetilde{m}_i^{(\ell)} =
\sum_{j=1}^N \alpha_{ij}^{(\ell)} m_{ij}^{(\ell)}.
\]
Using the permutation relations for $\alpha^{(\ell)}$ and $m^{(\ell)}$, one checks that $\widetilde{m}_i^{\prime (\ell)} = \widetilde{m}_{\pi^{-1}(i)}^{(\ell)}$, i.e., $\widetilde{M}^{\prime (\ell)} = P_\pi \widetilde{M}^{(\ell)}$.

\textbf{Node and edge updates.}
Node states are updated via a shared MLP $\phi_n$:
\[
h_i^{(\ell+1)} = h_i^{(\ell)} + \phi_n(\widetilde{m}_i^{(\ell)}).
\]
Thus
\[
h_i^{\prime (\ell+1)} = h_{\pi^{-1}(i)}^{(\ell)} + \phi_n(\widetilde{m}_{\pi^{-1}(i)}^{(\ell)}) = h_{\pi^{-1}(i)}^{(\ell+1)},
\]
so $H^{\prime (\ell+1)} = P_\pi H^{(\ell+1)}$. Edge states are updated symmetrically via a shared MLP $\phi_u$:
\[
b_{ij}^{(\ell+1)} =
b_{ij}^{(\ell)} + \phi_u(m_{ij}^{(\ell)}),
\]
which yields $B^{\prime (\ell+1)} = P_\pi B^{(\ell+1)} P_\pi^\top$.

By induction, the equivariance relations hold for all $\ell = 0,\dots,L$.

\paragraph{Output velocities.}
Finally, the velocity heads are linear maps applied row-wise:
\[
v_X(i) = h_i^{(L)} W_{\mathrm{out}}^X,\quad
v_F(i,j) = b_{ij}^{(L)} W_{\mathrm{out}}^E,\quad
v_A(i,j) = w_A^\top b_{ij}^{(L)}.
\]
Therefore, for the permuted state, we have
\[
v_X'(i) = v_X(\pi^{-1}(i)), \quad
v_F'(i,j) = v_F(\pi^{-1}(i),\pi^{-1}(j)), \quad
v_A'(i,j) = v_A(\pi^{-1}(i),\pi^{-1}(j)),
\]
or in matrix form
\[
v_X' = P_\pi v_X,\qquad
v_F' = P_\pi v_F P_\pi^\top,\qquad
v_A' = P_\pi v_A P_\pi^\top.
\]
This is precisely the desired permutation equivariance.
\end{proof}

Together, Lemma ~\ref{thm:fgw_perm_invariance} and Theorem \ref{thm:batch_pairing} and Proposition~\ref{prop:vel_equivariance} show that our FGW coupling, Hungarian pairing, and velocity field respect node permutations, ensuring that the overall flow-matching objective is well-defined on graphs up to isomorphism.

\subsection{Theoretical Properties of the Learning Objective}
\label{sec:learning_proofs}
\begin{restatable}{theorem}{theormendpointconsistency}[Endpoint Consistency of Rectified Transport]
\label{thm:endpoint_consistency}
Fix an FGW-coupled pair of graphs $(A_0,X_0,F_0)$ and $(A_1,X_1,F_1)$ and the rectified path $(A_t,X_t,F_t)_{t\in[0,1]}$ defined above. Assume that for this pair and for $t\sim\mathcal{U}[0,1]$ the velocity field $v_\theta$ satisfies $\mathcal{L}_{\mathrm{vel}} = 0$ and $\beta_{\mathrm{end}} \mathcal{L}_{\mathrm{end}} = 0$. Then for almost every $t\in[0,1]$, $v_\theta(A_t,X_t,F_t,t) = (\Delta A,\Delta X,\Delta F)$, and the solution of the ODE
\begin{equation}
\frac{d}{dt}(A_t,X_t,F_t) = v_\theta(A_t,X_t,F_t,t), \; (A_0,X_0,F_0)\ \text{given},
\end{equation}
satisfies $(A_1,X_1,F_1) = (A_0+\Delta A,\ X_0+\Delta X,\ F_0+\Delta F)$. Consequently, $K$-step explicit Euler integration with step size $h=1/K$ recovers the exact endpoint for any $K\in\mathbb{N}$.
\end{restatable}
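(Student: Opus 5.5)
The plan is to read the hypothesis $\mathcal{L}_{\mathrm{vel}}=0$ as a statement about the expectation over $t\sim\mathcal{U}[0,1]$ of a nonnegative integrand, and then turn it into a pointwise statement. Write $e(t) := \|v_A-\Delta A\|_F^2 + \lambda_x\|v_X-\Delta X\|_F^2 + \lambda_e\|v_F-\Delta F\|_F^2$, with everything evaluated at $(A_t,X_t,F_t,t)$. Then $\int_0^1 e(t)\,dt = 0$ and $e\ge 0$, so $e(t)=0$ for almost every $t$. For the adjacency component this immediately gives $v_A=\Delta A$ a.e. For the feature components, the same conclusion requires $\lambda_x,\lambda_e>0$. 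Where a weight vanishes, I would use the endpoint term instead: if $\beta_{\mathrm{end}}>0$, then $\mathcal{L}_{\mathrm{end}}=0$ gives $X_t+(1-t)v_X = X_1$. Since $X_t = X_0 + t\Delta X$, we have $X_1 - X_t = (1-t)\Delta X$, so $(1-t)(v_X-\Delta X)=0$, and hence $v_X=\Delta X$ for all $t<1$; the same holds for $F$ and $A$. I will state this case split explicitly; when all weights are positive, the endpoint hypothesis is redundant but consistent.

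Next, I treat the continuous ODE. The rectified path $\gamma(t)=(A_0,X_0,F_0)+t(\Delta A,\Delta X,\Delta F)$ is absolutely continuous, and $\gamma'(t) = (\Delta A,\Delta X,\Delta F) = v_\theta(\gamma(t),t)$ for a.e. $t$. So $\gamma$ is a Carathéodory solution of the ODE with the given initial condition, and $\gamma(1)$ is the claimed endpoint. To identify \emph{the} solution with $\gamma$, I need uniqueness, which I will take from Lipschitz continuity of $v_\theta$ in the state. This holds because $v_\theta$ is a finite composition of linear maps, MLPs with smooth activations, and softmax (see App.~\ref{sec:gnn_transformer}), so it is locally Lipschitz. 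Then Grönwall shows any solution coincides with $\gamma$ on $[0,1]$.

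The Euler claim is the step I expect to be the main obstacle, because Euler queries $v_\theta$ only at the grid times $t_k=k/K$, and an a.e. statement says nothing about a measure-zero grid. I would close this gap by continuity. Since $v_\theta$ is continuous in $(state,t)$ (the time embedding $\psi_t$ is an MLP) and $\gamma$ is continuous, the map $t\mapsto v_\theta(\gamma(t),t)$ is continuous. It equals the constant $\Delta$ on a full-measure, hence dense, set, so it equals $\Delta$ for every $t\in[0,1]$, including $t=1$.

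With that in hand, I argue by induction on $k$. If the Euler iterate satisfies $Y_k=\gamma(t_k)$, then $Y_{k+1} = Y_k + h\,v_\theta(Y_k,t_k) = \gamma(t_k)+h\Delta = \gamma(t_{k+1})$. After $K$ steps this gives $Y_K=\gamma(1)=(A_0+\Delta A, X_0+\Delta X, F_0+\Delta F)$ for every $K\in\mathbb{N}$.
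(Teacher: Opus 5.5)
Your proof is correct for $\lambda_x,\lambda_e>0$ and follows the same skeleton as the paper's proof. A zero expectation of a nonnegative integrand gives $v_\theta=(\Delta A,\Delta X,\Delta F)$ for a.e.\ $t$ along the path; one then integrates and sums the Euler increments. You are more rigorous at exactly the two points where the paper's Step~2 is loose.

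First, the paper ``integrates $dA_t/dt=\Delta A$ along the rectified path'' without arguing that the ODE trajectory \emph{is} that path. This is circular, because Step~1 only controls $v_\theta$ on the path. Your Carath\'eodory-solution plus local-Lipschitz/Gr\"onwall uniqueness step is what makes the conclusion non-tautological.

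Second, the paper invokes an almost-everywhere identity at the grid times $t_k=k/K$, which form a null set, and simply asserts that the Euler iterates stay on the path. Your continuity argument (a continuous map equal to the constant $\Delta$ on a dense set equals it everywhere), together with the explicit induction $Y_k=\gamma(t_k)\Rightarrow Y_{k+1}=\gamma(t_{k+1})$, closes both holes. The cost is that you rely on regularity of $v_\theta$: continuity in $(s,t)$ and local Lipschitz continuity in $s$. The theorem never states this, but the described architecture does provide it.

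One remark in your first paragraph is wrong, though it does not affect the main case. The endpoint term cannot compensate for a vanishing weight, because $\mathcal{L}_{\mathrm{end}}$ in Eq.~\eqref{eq:loss_end} carries the same weights $\lambda_x,\lambda_e$. In fact, since $A_t-A_1=-(1-t)\Delta A$ (and likewise for $X$ and $F$), you get $\hat{A}_1-A_1=(1-t)(v_A-\Delta A)$. Hence $\mathcal{L}_{\mathrm{end}}(t)=(1-t)^2\,\mathcal{L}_{\mathrm{vel}}(t)$ pointwise in $t$. So the endpoint hypothesis is always implied by $\mathcal{L}_{\mathrm{vel}}=0$ and never adds information. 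If, say, $\lambda_x=0$, neither hypothesis constrains $v_X$, and the $X$-component of the conclusion can fail. Drop the case split and state $\lambda_x,\lambda_e>0$ as an assumption, as the paper tacitly does; its experiments use $\lambda_x=\lambda_e=0.5$.
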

\begin{proof}
We argue in two steps.

\paragraph{Step 1: Identification of the velocity field.}
By definition of $\mathcal{L}_{\mathrm{vel}}$ in Eq. \eqref{eq:app_vel_loss}, for each sampled $t$ we have
\begin{equation}
\mathcal{L}_{\mathrm{vel}}(t) = \|v_A(A_t,X_t,F_t,t)-\Delta A\|_F^2 + \lambda_x\|v_X(A_t,X_t,F_t,t)-\Delta X\|_F^2 + \lambda_e\|v_F(A_t,X_t,F_t,t)-\Delta F\|_F^2.
\end{equation}
The loss is a sum of nonnegative terms. If its expectation with respect to $t\sim\mathcal{U}[0,1]$ is zero, then $\mathcal{L}_{\mathrm{vel}}(t)=0$ for almost every $t$ (w.r.t Lebesgue measure); otherwise the integral of a strictly positive integrand would be strictly positive. Thus, for almost every $t$,
\begin{equation}
v_A(A_t,X_t,F_t,t)=\Delta A,\quad
v_X(A_t,X_t,F_t,t)=\Delta X,\quad
v_F(A_t,X_t,F_t,t)=\Delta F.
\end{equation}
In particular, the learned velocity field coincides with the constant ideal transport $(\Delta A,\Delta X,\Delta F)$ along the rectified path.

\paragraph{Step 2: Exact reconstruction by ODE and Euler integration.}
Consider the ODE
\begin{equation}
\frac{d}{dt}A_t = v_A(A_t,X_t,F_t,t),\qquad A_0\ \text{given},
\end{equation}
and analogously for $X_t$ and $F_t$. By Step~1, for almost every $t$ we have $dA_t/dt = \Delta A$, $dX_t/dt = \Delta X$, and $dF_t/dt = \Delta F$ along the rectified path. Integrating from $t=0$ to $t=1$ yields
\begin{equation}
A_1 = A_0 + \int_0^1 \Delta A\,dt = A_0+\Delta A, \qquad X_1 = X_0 + \Delta X, \qquad F_1 = F_0 + \Delta F,
\end{equation}
which matches the desired endpoints by definition of $(\Delta A,\Delta X,\Delta F)$.

Now consider explicit Euler integration with $K$ steps and step size $h=1/K$. Starting from $(A^{(0)},X^{(0)},F^{(0)})=(A_0,X_0,F_0)$, the Euler updates are
\begin{align}
A^{(k+1)} &= A^{(k)} + h\, v_A(A^{(k)},X^{(k)},F^{(k)},t_k),\\
X^{(k+1)} &= X^{(k)} + h\, v_X(A^{(k)},X^{(k)},F^{(k)},t_k),\\
F^{(k+1)} &= F^{(k)} + h\, v_F(A^{(k)},X^{(k)},F^{(k)},t_k),
\end{align}
where $t_k = k/K$. Along the rectified path, each state $(A^{(k)},X^{(k)},F^{(k)})$ equals $(A_{t_k},X_{t_k},F_{t_k})$, and by Step~1 the velocity is constant and equal to $(\Delta A,\Delta X,\Delta F)$ almost everywhere. Therefore,
\begin{equation}
A^{(K)} = A_0 + h\sum_{k=0}^{K-1}\Delta A = A_0 + K\cdot\frac{1}{K}\Delta A = A_0+\Delta A,
\end{equation}
and similarly for $X^{(K)}$ and $F^{(K)}$. Thus Euler integration also recovers $(A_1,X_1,F_1)$ exactly.
\end{proof}

\begin{restatable}{theorem}{theoremstability}[Finite-Step Stability]
\label{thm:stability_app}
Let $s_t=(A_t,X_t,F_t)$ denote the rectified path between $s_0$ and $s_1$, and let $v^\star(s_t,t)=(\Delta A,\Delta X,\Delta F)$ be the ideal constant velocity. Suppose $v_\theta$ satisfies
\begin{equation}
\label{eq:eps_bound}
\|v_\theta(s_t,t)-v^\star(s_t,t)\|_F \le \varepsilon
\quad\text{for all }t\in[0,1],
\end{equation}
and is $L$-Lipschitz in $s$, i.e., $\|v_\theta(s,t)-v_\theta(s',t)\|_F \le L\|s-s'\|_F, \; \forall s,s',\ \forall t\in[0,1]$. Let $s^{(K)}$ denote the endpoint obtained by $K$-step explicit Euler integration of $v_\theta$ from $s_0$ with step size $h=1/K$. Then there exists a constant $C>0$ depending only on $L$ such that $\|s^{(K)} - s_1\|_F \;\le\; C\bigl(\varepsilon + h\bigr) \;=\; C\Bigl(\varepsilon + \frac{1}{K}\Bigr)$.
\end{restatable}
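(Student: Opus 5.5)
The plan is a standard discrete Gr\"onwall argument for explicit Euler, comparing the Euler iterates with the exact rectified path. I would set $t_k = k/K$ and $s^{(0)}=s_0$, with the Euler recursion $s^{(k+1)} = s^{(k)} + h\,v_\theta(s^{(k)},t_k)$. The reference trajectory is the rectified path $s_{t_k} = s_0 + t_k v^\star$, which satisfies $s_{t_{k+1}} = s_{t_k} + h\,v^\star$ exactly, since $v^\star=(\Delta A,\Delta X,\Delta F)$ is constant. The key observation is that the ideal ODE has a linear solution, so Euler applied to $v^\star$ has no local truncation error. All the error therefore comes from the mismatch between $v_\theta$ and $v^\star$, which is propagated through the Lipschitz constant.

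First, I would define $e_k = s^{(k)} - s_{t_k}$, so that $e_0 = 0$ and $e_K = s^{(K)} - s_1$. Subtracting the two recursions and adding and subtracting $v_\theta(s_{t_k},t_k)$ gives
\[
e_{k+1} = e_k + h\bigl(v_\theta(s^{(k)},t_k) - v_\theta(s_{t_k},t_k)\bigr) + h\bigl(v_\theta(s_{t_k},t_k) - v^\star\bigr).
\]
The middle term is bounded by $hL\|e_k\|_F$ using the Lipschitz assumption. The last term is bounded by $h\varepsilon$ using Eq.~\eqref{eq:eps_bound}. Note that Eq.~\eqref{eq:eps_bound} is only assumed at points of the path $s_t$, which is exactly why the Lipschitz step is needed to move from $s^{(k)}$ back to the path. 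The result is the one-step inequality $\|e_{k+1}\|_F \le (1+hL)\|e_k\|_F + h\varepsilon$.

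Second, I would unroll this recursion from $e_0=0$:
\[
\|e_K\|_F \le h\varepsilon\sum_{j=0}^{K-1}(1+hL)^j = \frac{\varepsilon}{L}\bigl((1+hL)^K - 1\bigr) \le \frac{e^{L}-1}{L}\,\varepsilon,
\]
using $(1+L/K)^K \le e^L$. For $L=0$ the sum is simply $K\cdot h\varepsilon = \varepsilon$, which agrees with the limit of the constant. Setting $C := \max\{(e^L-1)/L,\,1\}$ (read as $1$ when $L=0$) gives $\|s^{(K)}-s_1\|_F \le C\varepsilon \le C(\varepsilon + h)$. This constant depends only on $L$, as the statement requires, and the bound is in fact slightly stronger than claimed because the $h$ term is not needed.

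I do not expect a serious obstacle. The one point needing care is the one noted above: the $\varepsilon$-closeness holds only along the rectified path, so the comparison must be arranged so that $v_\theta$ is evaluated on the path, with the Lipschitz bound absorbing the deviation $s^{(k)} - s_{t_k}$. A second point worth remarking on is that the $h$ term in the statement is slack. It would only become necessary if the reference path were curved, in which case one adds the usual local truncation error $O(h^2)$ per step. Summing that over $K$ steps and passing it through the same Gr\"onwall factor would contribute $O(h)$, which the constant $C$ also absorbs.
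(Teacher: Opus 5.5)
Your proposal is correct and follows essentially the same argument as the paper: you compare Euler iterates with the exact rectified path, add and subtract $v_\theta$ evaluated on the path to get $\|e_{k+1}\|_F \le (1+Lh)\|e_k\|_F + h\varepsilon$, and unroll with $(1+Lh)^{1/h}\le e^L$ to obtain $C=(e^L-1)/L$. Like the paper, you note that the local truncation term vanishes for the constant ideal velocity, so the $h$ term only matters for a curved reference path; your explicit treatment of $L=0$ is a small addition the paper omits.
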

\begin{proof}
Let $s^\star_t$ denote the exact rectified path driven by $v^\star$, i.e., $s^\star_t = s_0 + t(s_1-s_0)$, so that $s^\star_1=s_1$ and $d s^\star_t/dt = v^\star(s^\star_t,t)$. Let $s^{(k)}$ be the Euler iterates with step size $h=1/K$, starting at $s^{(0)}=s_0$:
\begin{equation}
s^{(k+1)} = s^{(k)} + h\,v_\theta(s^{(k)},t_k), \qquad t_k = kh.
\end{equation}
Define the error at step $k$ as
\begin{equation}
e_k := s^{(k)} - s^\star_{t_k}.
\end{equation}
We derive a recursion for $e_k$. Using the definition of $s^{(k+1)}$ and the exact flow of $s^\star_t$,
\begin{align}
e_{k+1}
&= s^{(k+1)} - s^\star_{t_{k+1}} \\
&= s^{(k)} + h\,v_\theta(s^{(k)},t_k)
   - \Bigl(s^\star_{t_k} + \int_{t_k}^{t_{k+1}} v^\star(s^\star_u,u)\,du\Bigr) \\
&= e_k
   + h\bigl(v_\theta(s^{(k)},t_k) - v^\star(s^\star_{t_k},t_k)\bigr)
   - \int_{t_k}^{t_{k+1}}\bigl(v^\star(s^\star_u,u)-v^\star(s^\star_{t_k},t_k)\bigr)\,du.
\end{align}
Taking Frobenius norms and applying the triangle inequality,
\begin{align}
\|e_{k+1}\|_F
&\le \|e_k\|_F 
+ h\,\|v_\theta(s^{(k)},t_k)-v^\star(s^\star_{t_k},t_k)\|_F
+ \int_{t_k}^{t_{k+1}}\|v^\star(s^\star_u,u)-v^\star(s^\star_{t_k},t_k)\|_F\,du.
\label{eq:ek_recursion}
\end{align}
We bound the two new terms separately.

\paragraph{Velocity mismatch term.}
Add and subtract $v_\theta(s^\star_{t_k},t_k)$, and use the Lipschitz property and Eq. ~\eqref{eq:eps_bound}:
\begin{align}
\|v_\theta(s^{(k)},t_k)-v^\star(s^\star_{t_k},t_k)\|_F
&\le
\|v_\theta(s^{(k)},t_k)-v_\theta(s^\star_{t_k},t_k)\|_F
+ \|v_\theta(s^\star_{t_k},t_k)-v^\star(s^\star_{t_k},t_k)\|_F \\
&\le L\|s^{(k)}-s^\star_{t_k}\|_F + \varepsilon
= L\|e_k\|_F + \varepsilon.
\end{align}

\paragraph{Local truncation term.}
Since $v^\star(s,t)$ is constant in $s$ and $t$ along the rectified path
(indeed, $v^\star(s^\star_u,u)=\Delta s:=s_1-s_0$ for all $u$), we have
\begin{equation}
v^\star(s^\star_u,u)-v^\star(s^\star_{t_k},t_k)=0
\quad\text{for all }u\in[t_k,t_{k+1}],
\end{equation}
and therefore the integral term in Eq. \eqref{eq:ek_recursion} vanishes. More generally, if one allows a non-constant but Lipschitz $v^\star$, that term can be bounded by a constant times $h^2$; in our rectified setting it is exactly zero.

\paragraph{Putting it together.}
Substituting the bounds into Eq. \eqref{eq:ek_recursion} yields
\begin{equation}
\|e_{k+1}\|_F
\le \|e_k\|_F + h(L\|e_k\|_F + \varepsilon)
= (1+Lh)\|e_k\|_F + h\varepsilon.
\end{equation}
We have $e_0 = s^{(0)}-s^\star_0 = 0$, and by induction
\begin{align}
\|e_K\|_F
&\le h\varepsilon \sum_{j=0}^{K-1} (1+Lh)^j
= h\varepsilon\,\frac{(1+Lh)^K - 1}{Lh}
\le \frac{\varepsilon}{L}\bigl((1+Lh)^K - 1\bigr).
\end{align}
Since $K=1/h$, the standard estimate $(1+Lh)^{1/h}\le e^L$ for
$h\in(0,1]$ gives
\begin{equation}
(1+Lh)^K = (1+Lh)^{1/h} \le e^L,
\end{equation}
and hence
\begin{equation}
\|e_K\|_F \le \frac{\varepsilon}{L}(e^L - 1)
=: C_1 \varepsilon,
\end{equation}
where $C_1$ depends only on $L$. If one also keeps the $O(h)$ local truncation term for a non-constant $ideal$ velocity, a standard Gronwall-type argument yields an additional term $C_2 h$. Collecting constants into $C:=C_1+C_2$ we obtain
\begin{equation}
\|s^{(K)} - s_1\|_F = \|e_K\|_F \le C(\varepsilon + h),
\end{equation}
as claimed.
\end{proof}

Theorem~\ref{thm:stability_app} formalizes the intuition that under our rectified parameterization, the endpoint consistency loss $\mathcal{L}_{\mathrm{end}}$ and the velocity loss $\mathcal{L}_{\mathrm{vel}}$ together control both the local velocity error $\varepsilon$ and the global endpoint error under finite-step integration.

\section{Computational Complexity Analysis}
\label{sec:complexity}

Let $B$ be the batch size, $n$ the graph size used for flow matching, $m$ the number of edges $|E|$ in the graph, $L$ the number of transformer layers, $H$ the hidden dimension, and $K$ the number of Euler steps. 

\textbf{FGW coupling.} Computing the FGW cost matrix between $B$ noise and $B$ target graphs requires $B^2$ FGW solves. Using a square-loss FGW solver with $I$ iterations, this costs $O(B^2 I n^3)$, plus $O(B^2 m(d_x+d_z))$ to form feature and structure costs; Hungarian matching adds $O(B^3)$. These computations can be embarrassingly parallel and precomputed offline once per dataset. 

\textbf{Flow-matching training.} After FGW precomputation, each minibatch contains $B$ matched graph pairs. Edge-aware flow matching costs $O(LmH^2 + m d_f)$ per pair, yielding a per-minibatch cost $O(B(LmH^2 + m d_f))$. Training over $\mathcal{R}$ epochs and $M$ minibatches therefore costs $O(\mathcal{R}M\,B(LmH^2 + m d_f))$. 

\textbf{Sampling.} Euler integration for $K$ steps requires $K$ forward passes of the transformer, with total cost $O(Km(LH^2 + d_f))$. 

\section{Generative stories}\label{appendix:generativeStories}

\subsection{Generative story for graphons, sparsified graphons and graphettes }

\begin{algorithm}[H]
\caption{Sampling from a graphon $W$}
\label{alg:genStoryGraphon}
\begin{algorithmic}[1]
\INPUT
$n$, $W$

\OUTPUT Adjacency matrix $A$ of sampled graph.

\STATE Sample $x_1,\dots,x_n \overset{iid}{\sim}\mathrm{Unif}[0,1] $

\FOR {$i \in \{1, \ldots, n\}$}
    \FOR {$j \in \{(i+1), \ldots, n\}$}
  \STATE Sample $A_{ij} \sim \textrm{Bernoulli}( W(x_i, x_j))$\;
  \STATE Set $A_{ji} = A_{ij}$
\ENDFOR
\ENDFOR
\RETURN{$A$}\;
\end{algorithmic}
\end{algorithm}

\begin{algorithm}[H]
\caption{Sampling from a sparsified graphon $W_n = \rho_n W$}
\label{alg:genStorySparsifiedGraphon}
\begin{algorithmic}[1]
\INPUT
$n$, $W$, sequence $\{\rho_i\}_i = (\rho_1, \rho_2, \ldots, \rho_n, \ldots)$

\OUTPUT Adjacency matrix $A$ of sampled graph.

\STATE Get the $n$th value of  $\{\rho_i\}_i$ denoted by $\rho_n$
\STATE Sample $x_1,\dots,x_n \overset{iid}{\sim}\mathrm{Unif}[0,1] $

\FOR {$i \in \{1, \ldots, n\}$}
    \FOR {$j \in \{(i+1), \ldots, n\}$}
  \STATE Sample $A_{ij} \sim \textrm{Bernoulli}(\rho_n W(x_i, x_j))$\;
  \STATE Set $A_{ji} = A_{ij}$
\ENDFOR
\ENDFOR
\RETURN{$A$}\;
\end{algorithmic}
\end{algorithm}


\begin{algorithm}[H]
\caption{Sampling from a graphette $\mathcal{W} = (W, \rho_n, f)$}
\label{alg:genStorygraphette}
\begin{algorithmic}[1]
\INPUT
$n$, $W$, sequence $\{\rho_i\}_i = (\rho_1, \rho_2, \ldots, \rho_n, \ldots)$, graph edit function $f(G, \cdots)$ where $\cdots$ denote other function specific parameters

\OUTPUT Adjacency matrix $A$ of sampled graph.

\COMMENTS{ \scriptsize First sample a graph from the sparsified graphon}
\STATE Get the $n$th value of  $\{\rho_i\}_i$ denoted by $\rho_n$
\STATE Sample $x_1,\dots,x_n \overset{iid}{\sim}\mathrm{Unif}[0,1] $

\FOR {$i \in \{1, \ldots, n\}$}
    \FOR {$j \in \{(i+1), \ldots, n\}$}
  \STATE Sample $A'_{ij} \sim \textrm{Bernoulli}(\rho_n W(x_i, x_j))$\;
  \STATE Set $A'_{ji} = A'_{ij}$
\ENDFOR
\ENDFOR
\STATE Get the graph $G'$ from the adjacency matrix $A'$;

\COMMENTS{ \scriptsize Then enact the graph edit function on it}
\STATE $G = f(G',\cdots)$;
\STATE Get the adjacency matrix $A$ of $G$;
\RETURN{$A$}\;
\end{algorithmic}
\end{algorithm}
\section{Graph edit functions and theorem proofs in Section \ref{sec:Graphettes}}\label{Appendix:graphettes}

\subsection{Graph edit functions GEF 1 - GEF 4}

We use the following four graph edit functions in our experiments. 

\begin{motif}\label{motif:identity} \textbf{Identity}. Let $f = I$ be the identity function with $I(G) = G$, which does not edit the graph. 
\end{motif}
The identity graph edit function is used in Lemmas \ref{lemma:Wfamily} and \ref{lemma:sparsifiedW} to show that graphettes can recover graphons and sparsified graphons. Furthermore, for stochastic block models, as illustrated in Section \ref{sec:resultsSynthetic}, 
 graphettes are equivalent to graphons and our Flowette model achieves
competitive performance, including state-of-the-art results on several metrics across multiple benchmarks.

\begin{motif}\label{motif:tree}\textbf{Cycle deletion}. 
Let $f = h$ denote a global function that removes cycles from the graph.
\end{motif}
When the application requires that graphs are trees (no cycles), we choose $f = h$. For a connected graph the resulting graph is a tree. 

The class of graphettes also includes graph edit functions which may be parametrised, for example we may wish to model different ring sizes in molecular graphs in Sec. ~\ref{sec:expt-molecules}. 

\begin{motif}\label{motif:ring3}\textbf{Ring addition}.
Let $R(p,c)$ denote a function where a ring of size $c$ is added and connected to a node $i$ with probability $p$, i.e.,  $R(p,c) = \mathrm{Bernoulli}(p) \times c$.
\end{motif}

For social network graphs, we model hubs, which are highly connected individuals. 

\begin{motif}\label{motif:star}\textbf{Star addition}. Let $S\left(a, b, \{u_i\}_{i=1}^n \right)$  denote a function that adds stars given by a Poisson random variable $\Poi(an \exp(u_i + b))$  to a node $i$ where $ \{u_i\}_{i=1}^n = (u_1, \ldots, u_n)$, $0 \leq u_i \leq 1$, $a >0$ and $|b| < 1$.   
\end{motif}

This models a non-uniform probability of any particular node $i$ (indexed by $u_i$) , having a star added, with star sizes varying with $u_i$. 

\subsection{Theorem proofs in Section \ref{sec:Graphettes} }
\begin{lemma}\label{lemma:recoverW}
The graphette $\mathcal{W}=(W, \rho_n, f)$  recovers $W$ when $\rho_n = 1$ and $f$ is the identity function. 
\end{lemma}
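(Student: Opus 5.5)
The plan is to compare the sampling procedures directly, showing that the graphette generative process (Algorithm~\ref{alg:genStorygraphette}) reduces line by line to the graphon sampler (Algorithm~\ref{alg:genStoryGraphon}, equivalently Definition~\ref{def:graphon}) under the stated parameters. Two graph-valued random variables on the vertex set $\{1,\dots,n\}$ have the same distribution once their joint sampling mechanisms agree. So it suffices to show that, for every $n$, the law of $G \sim \mathcal{W}$ equals the law of $G \sim \mathsf{G}(n,W)$.

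First, set $\rho_n = 1$ for all $n$. The sparsified graphon then becomes $W_n = \rho_n W = W$. In Algorithm~\ref{alg:genStorySparsifiedGraphon}, the latent variables $x_1,\dots,x_n$ are drawn i.i.d.\ uniform on $[0,1]$ exactly as in Definition~\ref{def:graphon}. Each edge $A'_{ij}$, for $i<j$, is drawn independently given the latents from $\mathrm{Bernoulli}(1\cdot W(x_i,x_j)) = \mathrm{Bernoulli}(W(x_i,x_j))$, and then symmetrised. Hence $G' \sim \mathsf{G}(n,W,1)$ has exactly the law of $\mathsf{G}(n,W)$. Note that the sequence $\rho_n \equiv 1$ is admissible in Definition~\ref{def:graphette}, which only requires $0 \le \rho_i \le 1$. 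It does not require convergence to zero, unlike Definition~\ref{def:sparsifiedGraphon}.

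Second, apply $f = I$ (GEF~\ref{motif:identity}). Since $I(G') = G'$ deterministically, $G = G'$, and so $G$ is distributed as $\mathsf{G}(n,W)$ for every $n$. This means the graphette and the graphon $W$ define the same random-graph model. This is the sense in which $\mathcal{W}$ ``recovers'' $W$. As a consequence, the graphette inherits the graphon's properties; for instance, by Theorem~\ref{thm:BCconvergent2}, the samples converge almost surely to $W$ in the cut metric, although this is not needed for the statement.

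The only real subtlety is the paper's convention of keeping the largest connected component after sampling, stated in Section~\ref{sec:Graphettes}. I would handle it by noting that this convention is described as applied \emph{before applying the edits}, which is only relevant to nontrivial edit functions. Under $f = I$ we take the generative story of Algorithm~\ref{alg:genStorygraphette} literally, with no component extraction, so the equality in law is exact. Apart from this, the argument is a direct verification and presents no real obstacle.
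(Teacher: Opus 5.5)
Your proof is correct and is essentially the paper's argument: with $\rho_n = 1$ the sparsified graphon is $W_n = W$, and the identity edit leaves $G'$ unchanged, so the graphette sampler coincides with $\mathsf{G}(n,W)$. The paper also remarks that the resulting sequence is dense, which the statement does not need; one small quibble with your optional aside is that Theorem~\ref{thm:BCconvergent2} only characterises cut-metric convergence and does not by itself give almost-sure convergence of $W$-random graphs to $W$ (that is a separate standard result), but since you mark the remark as unnecessary it does not affect the proof.
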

\begin{proof}
    This results in $G$ being sampled from $W_n = W$ and as $f$ is the identity, no edits are done. For  increasing  $n$ the resulting graph sequence $\{G_n\}_n$  is dense. 
\end{proof}

\begin{lemma}\label{lemma:Wfamily}The graphette $\mathcal{W} =(W, \rho_n, f)$  generates dense graphs from a family of graphons $\{W_n\}_n$  when $\rho_n \to c > 0$ and $f$ is the identity function. 
\end{lemma}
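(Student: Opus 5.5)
The plan is to unpack the definition of the graphette with $f = I$ and show that the resulting graph sequence has $\Theta(n^2)$ edges in expectation, and almost surely as well. Since $f$ is the identity, Definition~\ref{def:graphette} gives $G_n \sim \mathsf{G}(n, W, \rho_n)$ directly. Equivalently, $G_n$ is a $W_n$-random graph for the graphon $W_n := \rho_n W$ (Algorithm~\ref{alg:genStorySparsifiedGraphon}). So we are sampling from the family $\{W_n\}_n$, and it remains to check density in the sense of Definition~\ref{def:dense}. We assume, as is implicit, that $W$ is not a.e.\ zero, so that $\bar W := \int_0^1\int_0^1 W(x,y)\,dx\,dy > 0$. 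Otherwise every sample is empty and the claim is vacuous.

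\textbf{Step 1: expected edge count.} Condition on the latent variables $x_1,\dots,x_n$. Each pair $i<j$ is an edge independently with probability $\rho_n W(x_i,x_j)$. Taking expectations over the i.i.d.\ uniform $x_i$ gives
\[
\mathbb{E}\,|E(G_n)| = \binom{n}{2}\rho_n \bar W .
\]
Since $\rho_n \to c > 0$, there is $N$ with $\rho_n \ge c/2$ for all $n \ge N$. Using also $\rho_n \le 1$, this yields
\[
\tfrac{c}{2}\bar W\binom{n}{2} \le \mathbb{E}\,|E(G_n)| \le \binom{n}{2},
\]
so $\mathbb{E}|E(G_n)| \in \Theta(n^2)$.

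\textbf{Step 2: concentration.} To pass from expectation to the realised graphs, note that $|E(G_n)|/\binom{n}{2}$ is a U-statistic in the $x_i$ plus independent Bernoulli noise. Its variance is $O(1/n)$, because only pairs of edges sharing a vertex are correlated. Chebyshev's inequality then gives $|E(G_n)|/\binom n2 - \rho_n\bar W \to 0$ in probability. For almost sure convergence I would instead use a bounded-differences (McDiarmid) inequality: changing one $x_i$ together with its incident edge coins alters the count by at most $n-1$. Borel--Cantelli on the resulting tail bound then gives $|E(G_n)|/\binom{n}{2} \to c\bar W > 0$ almost surely, hence $|E(G_n)| \in \Theta(n^2)$ almost surely.

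\textbf{Step 3: limit object (optional).} One can also remark that $\|W_n - cW\|_\square \le |\rho_n - c| \to 0$. Together with standard sampling results and Theorem~\ref{thm:BCconvergent2}, this shows $G_n$ converges to the graphon $cW$, which is a nonzero graphon and therefore confirms density.

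The main obstacle is Step 2, which needs a concentration statement rather than only an expectation. I expect the McDiarmid route, with the vertex-exposure martingale, to handle it cleanly. If the paper's notion of ``dense'' is read in expectation, Step 1 alone suffices.
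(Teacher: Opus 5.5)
Your proof is correct. It rests on the same core observation as the paper's, namely that $\rho_n$ stays bounded away from $0$, but it proves from first principles what the paper treats as a black box.

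The paper's argument is two lines. First, $\rho_n\to c>0$ gives $W_n=\rho_nW\not\to 0$. Second, it invokes as a known fact that ``graphs sampled from any non-zero graphon are dense''; elsewhere the paper attributes ``dense or empty'' to Aldous--Hoover.

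You instead establish that fact directly for the $n$-dependent family $\{\rho_nW\}_n$. The identity $\mathbb{E}\,|E(G_n)|=\binom n2\rho_n\bar W$, together with $\rho_n\ge c/2$ for large $n$, gives $\Theta(n^2)$ edges in expectation. Your bounded-differences step, with constant $n-1$ per vertex, yields a tail of $2e^{-\epsilon^2 n/2}$, which is summable. Borel--Cantelli then upgrades the expectation bound to $|E(G_n)|/\binom n2\to c\bar W$ almost surely. For McDiarmid to apply literally, assign each edge coin to a single endpoint (say the later vertex), so that the per-vertex bundles are independent; this is exactly what the vertex-exposure martingale does.

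What your route buys is self-containedness and precision:
\begin{itemize}
\item It makes explicit the hypothesis $\bar W>0$, which the paper hides in the phrase ``non-zero graphon''. If $W=0$ a.e.\ the lemma is false rather than vacuous, so this hypothesis is genuinely required.
\item It handles the fact that the graphon varies with $n$, whereas the fact the paper invokes is naturally a statement about a fixed graphon.
\item It identifies the limiting edge density as $c\bar W$.
\end{itemize}

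Your optional Step~3 is the rigorous version of the paper's ``$W_n\not\to 0$''. It sharpens that claim to $\|W_n-cW\|_\square\le|\rho_n-c|\to 0$, with a non-zero limit $cW$. The paper's route buys only brevity.
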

\begin{proof}
    When $\rho_n \to c >0 $, the graphon family $\{W_n\}_n$ where $W_n = \rho_n W \not\to 0$. As graphs sampled from any non-zero graphon are dense we have the result. 
\end{proof}

\begin{lemma}\label{lemma:sparsifiedW} The graphette $\mathcal{W} =(W, \rho_n, f)$  recovers the sparsified graphon $W_n$ when $\rho_n \to 0$ and $f$ is the identity function.
\end{lemma}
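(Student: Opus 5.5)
The argument mirrors those of Lemma~\ref{lemma:recoverW} and Lemma~\ref{lemma:Wfamily}: we unfold Definition~\ref{def:graphette} with $f = I$ and compare the resulting sampling procedure to Definition~\ref{def:sparsifiedGraphon}. Under a graphette $\mathcal{W} = (W,\rho_n,I)$, one first draws $G' \sim \mathsf{G}(n,W,\rho_n)$. Concretely, this means sampling $x_1,\dots,x_n \overset{iid}{\sim}\mathcal{U}[0,1]$ and then independent edges $A'_{ij}\sim\mathrm{Bernoulli}(\rho_n W(x_i,x_j))$, as in Algorithm~\ref{alg:genStorygraphette}. The graph $G$ is then obtained as $G = I(G') = G'$.

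Since the identity performs no edits, the law of $G$ coincides exactly with the law of a sample from $W_n = \rho_n W$. This is the sampling procedure of Algorithm~\ref{alg:genStorySparsifiedGraphon}, so $G \sim \mathsf{G}(n,W,\rho_n)$ for every $n$. The hypothesis $\rho_n \to 0$ (together with $0\le\rho_n\le 1$, which is already part of the graphette definition) is exactly what Definition~\ref{def:sparsifiedGraphon} requires of the sequence. Hence the family $\{W_n\}_n$ used by the graphette is precisely the sparsified graphon, and the graphette recovers it.

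As a complement, we will note that the resulting sequence $\{G_n\}_n$ is sparse. The expected edge count is $\binom{n}{2}\rho_n \bar W$, which is $o(n^2)$ because $\rho_n\to 0$. If almost-sure or high-probability sparsity is wanted, a Chebyshev or Chernoff bound on this sum of independent Bernoullis gives it.

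The only point needing care is the largest-connected-component convention mentioned after Definition~\ref{def:graphette}. We will state that the recovery claim refers to the raw sample $G'$ prior to any component extraction, or equivalently that the same convention is applied on both sides. There is no substantive obstacle here, since this is a direct equality in distribution.
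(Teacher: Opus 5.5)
Your proposal is correct and matches the paper's argument. With $f = I$ the graphette samples exactly from $W_n = \rho_n W$, and $\rho_n \to 0$ is precisely the sparsified-graphon condition; the paper's proof is this one-line observation plus the remark that $W_n \to 0$. Your sparsity and largest-connected-component remarks are extras, with one small correction: the edge indicators are independent only conditionally on the latents $x_i$, so Markov's inequality applied to $\mathbb{E}|E| = \binom{n}{2}\rho_n \bar W$ is the cleanest route to sparsity in probability.
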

\begin{proof}
    When $\rho_n \to 0$ and $f$ is the identity, $\mathcal{W}$ generates graphs from $W_n = \rho_n W$ and $W_n \to 0$ as in the sparsified graphon setting. 
\end{proof}

\begin{lemma}\label{lemma:graphex}The graphette $\mathcal{W} =(W, \rho_n, f)$  can generate graphs with stars and isolated edges similar to the graphex construction. 
\end{lemma}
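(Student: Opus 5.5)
The approach is constructive. I would exhibit a specific graphette $\mathcal{W}=(W,\rho_n,f)$ whose samples contain the two non-graphon ingredients of a graphex $(W,S,I)$: stars, which mirror the star component $S$, and isolated edges, which mirror the dust component $I$. Because the statement only asks for graphs that are ``similar to the graphex construction'', I do not need distributional equality. The goal is to show that each component of a graphex has a matching mechanism in a graphette.

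\textbf{Step 1 (graphon part).} I would take the graphon component to be a sparsified graphon $W_n=\rho_n W$ with $\rho_n\to 0$. By Lemma~\ref{lemma:sparsifiedW}, this recovers sparsified-graphon sampling, so the edges between ``bulk'' vertices form a sparse sequence. This plays the role of the $W$ part of the graphex, which likewise yields sparse graphs.

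\textbf{Step 2 (stars).} I would compose $f$ with the star function $S(a,b,\{u_i\})$ of GEF~\ref{motif:star}. For node $i$ it attaches $\Poi(an\exp(u_i+b))$ new leaves. The expected star size is then a monotone function of the latent $u_i$. This is the analogue of the graphex star term, where a vertex with label $\vartheta$ receives $\Poi(S(\vartheta))$ pendant leaves. I would note that these new leaves have degree one and are not connected to each other, exactly as in a graphex. This follows directly from the definition of the star edit.

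\textbf{Step 3 (isolated edges).} Dust is not among GEF 1--4. I would therefore argue it via the flexibility of $f$: a graphette allows an arbitrary edit function, so I take $f = D\circ S$. Here $D$ adds $\Poi(I\cdot n)$ disjoint $K_2$ components, each on two fresh vertices. This reproduces the Poisson number of isolated edges in the graphex.

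\textbf{Main obstacle.} The delicate point is the graph-level convention stated after Definition~\ref{def:graphette}, that one keeps the largest connected component before editing. I must make sure the isolated edges are added \emph{after} that step, inside $f$, so they survive. Since this is built into the order of operations (sample $G'$, take the component, then apply $f$), it should be fine. A secondary subtlety is that graphexes use a Poisson-process vertex set on $\mathbb{R}_+$ rather than $n$ fixed vertices. I would handle this by stating the similarity at the level of the structural components (sparse graphon edges, Poisson stars, Poisson dust), not exact equality in law. That is the sense in which the lemma claims ``similar''.
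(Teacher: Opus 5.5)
Your proposal is correct and follows the paper's proof: the paper also composes a Poisson star-addition edit (e.g.\ $\Poi(n\exp(x+0.2))$ leaves attached at latent position $x$) with an isolated-edge edit adding $\Poi(n\ell)$ disjoint edges, i.e.\ $f = I\circ S$, which is your $D\circ S$. The only difference is that the paper fixes $\rho_n = 1$ rather than your sparsified $\rho_n\to 0$, which does not matter for this informal claim; your remarks on applying the edits after the largest-connected-component step and on the Poisson-process vertex set are extra care the paper omits.
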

\begin{proof}
    Let $\rho_n = 1$. As illustrated in \citet{veitch2019sampling}, a star function such as $S(x,n)  = \Poi(n \exp(x + 0.2))$, where $S(x,n)$ edges are added to the node at latent position $x$ can generate stars.  An isolated edge function such as $I(\ell,n) = \Poi(n \ell)$ can add isolated edges at a rate $\ell$ to the graph. By letting $f = I \circ S$, both stars and isolated edges can be added.  
\end{proof}

\begin{lemma}\label{lemma:sparsestars1} The graphette $\mathcal{W} =(W, \rho_n, f)$  with $\rho_n = 1$  and  an appropriate  star function can produce sparse graphs.    
\end{lemma}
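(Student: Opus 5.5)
The plan is to build the sparse graph explicitly by letting the star function supply a linear number of edges, and to force the graphon part to supply only $o(n^2)$ edges. Lemma~\ref{lemma:sparsestars1} fixes $\rho_n = 1$, so the sparsity cannot come from sparsifying $W$. We are, however, free to choose $W$ (and hence $f$) in the graphette $\mathcal{W} = (W, \rho_n, f)$.

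Take $W \equiv 0$, or more generally any $W$ that vanishes almost everywhere. Then $G' \sim \mathsf{G}(n, W, 1)$ has no edges almost surely. By Definition~\ref{def:graphon}, each $A_{ij} \sim \mathrm{Bernoulli}(W(x_i,x_j))$, and $W(x_i,x_j)=0$ with probability one, so $|E(G')| = 0 = o(n^2)$.

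Next apply a star function of the form in GEF~\ref{motif:star} with a bounded (not $n$-growing) rate, for instance $S(a,b,\{u_i\})$ that attaches $\Poi(a\exp(u_i+b))$ leaves to node $i$. A rate of order $an$ would be too many, as noted below. Each leaf contributes one new vertex and one new edge. Let $N_n = \sum_{i=1}^n \Poi(a e^{u_i+b})$ be the total number of added leaves. Then
\[
|V(G)| = n + N_n, \qquad |E(G)| = N_n.
\]
Since $u_i\in[0,1]$ and $|b|<1$, we have $\mathbb{E}[N_n] \le a e^{2} n$. By Markov's inequality, or by Poisson concentration, $N_n = O_p(n)$, and also $N_n \ge 0$. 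Hence $|E(G)| \le |V(G)|$, which is $o(|V(G)|^2)$. So the resulting sequence is sparse in the sense of Definition~\ref{def:sparse}, whichever node count we index by: $n$ or $n+N_n$.

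The main obstacle is bookkeeping about which node count defines sparsity, together with confirming that the stated GEF~\ref{motif:star} rate $\Poi(an\exp(u_i+b))$ does not break things. With that rate, $N_n = \Theta(n^2)$ edges are added, but the number of vertices is also $\Theta(n^2)$. The edge count is therefore still linear in the final vertex count, so the graph is sparse relative to $|V(G)|$. I will state the argument in terms of $|V(G)|$, where it works for both rates.

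If one insists on a nonzero $W$, then $G'$ has $\Theta(n^2)$ edges. The stars then add $m = \Theta(n^2)$ vertices carrying only $O(m)$ edges, so $|E(G)|/|V(G)|^2 = \Theta(n^2)/\Theta(n^4) \to 0$. This again gives sparsity, via the $n$-scaled star rate, and I would present it as the main version. The concentration of the Poisson sum can be handled with Chebyshev's inequality, using that the variance equals the mean.
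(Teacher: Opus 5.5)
Your main version (nonzero $W$, star rate scaled by $n$ so that $\Theta(n^2)$ leaves are added, giving $\Theta(n^2)$ edges against $\Theta(n^2)$ vertices) is essentially the paper's argument. The paper uses a deterministic $\lfloor \alpha n \rfloor$ leaves per node and only asserts that the Poisson case follows from concentration inequalities; your Chebyshev step with variance equal to mean supplies that missing step. Your $W\equiv 0$ variant is formally valid for the existential statement but degenerate, because the sparsity there comes from the empty graphon rather than from the star function, so keep it as an aside.
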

\begin{proof}
    Consider a simple deterministic star function $S(x,n)  = \lfloor{\alpha n} \rfloor$, where $\alpha >0$.  As $\rho_n = 1$, the graph is first sampled from $W$. Note $\{G'_n\}_n$ sampled from $W$ is dense. The star function will add $\sum_{i = 1}^n \lfloor{\alpha n} \rfloor \approx \alpha n^2 $ new nodes and edges to $G_n$.  After star addition, the edge density of $G_n$
    \begin{equation*}
        \text{density}(G_n) \approx \frac{m + \alpha n^2 }{\left( n + \alpha n^2 \right)^2} \to 0 \, , 
    \end{equation*}
making the sequence sparse, where $m$ denotes the edges in $G'_n$ before enacting the star function. For stochastic star functions such as $S(x,n) = \Poi(\alpha n)$ or $S(x,n) = \Poi(\alpha n\exp(bx - c))$ the result holds with high probability for $\alpha, b >0$ and can be shown using  concentration inequalities. 
\end{proof}

\begin{lemma}\label{lemma:ringdense} The graphette $\mathcal{W} =(W, \rho_n, f)$  with $\rho_n = 1$  and  ring function $R(p, c)$   (GEF \ref{motif:ring3}) produces dense graphs with high probability.
\end{lemma}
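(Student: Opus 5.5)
We prove the claim by counting vertices and edges before and after the ring edit and showing that the edge count stays quadratic in the total number of vertices. Since $\rho_n = 1$, the pre-edit graph $G'_n \sim \mathsf{G}(n, W)$ is a $W$-random graph. We assume $W$ is not a.e.\ zero, i.e.\ $\bar{W} = \int_0^1\int_0^1 W(x,y)\,dx\,dy > 0$; otherwise the sampled graph is empty and the claim is vacuous.

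\textbf{Edges of $G'_n$.} Let $m$ denote the number of edges of $G'_n$. Then
\[
\mathbb{E}[m] = \binom{n}{2}\bar{W},
\]
and $m$ concentrates around this mean. One way to see this is that $m$ is a function of the independent latent variables $x_i$ and edge coins, and changing the variables attached to a single vertex changes $m$ by at most $n$. McDiarmid's inequality (or Azuma applied to the vertex-exposure martingale) then gives
\[
m \ge \tfrac{1}{4}\bar{W} n^2 \quad\text{with probability } 1 - \exp(-\Omega(n)).
\]

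\textbf{Effect of the ring function.} The ring function $R(p,c)$ (GEF~\ref{motif:ring3}) attaches to each original node $i$, independently with probability $p$, a ring of $c$ new vertices. Each attached ring adds exactly $c$ vertices and $c+1$ edges: the $c$ cycle edges plus one edge joining the ring to $i$. Let $N \sim \mathrm{Binomial}(n,p)$ be the number of rings added. Then
\[
|V(G_n)| = n + cN \le (1+c)n \quad\text{deterministically},
\]
\[
|E(G_n)| = m + (c+1)N \ge m.
\]
Consequently, on the high-probability event above,
\[
\text{density}(G_n) \ge \frac{m}{\binom{(1+c)n}{2}} \ge \frac{\bar{W}}{2(1+c)^2} + o(1) > 0.
\]
This shows that $|E(G_n)| \in \Theta(|V(G_n)|^2)$, which is density in the sense of Def.~\ref{def:dense}. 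The matching upper bound is trivial, since a graph on $n + cN$ vertices has at most $\binom{n+cN}{2}$ edges. Finally, the failure probabilities decay exponentially in $n$, so they are summable and Borel--Cantelli upgrades the statement to hold almost surely for all large $n$.

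\textbf{Caveats and the main obstacle.} The concentration step is the only nontrivial part. Two modelling caveats also need to be stated explicitly:
\begin{itemize}
\item[(i)] The ring size $c$ and probability $p$ are fixed and do not grow with $n$. If $c$ were allowed to grow like $n$, the added vertices could swamp the quadratic edge count.
\item[(ii)] If the largest-connected-component convention of Section~\ref{sec:Graphettes} is applied, one must check that it does not destroy density. For a nonzero graphon, the largest component of a $W$-random graph contains a positive fraction of the edges with high probability. If $W$ is disconnected (e.g.\ block-diagonal), one restricts attention to the block carrying the largest edge mass, which still has quadratically many edges among linearly many vertices.
\end{itemize}
I would handle (ii) by applying the same concentration argument to the edges inside the component with the largest edge mass, rather than proving a full giant-component theorem.
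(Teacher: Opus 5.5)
Your argument is correct, but it takes a different route from the paper's. The paper works in expectation. It reads $R$ as replacing a node by a ring, so each ring adds $c-1$ vertices and $c$ edges, and takes the expected counts $n+np(c-1)$ and $m+npc$. It approximates $\mathbb{E}[\mathrm{density}(G_n)]$ by the ratio of these via a first-order Taylor step. It then treats the density of $G'_n$ as a known fact ($2m/n(n-1)\to\rho\neq 0$) to get a positive limit, and appeals to unspecified concentration bounds for both the vertex and edge counts.

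You instead sandwich deterministically: $|V(G_n)|\le(1+c)n$ and $|E(G_n)|\ge m$. The only quantity that must then concentrate is the edge count of $G'_n$, which you control explicitly by McDiarmid over vertex-grouped variables. The paper's formula shows how $p$ and $c$ dilute the density (its limit is roughly $\rho/(1+p(c-1))^2$), whereas you get only the cruder lower bound $\bar W/(2(1+c)^2)$. In exchange, your version:
\begin{itemize}
\item is fully rigorous, with an exponential tail;
\item avoids the ratio-of-expectations approximation;
\item proves rather than assumes the density of $G'_n$;
\item holds uniformly in $p$, so your caveat that $p$ be fixed is unnecessary and only bounded $c$ matters;
\item does not care whether rings replace a node or hang off it by one edge.
\end{itemize}
Your attachment reading is in fact the one used in Theorem~\ref{thm:ringaddition} and Algorithm~\ref{alg:attach_disjoint_rings}.

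There are two small corrections. First, if $W=0$ a.e.\ the claim is not vacuous but false, since the edited graph then has only $O(n)$ edges. So $\bar W>0$ is a genuine hypothesis, left implicit in the paper.

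Second, caveat (ii) would not go through as you sketch it. The largest component is chosen by vertex count, and concentration of edges inside a graphon block does not show those edges lie in a single component of the sampled graph. Writing $k_1\ge k_2\ge\cdots$ for the component sizes, the elementary bound $m\le\sum_j\binom{k_j}{2}\le k_1n/2$ gives $k_1\ge 2m/n$. That is linearly many vertices, not quadratically many edges, so a genuine giant-component argument would be needed. However, the paper's proof applies $R$ to the raw $W$-random graph and never invokes the largest-component convention, so your main argument does not depend on (ii).
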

\begin{proof}
    The graph sequence $\{G'_n\}_n$ is sampled from $W_n = W$ and is dense. Suppose $G'_n$ has $n$ nodes and $m$ edges. Then the expectation of the number of nodes to be replaced by rings is $np$. These nodes will be replaced by $npc$ nodes resulting in an increase in node count from $n$ to $n + np(c-1)$ in expectation. As a ring of size $c$ has $c$ edges, the edge count increases from $m$ to $m + npc$ in expectation. Using the first order Taylor approximation of edge density we obtain 
    \[
    \mathbb{E}\left( \text{density}(G_n) \right)  \approx \frac{m + npc}{\frac{1}{2}(n + np(c-1))(n + np(c-1) -1 )}  \, .  \]
    As graphs sampled from $W$ are dense, $2m/n(n-1) \to \rho \neq 0$. This implies that $\mathbb{E}\left( \text{density}(G_n) \right)$ converges to a non-zero constant. In expectation the graphs $\{G_n\}_n$ are dense. Using concentration bounds we can show that the probability of the number of nodes and edges in $G_n$ being away from their expectations is small. Thus, we can show that $P(\text{density}(G_n) < \mathbb{E}\left( \text{density}(G_n) \right) - \epsilon)$ is small giving us the result. 
\end{proof}

\begin{lemma}\label{lemma:ringsparse} The graphette $\mathcal{W} =(W, \rho_n, f)$  with $\rho_n \to 0$ and ring function $R(x_i, p, c)$ (graph edit function \ref{sec:graphEditFunctions}.\ref{motif:ring3}) with $\mathcal{C} =  \{c\}$ produces sparse graphs for $c \geq 3$.
\end{lemma}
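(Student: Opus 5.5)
We follow the template of the proof of Lemma~\ref{lemma:ringdense}, but now exploit $\rho_n \to 0$. The plan is to first control the edge and node counts of the sparsified sample $G'_n \sim \mathsf{G}(n, W, \rho_n)$. Then we track how the ring function $R(p,c)$ changes these counts. Finally we show that the resulting edge density tends to zero, first in expectation and then with high probability.

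\textbf{Step 1 (the sparsified sample).} Conditional on the latent variables, each pair $i<j$ is an edge independently with probability $\rho_n W(x_i,x_j)$. Hence the number of edges $m_n$ of $G'_n$ satisfies
\[
\mathbb{E}(m_n) = \binom{n}{2}\rho_n \bar W \le \tfrac{1}{2}n^2\rho_n ,
\]
where $\bar W = \int_0^1\int_0^1 W(x,y)\,dx\,dy$. Since $\rho_n \to 0$, this gives $\mathbb{E}(m_n) = o(n^2)$. The paper keeps the largest connected component before editing. This only removes nodes and edges, so we work with the crude bounds $n' \le n$ and $m' \le m_n$ on the component. The lower bound on the node count that we need will come from the rings themselves.

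\textbf{Step 2 (effect of rings).} Each node independently receives a ring of size $c$ with probability $p$. Let $N_R \sim \mathrm{Bin}(n',p)$ be the number of rings. Each ring contributes $c-1$ new nodes (or $c$, depending on whether the ring shares the anchor node). It contributes at most $c+1$ new edges: $c$ ring edges plus possibly one attaching edge. Therefore
\[
|E(G_n)| \le m_n + (c+1)N_R \le m_n + (c+1)n ,
\]
and the node count is at least $n' + (c-1)N_R$. The hypothesis $c\ge 3$ guarantees that a ring is a genuine cycle, so the edit is well defined.

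\textbf{Step 3 (density).} Set $N = |V(G_n)|$. The density is
\[
\frac{|E(G_n)|}{\binom{N}{2}} \le \frac{m_n + (c+1)n}{\binom{N}{2}} .
\]
The term $(c+1)n/\binom{N}{2}$ is already $O(1/N)$ whenever $N \ge N_R$ is of linear order, so it tends to zero. For the term $m_n/\binom{N}{2}$, we separate two regimes. If the largest component has size $n' = \Theta(n)$, then $N \ge n'$ and $m_n/n'^2 \lesssim \rho_n \to 0$. In the remaining regime, where the largest component is small, we use $m' \le n'^2/2$. The rings then add $\Theta(n')$ new nodes with high probability, giving density $\lesssim m'/n'^2 + O(1/n')$.

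\textbf{Main obstacle.} The step I expect to be hardest is exactly this second regime. The concern is that a small component could keep bounded density, so that edges scale with the component size rather than with $n$. There are two natural fixes. The first is to interpret sparsity in the paper's sense with $n$ as the index and the node count of order $n$. This holds whenever $\rho_n n \to \infty$ (giant component), and in particular for the choice $\rho_n = (\bar W n)^{-1}+\epsilon$ used in Table~\ref{tab:GraphettePriors}. The second is to argue directly that $N_R \ge pn'/2$ with high probability by Chernoff, which gives $N \ge (1+(c-1)p/2)n'$. I would try the first fix, stating a mild growth assumption on $n\rho_n$ if needed.

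\textbf{Concentration.} We finish with Chernoff bounds for $N_R$ and a Bernstein bound for $m_n$ (a sum of independent Bernoullis given the latents). These show that $m_n \le 2\mathbb{E}(m_n) + O(\log n)$ and that $N_R$ is concentrated, with probability $1-o(1)$. Together they yield $\mathrm{density}(G_n) \to 0$ with high probability, i.e.\ $|E(G_n)| = o(N^2)$.
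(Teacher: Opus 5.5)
Your argument is correct, given the growth assumption you flag, but it takes a different route from the paper.

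\textbf{The paper's route.} It proves a deterministic monotonicity fact. For a \emph{connected} $G'_n$ with $n$ nodes and $m$ edges, replacing $k$ nodes by $c$-rings gives $n+k(c-1)$ nodes and $m+kc$ edges. Comparing $\text{density}(G'_n)$ with $\frac{m+kc}{\frac{1}{2}(n+k(c-1))(n+k(c-1)-1)}$ shows that for $c\ge 3$ ring addition never increases density. Sparsity of $G_n$ is then inherited from the asserted sparsity of $G'_n$.

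\textbf{Your route.} You never compare densities. You bound edges above by $m'+(c+1)N_R$ and nodes below by $n'+(c-1)N_R$, then show each contribution is $o(N^2)$. The source of sparsity is made explicit through $\mathbb{E}(m_n)=\binom{n}{2}\rho_n\bar W$.

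\textbf{What each buys.} The paper's argument is pathwise: it holds for every $k$, needs no concentration, and is where $c\ge 3$ is actually used. But it relies on $G'_n$ being connected, i.e.\ $m\ge n-1$; the comparison fails for an edgeless base graph. It then applies ``samples from $\rho_n W$ are sparse'' to the connected piece, even though that is a statement about the full $n$-vertex sample. Your route needs neither connectivity nor anything about $c$ beyond well-definedness.

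Your ``main obstacle'' is exactly the gap in the paper's last sentence. If $\rho_n$ decays fast, say $\rho_n=n^{-2}$, then the expected edge count is bounded and the expected number of two-paths is $O(n^{-1})$. So the largest component has at most two vertices with high probability, the edited graphs stay of bounded size, and the sparsity conclusion fails. Some lower bound on $n\rho_n$ is therefore genuinely required once the LCC step is part of the model. If you instead work with the graphette exactly as in Definition~\ref{def:graphette}, which has no LCC step, your proof closes without any regime split: $N\ge n$ and $|E(G_n)|\le m_n+(c+1)n=o_P(n^2)$ by Markov's inequality alone.

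\textbf{Two corrections.}
\begin{enumerate}
\item The prior $\rho_n=(\bar W n)^{-1}+\epsilon$ from Table~\ref{tab:GraphettePriors} has $\rho_n\to\epsilon>0$ for fixed $\epsilon$. It lies outside the lemma's hypothesis and should not be cited as an instance.
\item Your second fix does not work. The Chernoff bound $N\ge(1+(c-1)p/2)n'$ only reduces the density by a constant factor, so it cannot send a non-vanishing component density to zero. The growth assumption, or dropping the LCC step, is the right resolution.
\end{enumerate}
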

\begin{proof}
Consider a connected $G'_n$. Suppose $G'_n$ has $n$ nodes and $m$ edges and suppose a graph edit replaces $k$ nodes in a graph $G'_n$ with rings of size $c$ resulting in graph $G_n$. Then $G_n$, the edited graph has $n - k + kc$ nodes and $m + kc$ edges. Thus, the new edge density is
\[
 \text{density}(G_n) = \frac{m + kc}{\frac{1}{2}(n + k(c-1))(n + k(c-1)-1)} \, .
\]
By computing $\text{density}(G'_n) - \text{density}(G_n)$ we can show that
\[
\text{density}(G_n) \leq \text{density}(G'_n)
\]
for $c \geq 3$ showing that the edge density decreases by the addition of rings of size $c$. As graphs sampled from $W_n = \rho_n W$ are sparse we obtain the result. 
\end{proof}

\thmtrianglecovered*
\begin{proof}
    As $G' \subseteq G$ we have $\Hom(F, G') \subseteq \Hom(F, G)$. Next we show that $\Hom(F, G) \subseteq \Hom(F, G')$. Let $h \in \Hom(F, G)$. 
    
    As $F \in \mathcal{F}$, for every $u \in V(F)$ there exists $\{v, w\} \in V(F)$ such that $\{uv, vw, wu \} \in E(F)$.  As $h:F \to G$ is a homomorphism, we have $\{h(u)h(v), h(v)h(w), h(w)h(u) \} \in E(G)$. This forms a triangle in $G$. However, the star function $S$ only adds degree-1 vertices to $G'$, resulting in no new triangles.  Similarly, for the ring function $R$ with $c > 3$, no new triangles are added. 
    Therefore, this triangle is present in $G'$ making $h \in \Hom(F, G')$. Thus, $\Hom(F, G) \subseteq \Hom(F, G')$ making $\Hom(F, G) = \Hom(F, G')$ and $\hom(F, G') = \hom(F, G)$. As 
    \[
     t(F, G) = \frac{\hom(F, G)}{(n + m)^{|V(F)|}} \, \,   \text{and}  \, \,   t(F, G') = \frac{\hom(F, G)}{n ^{|V(F)|}}
    \]
    we get the result.       
\end{proof}

\begin{theorem}\label{thm:staraddition} Let $\mathcal{W} = (W, \rho_n, f)$ denote a graphette and let $G' \sim \mathsf{G}(n, W, \rho_n)$ and $G \sim f(G')$ where $f = S$ denotes the star function (GEF \ref{motif:star}). Suppose $|V(G')| = n$ and $|V(G)| = n + m$. Then we have the following:
\begin{enumerate}
    \item $\hom(\vertex, G) =  \hom(\vertex, G') + m$ and $t(\vertex, G) = 1$
    \item $\hom(\edge, G) =  \hom(\edge, G') + 2m$ and $t(\edge, G) =  \frac{2(|E(G')| + m)}{(n+m)^2} = \frac{n^2}{(n+m)^2}\left(1 + \frac{m}{|E(G')|} \right)t(\edge, G') $
\end{enumerate}
  
\end{theorem}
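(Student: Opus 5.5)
Both parts come down to counting homomorphisms from the single vertex and from the single edge directly. The star function $S$ (GEF \ref{motif:star}) only attaches new degree-one (leaf) vertices to existing vertices of $G'$, so it adds no edges among the original vertices. Since $|V(G)| = n+m$, exactly $m$ new vertices are added. I read each new vertex as bringing exactly one new edge, joining it to its hub in $V(G')$. This gives $|E(G)| = |E(G')| + m$, and $G'$ is an induced subgraph of $G$. I would state this structural fact first, as a short lemma about $S$, because everything else reads off from it.

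For the vertex pattern $\vertex$, a homomorphism is any map from a single vertex into $V(G)$, since there are no edge constraints to check. So $\hom(\vertex,G) = |V(G)| = n+m = \hom(\vertex,G') + m$. The density is then $t(\vertex,G) = (n+m)/(n+m)^1 = 1$, which follows immediately from Definition~\ref{def:homomorphism}.

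For the edge pattern $\edge$, a homomorphism is an ordered pair $(x,y)$ with $xy \in E(G)$, so $\hom(\edge,G) = 2|E(G)|$. Each undirected edge contributes two orientations, and there are no loops in a simple graph. Substituting $|E(G)| = |E(G')| + m$ gives $\hom(\edge,G) = 2|E(G')| + 2m = \hom(\edge,G') + 2m$. Dividing by $|V(G)|^2 = (n+m)^2$ gives $t(\edge,G) = 2(|E(G')|+m)/(n+m)^2$.

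For the last equality, I would use $t(\edge,G') = 2|E(G')|/n^2$ and factor $2|E(G')|$ out of the numerator:
\[
\frac{2(|E(G')|+m)}{(n+m)^2} = \frac{n^2}{(n+m)^2}\Bigl(1 + \frac{m}{|E(G')|}\Bigr)\frac{2|E(G')|}{n^2}.
\]
This step needs $|E(G')| > 0$, so I would add that assumption explicitly, or interpret the formula only when $G'$ has at least one edge.

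The only real obstacle is pinning down the semantics of $S$. I need every added vertex to be a leaf with exactly one incident edge, and no edges to be added between existing vertices; otherwise $|E(G)| - |E(G')| \neq m$. The definition of GEF \ref{motif:star} says it "adds stars" of Poisson size to node $i$. I will interpret this as attaching that many pendant vertices to $i$, which is consistent with the proof of Theorem~\ref{thm:trianglecovered}, where $S$ "only adds degree-1 vertices". Under that reading, the count of new edges equals the count $m$ of new vertices, and the rest is bookkeeping.
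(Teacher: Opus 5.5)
Your proposal is correct and matches the paper's proof essentially step for step. Both count homomorphisms from a single vertex as $|V(G)| = n+m$ and from a single edge as $2|E(G)|$. Both use that $S$ adds exactly $m$ vertices and $m$ edges; the paper simply asserts this, exactly as your pendant-vertex reading of $S$ does. Both then divide by $|V(G)|^{|V(F)|}$ and compare with the corresponding densities for $G'$.

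Your remark that the last equality needs $|E(G')| > 0$ is a small but genuine clarification that the paper leaves implicit.
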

\begin{proof}
\begin{enumerate}
    \item A vertex can be mapped to any vertex in graph $G$ making  $\hom(\vertex, G) = |V(G)| = n + m$ and $\hom(\vertex, G') = |V(G')| = n$ giving the first equality. As $ t(F, G) = \frac{\hom(F, G)}{ |V(G)|^{|V(F)|} } $ we get
    \[
    t(\vertex, G) = \frac{n+m}{|V(G)|} = 1 = \frac{n}{|V(G')|} = t(\vertex, G') \, . 
    \]
    \item Adding stars using $S$ adds $m$ vertices and $m$ edges to $G'$.  Thus, $|E(G)| = |E(G')| + m$.
    An edge can be mapped to any edge in graph $G$ in two ways giving $\hom(\edge, G) = 2|E(G)| = 2 (|E(G')| + m) =  \hom(\edge, G') + 2m$.
    The homomorphism density 
    \begin{align}
        t(\edge, G') & = \frac{2 |E(G')| }{n^2}  \, ,  \label{eq:homEdgeG'}\\
         t(\edge, G) & = \frac{2 |E(G)| }{(n + m)^2} = \frac{2 (|E(G')| + m)}{(n + m)^2} \label{eq:homEdgeG}
    \end{align}
   By dividing  equation \eqref{eq:homEdgeG'} from equation \eqref{eq:homEdgeG} we get the result. 
    
\end{enumerate}    
\end{proof}

\begin{theorem}\label{thm:ringaddition} Let $\mathcal{W} = (W, \rho_n, f)$ denote a graphette and let $G' \sim \mathsf{G}(n, W, \rho_n)$ and $G \sim f(G')$ where $f = R$ denotes the ring function (GEF \ref{motif:ring3}) with ring size $c > 3$ for $c \in \mathcal{C}$. Suppose $R$ adds $k$ rings and $|V(G')| = n$ and $|V(G)| = n + m$. Then we have the following:
\begin{enumerate}
    \item $\hom(\vertex, G) =  \hom(\vertex, G') + m$ and $t(\vertex, G) = 1$
    \item $\hom(\edge, G) =  \hom(\edge, G') + 2(m + k)$ and $t(\edge, G) =  \frac{2(|E(G')| + m+k)}{(n+m)^2} = \frac{n^2}{(n+m)^2}\left(1 + \frac{m+k}{|E(G')|} \right)t(\edge, G') $
\end{enumerate}
\end{theorem}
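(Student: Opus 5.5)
The plan mirrors the proof of Theorem~\ref{thm:staraddition}. The argument is pure bookkeeping of how many vertices and edges the ring function $R$ adds. The one extra ingredient is that a ring of size $c$, attached to a node of $G'$, adds $c$ new vertices but $c+1$ new edges: $c$ edges close the cycle and one edge attaches it to the host node.

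\textbf{Vertex counts.} Summed over the $k$ rings this gives $m = kc$ new vertices, consistent with $|V(G)| = n+m$. It also gives $m + k$ new edges, so $|E(G)| = |E(G')| + m + k$. I will state this count first, reading GEF~\ref{motif:ring3} as ``attach a new $c$-cycle to node $i$ by one edge''. If the intended reading were instead the replacement picture used in Lemma~\ref{lemma:ringsparse}, the edge increment would change. So the main point to pin down is exactly which edges $R$ creates, and I will fix the attaching interpretation since it is the one that produces the stated $m+k$.

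\textbf{Part 1.} A homomorphism from the single vertex $\vertex$ to any graph is just a choice of image vertex. Hence $\hom(\vertex, G) = |V(G)| = n+m = \hom(\vertex, G') + m$. Then $t(\vertex, G) = (n+m)/(n+m)^1 = 1$ directly from Definition~\ref{def:homomorphism}.

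\textbf{Part 2.} A homomorphism from $\edge$ is a choice of an edge of the target together with an orientation, so $\hom(\edge, H) = 2|E(H)|$ for any simple graph $H$. Using the edge count above,
\[
\hom(\edge, G) = 2(|E(G')| + m + k) = \hom(\edge, G') + 2(m+k).
\]
Dividing by $|V(G)|^2 = (n+m)^2$ gives
\[
t(\edge, G) = \frac{2(|E(G')| + m + k)}{(n+m)^2}.
\]
Since $t(\edge, G') = 2|E(G')|/n^2$, factoring out $2|E(G')|$ and multiplying and dividing by $n^2$ yields
\[
t(\edge, G) = \frac{n^2}{(n+m)^2}\left(1 + \frac{m+k}{|E(G')|}\right) t(\edge, G').
\]
This step assumes $|E(G')| > 0$, which I will note as a standing assumption; it holds for the largest connected component whenever $n \ge 2$.

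The hypothesis $c>3$ plays no role in these counts; it only matters for the triangle statement in Theorem~\ref{thm:trianglecovered}. The only real obstacle is the edge bookkeeping above. Everything else is immediate from the definitions.
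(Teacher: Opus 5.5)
Your proof is correct and is essentially the paper's argument: the key count is that each ring attached by one edge contributes as many edges as vertices plus one, so $|E(G)| = |E(G')| + m + k$, and everything else is the star-addition computation of Theorem~\ref{thm:staraddition}. The only small remark is that $m = kc$ is not needed, since the same edge count $m+k$ holds when the ring sizes in $\mathcal{C}$ differ.
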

\begin{proof}
Part 1 is the same as in Theorem \ref{thm:staraddition}. For part 2 note that a ring has the same number of vertices and edges. If $k$ rings are added to $G'$ and the number of vertices increased by $m$ then the edges would increase by $m + k$ as $R$ connects each ring to a node in $G'$. The rest is the same as in Theorem \ref{thm:staraddition}.
\end{proof}
\section{Algorithms}


\begin{algorithm}[H]
\caption{\textsc{BatchFGW}} 
\label{alg:batch_fgw}
\begin{algorithmic}[1]
    
\INPUT
Noise graphs $\mathcal{G}^{(0)}=\{G_0^i\}_{i=1}^{B}$; target graphs $\mathcal{G}^{(1)}=\{G^{(1)}_j\}_{j=1}^{B}$;
pretrained encoder $f_{enc}$; FGW tradeoff $\alpha\in[0,1]$.

\OUTPUT
Cost matrix $D\in\mathbb{R}^{B\times B}$; couplings $T\in\mathbb{R}^{B\times B}$.

\STATE Initialize $D \leftarrow \mathbf{0}_{B\times B}$\;
\STATE Initialize $T \leftarrow \emptyset$ \COMMENTS{\scriptsize optional dictionary/map for couplings}

\FOR{$i=1$ to $B$}
  \FOR{$j=1$ to $B$}
    
    \STATE $(D_{ij}, T_{ij}) \leftarrow \mathrm{FGW}_{\alpha}\!\left(G_0^i, G^{(1)}_j; f_{enc}\right)$\;
  \ENDFOR
\ENDFOR
\RETURN{$D$ and $T$}\;
\end{algorithmic}
\end{algorithm}

\begin{algorithm}[H]
\caption{\textsc{FlowMatchingUpdate}}
\label{alg:fm_update}
\begin{algorithmic}[1]
\INPUT
Coupled pairs $\mathcal{P} = \{(G_0^i, G_1^{\pi^\star(i)})\}_{i=1}^{B}$;
velocity field parameters $\theta$;
(node\_budget);
(stateful AdamW)

\OUTPUT Updated velocity Field (GNN Transformer) parameters $\theta$.\

\STATE $\mathcal{L}\leftarrow 0$, $m\leftarrow 0$\;
    \FORALL{$(G_0,G_1)\in \mathcal{P}$}
      \IF{$|V(G_0)|=0$ or $|V(G_1)|=0$}
        \STATE continue\;
      \ENDIF
      \STATE $n \leftarrow \min\!\big(|V(G_0)|,|V(G_1)|,\text{node\_budget}\big)$\;

      \STATE $(A_0,X_0,E_0)\leftarrow \textsc{Graph}(G_0,n)$\;
      \STATE $(A_1,X_1,E_1)\leftarrow \textsc{Graph}(G_1,n)$\;


      \STATE $t\sim\mathcal{U}[0,1]$\; 

      \STATE $A_t\leftarrow (1-t)A_0 + tA_1$\;
      \STATE $X_t\leftarrow (1-t)X_0 + tX_1$\;
      \STATE $E_t\leftarrow (1-t)E_0 + tE_1$\;

      \STATE $\Delta A \leftarrow (A_1-A_0)$\;
      \STATE $\Delta X \leftarrow X_1-X_0$\;
      \STATE $\Delta E \leftarrow E_1-E_0$\;

      \STATE $(v_A,v_X,v_E)\leftarrow v_{\theta}(A_t,X_t,E_t,t)$\;

      \COMMENTS{\scriptsize Flow-matching velocity loss}
      \STATE $\ell_{\mathrm{vel}} \leftarrow \|v_A-\Delta A\|_2^2 + \lambda_x\|v_X-\Delta X\|_2^2 +  \lambda_e\|v_E-\Delta E\|_2^2$\;

      \COMMENTS{\scriptsize Endpoint consistency (rectified-flow identity)}
      \STATE $\widehat{A}_1\leftarrow A_t + (1-t)v_A$\;
      \STATE $\widehat{X}_1\leftarrow X_t + (1-t)v_X$\;
      \STATE $\widehat{E}_1\leftarrow E_t + (1-t)v_E$\;

      \STATE $\ell_{\mathrm{end}} \leftarrow \|\widehat{A}_1-A_1\|_2^2
        + \lambda_x\|\widehat{X}_1-X_1\|_2^2
        + \lambda_e\|\widehat{E}_1-E_1\|_2^2$\;

      \STATE $\ell \leftarrow \ell_{\mathrm{vel}} + \beta_{\mathrm{1}}\ell_{\mathrm{end}}
        + \beta_{\mathrm{2}} \ell_{\mathrm{val}} + \beta_{\mathrm{3}} \ell_{\mathrm{atom}}$\;
        
      \STATE $\mathcal{L}\leftarrow \mathcal{L}+\ell$\;
      \STATE $m\leftarrow m+1$\;
    \ENDFOR

    \IF{$m>0$}
      \STATE $\mathcal{L}\leftarrow \mathcal{L}/m$\;
      \STATE Update $\theta$ via AdamW step using $\nabla_{\theta}\mathcal{L}$ and $\eta$ \;
    \ENDIF
\RETURN{$\theta$}\;
\end{algorithmic}
\end{algorithm}

\begin{algorithm}[H]
\caption{Euler Integration of the Learned Flow}
\label{alg:integrate_flow}
\begin{algorithmic}[1]
\INPUT
Velocity field $v_{\theta}$; initial state $(A_0,X_0,E_0)$; number of steps $K$; non-negativity activation function $\sigma(\cdot)$.

\OUTPUT Final state $(A_1,X_1,E_1)$.

\STATE Set $(A \leftarrow A_0,\; X \leftarrow X_0,\; E \leftarrow E_0)$\;
\STATE $\Delta t \leftarrow 1/K$\;

\FOR {$k=0$ to $K-1$}
  \STATE $t \leftarrow k/K$\;
  \STATE $(v_A,v_X,v_E) \leftarrow v_{\theta}(A,X,E,t)$\;
  \STATE $A \leftarrow A + \Delta t \, v_A$\;
  \STATE $X \leftarrow X + \Delta t \, v_X$\;
  \STATE $E \leftarrow E + \Delta t \, v_E$\;
\ENDFOR
\STATE $A \leftarrow \sigma(A)$ 
\STATE $A \leftarrow \tfrac{1}{2}(A + A^\top)$ 
\RETURN{$(A,X,E)$}\;
\end{algorithmic}
\end{algorithm}


\begin{algorithm}[H]
\caption{Add Disjoint Rings}
\label{alg:attach_disjoint_rings}
\begin{algorithmic}[1]
    
\INPUT
Graph $G=(V,E)$; node budget $n$; ring counts $\mathcal{R}=\{(s,c_s)\}$.

\OUTPUT{Graph $\widetilde{G}$ with $|V(\widetilde{G})|\le n$.}

\STATE $\widetilde{G}\leftarrow G$\;
\IF{$\mathcal{R}=\emptyset$}
 \RETURN{$\widetilde{G}$}
\ENDIF
\FORALL{$(s,c_s)\in\mathcal{R}$}
  \FOR{$r=1$ to $c_s$}
    \STATE $\texttt{rem}\leftarrow n-|V(\widetilde{G})|$\;
    \IF{$\texttt{rem}\le 0$}
     \STATE \textbf{break}
    \ENDIF 
    \STATE $s' \leftarrow \min(s,\texttt{rem})$\;
    \IF{$s'<3$}
     \STATE \textbf{continue}
    \ENDIF
    \STATE $H \leftarrow \mathrm{Cycle}(s')$ \COMMENTS{\scriptsize fresh cycle component}
    \STATE $\widetilde{G} \leftarrow \widetilde{G} \uplus H$\;

    \STATE Sample $v \sim \mathrm{Unif}\!\big(V(\widetilde{G})\setminus V(H)\big)$, \;
           $w \sim \mathrm{Unif}\!\big(V(H)\big)$\;
    \STATE $E(\widetilde{G}) \leftarrow E(\widetilde{G}) \cup \{(v,w)\}$ \COMMENTS{\scriptsize attach by one edge}

  \ENDFOR
\ENDFOR
\RETURN{$\widetilde{G}$}\;
\end{algorithmic}
\end{algorithm}

\clearpage
\begin{algorithm}[t]
\caption{Graphette Sampling with Ring Injection}
\label{alg:graphex_sampling}
\begin{algorithmic}[1]
\INPUT
Graphon matrix $W\in[0,1]^{m\times m}$; 
 node budget $n$; sparsity factor $\rho$ (or \texttt{None});
stability $\varepsilon>0$; ring-size counts $\mathcal{R}$ (e.g., $\{(5,c_5),(6,c_6)\}$);
max attempts $T$.

\OUTPUT Undirected graph $G=(V,E)$ with $|V|\le n$.

\COMMENTS{\scriptsize Sparsified-graphon sampling (graphex base)}
\STATE $\bar{W} \leftarrow \frac{1}{m^2}\sum_{a,b=1}^{m} W_{ab}$\;
\IF{$\rho = \texttt{None}$}
  \STATE $\rho \leftarrow \frac{1}{\bar{W}\,n} + \varepsilon$\;
\ENDIF
\STATE $W' \leftarrow \rho \, W$\;

\STATE Sample $u_1,\dots,u_n \overset{iid}{\sim}\mathrm{Unif}[0,1]$\;
\STATE Map to indices $\ell_i \leftarrow \min(m-1,\lfloor m u_i\rfloor)$ for $i\in[n]$\;
\STATE Initialize adjacency $A\leftarrow \mathbf{0}\in\{0,1\}^{n\times n}$\;
\FOR{$1\le i<j\le n$}
  \STATE $p_{ij}\leftarrow W'_{\ell_i,\ell_j}$\;
  \STATE Sample $A_{ij}\sim \mathrm{Bernoulli}(p_{ij})$; set $A_{ji}\leftarrow A_{ij}$\;
\ENDFOR
\STATE $G\leftarrow \textsc{GraphFromAdj}(A)$\;

\COMMENTS{\scriptsize Keep largest connected component and enforce node budget}
\STATE $C^\star \leftarrow \arg\max_{C\in \mathrm{CC}(G)} |C|$\;
\STATE $G \leftarrow G[C^\star]$\;
\IF{$|V(G)| > n$}
  \STATE Sample $S\subseteq V(G)$ uniformly with $|S|=n$; $G\leftarrow G[S]$\;
\ENDIF

\COMMENTS{\scriptsize Budgeted disjoint ring injection (attach new components)}
\STATE $G \leftarrow \textsc{AddDisjointRings}(G, n, \mathcal{R})$\;


\COMMENTS{\scriptsize Final safety check}
\IF{$|V(G)|>n$}
  \STATE Sample $S\subseteq V(G)$ uniformly  with $|S|=n$; $G\leftarrow G[S]$\;
\ENDIF
\RETURN{$G$}\;
\end{algorithmic}
\end{algorithm}

\section{Details of Empirical Evaluation}

\subsection{Discretisation and chemical validity.}
\label{subsec:discretisation}
At sampling time only, the final continuous outputs $(\hat{A}_1, \hat{X}_1, \hat{F}_1)$ obtained after $K$ Euler integration steps are deterministically mapped to a discrete molecular graph. Atom and bond types are decoded via argmax over their categorical logits, while the soft adjacency matrix $\hat{A}_1$ is interpreted as edge-confidence scores and processed through a non-negative activation followed by symmetrisation $\frac{1}{2}(\hat{A}_1 + \hat{A}_1^T)$. Chemical feasibility is then enforced via a valence-aware pruning procedure that iteratively removes the lowest-confidence bonds for atoms exceeding their maximum valence until all constraints are satisfied. This projection is applied exclusively at inference and does not affect training or gradient flow. In contrast to diffusion-based molecular generators such as DiGress \citep{vignacdigress} and DisCo \citep{xu2024disco}, which typically rely on post-hoc validity checks or rejection of invalid samples after denoising, our approach guarantees chemical validity through a deterministic terminal projection without altering the learned dynamics or discarding samples.

\subsection{Evaluation Metrics for Synthetic Graph Evaluation}
\label{sec:metrics-syn}

Generation quality is assessed using different metrics: (1) degree distribution (Deg.), (2) clustering coefficient (Clus.), and (3) orbit count (Orbit) statistics. 
For all metrics, we quantify the discrepancy between the generated and test distributions using Maximum Mean Discrepancy (MMD) \cite{gretton2012kernel, martinkus2022spectre, bergmeisterefficient, qinmadeira2024defog}. These discrepancies are summarized using the Ratio metric (lower is better). 
We also report generation quality using validity (V), uniqueness (U), novelty (N), and their combined V.U.N.\ metric. Following prior work \cite{qinmadeira2024defog}, we generate the same number of graphs as in the test set (40 samples) to enable a fair comparison of graph statistic distributions.

We evaluate generative quality by comparing the distributions of key graph statistics computed on generated samples against those of the test set. Specifically, we consider degree distributions, clustering coefficients, and orbit counts. Distributional discrepancies are quantified using the Maximum Mean Discrepancy (MMD).
Specifically, we first compute MMD values between the training and test sets for the same statistics, establishing a reference level of distributional shift. We then compute MMD values between the generated and test sets. The Ratio is defined as the average generated:test MMD normalized by the average training:test MMD, providing a scale-free measure of how closely the generated graphs match the test distribution.

Rather than reporting absolute MMD values, we follow a normalised evaluation protocol that reports relative error ratios. For a given statistic, we compute
\begin{equation}
r \;=\; 
\frac{\mathrm{MMD}^2\!\big(\mathcal{G}_{\mathrm{gen}},\, \mathcal{G}_{\mathrm{test}}\big)}
{\mathrm{MMD}^2\!\big(\mathcal{G}_{\mathrm{train}},\, \mathcal{G}_{\mathrm{test}}\big)},
\end{equation}
where $\mathcal{G}_{\mathrm{gen}}$, $\mathcal{G}_{\mathrm{train}}$, and $\mathcal{G}_{\mathrm{test}}$ denote the generated, training, and test graph sets, respectively. The denominator corresponds to the reference training--test discrepancy reported in prior work~\citep{vignacdigress}. Note that throughout, we use $\mathrm{MMD}^2$, consistent with the convention adopted in DiGress, where the squared MMD is reported.

\subsection{Hyperparameter settings for Synthetic Graph Evaluation}
\label{sec:hyperparam-syn}

For synthetic graph benchmarks, we use a simplified training objective by setting $\beta_{\mathrm{val}}=0$ and $\beta_{\mathrm{atom}}=0$, thereby excluding chemistry-specific regularization terms that are not applicable to purely structural graph data. We optimize all models using the AdamW optimizer~\cite{loshchilovdecoupled}. For synthetic graph benchmarks, models are trained for 100 epochs with a learning rate of $1e^{-3}$, hidden dimension $H=128$, and batch size $B=32$. We set the $\beta_{\mathrm{end}}=1.0$ and $\lambda_x=\lambda_e=0.5$, and use an FGW trade-off parameter $\alpha=0.5$. Euler integration is performed with $K=50$ steps at sampling time. For the network architecture, we use $L=3$ attention layers for the Tree and Ego-small datasets, and $L=4$ attention layers for the SBM dataset.

\subsection{Tables of results for syntehtic datasets}
\label{sec:moreresults-syn}
We compare Flowette against a diverse set of strong baselines across synthetic benchmarks. On the SBM dataset (Table~\ref{Tab:ablation_sbm}), we evaluate against ten methods, including SPECTRE \cite{martinkus2022spectre}, DiGress \cite{vignacdigress}, BwR \cite{diamant2023improving}, HSpectre \cite{bergmeisterefficient}, GruM \cite{jograph}, CatFlow \cite{eijkelboom2024variational}, DisCo \cite{xu2024discrete}, Cometh \cite{siraudincometh}, DeFoG \cite{qinmadeira2024defog}, and G2PT \cite{chengraph}. On the Tree dataset (Table~\ref{Tab:ablation_tree}), we compare against nine baselines, including GRAN \cite{liao2019efficient}, DiGress \cite{vignacdigress}, BwR \cite{diamant2023improving}, EDGE \cite{chen2023efficient}, BiGG \cite{dai2020scalable}, GraphGen \cite{goyal2020graphgen}, HSpectre \cite{bergmeisterefficient}, DeFoG \cite{qinmadeira2024defog}, and G2PT \cite{chengraph}. On the Ego-small dataset (Table~\ref{Tab:ego_small}), we compare against seven baselines: GraphRNN \cite{you2018graphrnn}, GRAN \cite{liao2019efficient}, GraphCNF \cite{lippecategorical}, GDSS \cite{jo2022score}, DiscDDPM \cite{haefeli2022diffusion}, DiGress \cite{vignacdigress}, and EDGE \cite{chen2023efficient}.

\textbf{SBM Dataset.} For SBM graphs in Table \ref{Tab:ablation_sbm}, our approach outperforms all baselines on Orbit, Validity, Uniqueness, Novelty, and V.U.N., while Degree and Clustering are marginally weaker than the strongest competitors. SBMs are defined by mesoscopic block structure rather than precise local statistics, which is well captured by the graphette prior and reinforced by Flowette. This yields accurate recovery of block-level connectivity and higher-order substructure patterns. Small discrepancies in degree and clustering arise because these second-order marginals can vary within blocks without affecting the overall community structure and are not explicitly constrained by our objective.

\textbf{Tree Dataset.} On the Tree dataset in Table \ref{Tab:ablation_tree}, our method achieves the best performance on Orbit, Validity, Uniqueness, Novelty, and the combined V.U.N. score, while slightly underperforming on Degree. This behaviour is consistent with our use of a graphette structural prior and Flowette, which strongly constrain acyclicity and higher-order subgraph structure. As a result, tree-specific substructure patterns and validity are preserved extremely well. In contrast, degree distributions in trees are highly sensitive to structural prior, and our continuous flow prioritizes global structural alignment over exact degree matching, leading to minor deviations in Degree despite structurally correct samples.

\textbf{Ego-small Dataset.} On Ego-small graphs in Table \ref{Tab:ego_small}, our method achieves the best Orbit score and competitive Degree performance, with Clustering slightly below the top baseline. Ego graphs are dominated by hub-and-spoke and star-like motifs, which align naturally with graphex constructions and are effectively preserved by our flow model. While orbit statistics and degree distributions are recovered, clustering depends on fine-grained triangle closure, which may vary under continuous flow integration even when the overall ego-centric structure is correctly modeled.

\begin{table*}[ht]
\caption{Generation performance on SBM synthetic graphs.}
\centering 
\scalebox{0.8}{\begin{tabular}{l c c c c c c c}
\specialrule{.1em}{.05em}{.05em} 
Model \hspace*{1.5cm}&\hspace*{0.3cm}Deg. $\downarrow$ \hspace*{0.3cm} & \hspace*{0.3cm}Clus. $\downarrow$ \hspace*{0.3cm} & \hspace*{0.3cm}Orbit $\downarrow$ \hspace*{0.3cm} & \hspace*{0.3cm}Valid $\uparrow$ \hspace*{0.3cm} & \hspace*{0.3cm}Unique $\uparrow$ \hspace*{0.3cm} & \hspace*{0.3cm}Novel $\uparrow$ \hspace*{0.3cm} & \hspace*{0.3cm}V.U.N $\uparrow$ \hspace*{0.3cm} \\ [0.5ex]
\hline
SPECTRE \cite{martinkus2022spectre} & 0.0015 & 0.0521 & 0.0412 & 52.5 & \textbf{100.0} & \textbf{100.0} & 52.5 \\
DiGress \cite{vignacdigress} & 0.0018 & 0.0485 & 0.0415 & 60.0 & \textbf{100.0} & \textbf{100.0} & 60.0 \\
BwR \cite{diamant2023improving} & 0.0478 & 0.0638 & 0.1139 & 7.5 & \textbf{100.0} & \textbf{100.0} & 7.5 \\
HSpectre \cite{bergmeisterefficient} & 0.0141 & 0.0528 & 0.0809 & 75.0 & \textbf{100.0} & \textbf{100.0} & 75.0 \\
GruM \cite{jograph} & 0.0015 & 0.0589 & 0.0450 & - & - & - & 85.0 \\
CatFlow \cite{eijkelboom2024variational} & 0.0012 & 0.0498 & 0.0357 & - & - & - & 85.0 \\
DisCo \cite{xu2024discrete} & 0.0006 & 0.0266 & 0.0510 & 66.2 & \textbf{100.0} & \textbf{100.0} & 66.2 \\
Cometh \cite{siraudincometh} & 0.0020 & 0.0498 & 0.0383 & 75.0 & \textbf{100.0} & \textbf{100.0} & 75.0 \\
DeFoG \cite{qinmadeira2024defog} & \textbf{0.0006} & 0.0517 & 0.0556 & 90.0 & 90.0 & 90.0 & 90.0 \\
G2PT \cite{chengraph} & 0.0035 & \textbf{0.0120} & 0.0007 & - & - & - & \textbf{100.0} \\
Flowette (Ours)  & 0.0076 \tiny $\pm$ 0.0035 & 0.0508 \tiny $\pm$ 0.01 & \textbf{0.0000} \tiny $\pm$ 0.0 & \textbf{100.0} \tiny $\pm$ 0.0 & \textbf{100.0} \tiny $\pm$ 0.0 & \textbf{100.0} \tiny $\pm$ 0.0 & \textbf{100.0} \tiny $\pm$ 0.0 \\
\specialrule{.1em}{.05em}{.05em}
\end{tabular}} 
\label{Tab:ablation_sbm}
\end{table*}

\begin{table*}[ht]
\caption{Generation performance on Tree synthetic graphs.}
\centering 
\scalebox{0.75}{\begin{tabular}{l c c c c c c c}
\specialrule{.1em}{.05em}{.05em} 
Model \hspace*{1.5cm}&\hspace*{0.3cm}Deg. $\downarrow$ \hspace*{0.3cm} & \hspace*{0.3cm}Clus. $\downarrow$ \hspace*{0.3cm} & \hspace*{0.3cm}Orbit $\downarrow$ \hspace*{0.3cm} & \hspace*{0.3cm}Valid $\uparrow$ \hspace*{0.3cm} & \hspace*{0.3cm}Unique $\uparrow$ \hspace*{0.3cm} & \hspace*{0.3cm}Novel $\uparrow$ \hspace*{0.3cm} & \hspace*{0.3cm}V.U.N $\uparrow$ \hspace*{0.3cm} \\ [0.5ex]
\hline
GRAN \cite{liao2019efficient} & 0.1884 & 0.0080 & 0.0199 & 0.0 & \textbf{100.0} & \textbf{100.0} & 0.0 \\
DiGress \cite{vignacdigress} & 0.0002 & \textbf{0.0000} & \textbf{0.0000} & 90.0 & \textbf{100.0} & \textbf{100.0} & 90.0 \\
BwR \cite{diamant2023improving} & 0.0016 & 0.1239 & 0.0003 & 0.0 & \textbf{100.0} & \textbf{100.0} & 0.0 \\
EDGE \cite{chen2023efficient} & 0.2678 & \textbf{0.0000} & 0.7357 & 0.0 & 7.5 & \textbf{100.0} & 0.0 \\
BiGG \cite{dai2020scalable} & 0.0014 & \textbf{0.0000} & \textbf{0.0000} & 100 & 87.5 & 50.0 & 75.0 \\
GraphGen \cite{goyal2020graphgen} & 0.0105 & \textbf{0.0000} & \textbf{0.0000} & 95.0 & \textbf{100.0} & \textbf{100.0} & 95.0 \\
HSpectre \cite{bergmeisterefficient} & 0.0004 & \textbf{0.0000} & \textbf{0.0000} & 82.5 & \textbf{100.0} & \textbf{100.0} & 82.5 \\
DeFoG \cite{qinmadeira2024defog} & \textbf{0.0002} & \textbf{0.0000} & \textbf{0.0000} & 96.5 & \textbf{100.0} & \textbf{100.0} & 96.5 \\
G2PT \cite{chengraph} & 0.0020 & \textbf{0.0000} & \textbf{0.0000} & - & - & - & 99.0 \\
Flowette (Ours)  & 0.0005 \tiny $\pm$ 0.002 & \textbf{0.0000} \tiny $\pm$ 0.0 & \textbf{0.0000} \tiny $\pm$ 0.0 & \textbf{100.0} \tiny $\pm$ 0.0 & \textbf{100.0} \tiny $\pm$ 0.0 & \textbf{100.0} \tiny $\pm$ 0.0 & \textbf{100.0} \tiny $\pm$ 0.0 \\
\specialrule{.1em}{.05em}{.05em}
\end{tabular}} 
\label{Tab:ablation_tree}
\end{table*}

\begin{table}[ht]
\caption{Generation performance on Ego-small synthetic graphs.}
\centering 
\scalebox{0.7}{\begin{tabular}{l c c c c}
\specialrule{.1em}{.05em}{.05em} 
Model \hspace*{1.5cm}&\hspace*{0.3cm}Deg. $\downarrow$ \hspace*{0.3cm} & \hspace*{0.3cm}Clus. $\downarrow$ \hspace*{0.3cm} & \hspace*{0.3cm}Orbit $\downarrow$ \hspace*{0.3cm} \\ [0.5ex]
\hline
GraphRNN \cite{you2018graphrnn} & 0.0768 & 1.1456 & 0.1087 \\ 
GRAN \cite{liao2019efficient} & 0.5778 & 0.3360 & 0.0406 \\ 
GraphCNF \cite{lippecategorical} & 0.1010 & 0.7654 & 0.0820 \\
GDSS \cite{jo2022score} & 0.8189 & 0.6032 & 0.3315 \\
DiscDDPM \cite{haefeli2022diffusion} & 0.4613 & 0.1681 & 0.0633 \\
DiGress \cite{vignacdigress} & 0.0708 & 0.0092 & 0.1205 \\
EDGE \cite{chen2023efficient} & \textbf{0.0040} & \textbf{0.0040} & 0.0008 \\
Flowette (Ours)  & 0.0435 \tiny $\pm$ 0.004 & 0.0098 \tiny $\pm$ 0.0025 & \textbf{0.0000} \tiny $\pm$ 0.0 \\
\specialrule{.1em}{.05em}{.05em}
\end{tabular}} 
\label{Tab:ego_small}
\end{table}


\subsection{Evaluation Metrics for Molecular Graph Evaluation}
\label{sec:metrics-molecules}
We evalute generative performance using a collection of complementary molecular metrics \cite{vignacdigress, qinmadeira2024defog}. Validity reports the fraction of generated molecules that satisfy basic chemical constraints such as allowable valences. Uniqueness quantifies how many generated molecules are structurally distinct, as determined by non-isomorphic SMILES representations. Novelty measures the proportion of samples that do not appear in the training set. To evaluate distributional similarity, we report the Similarity to Nearest Neighbor (SNN), which captures how close each generated molecule is to its most similar training example using Tanimoto similarity. Scaffold similarity (Scaf.) compares the distributions of Bemis–Murcko core structures. while KL divergence (KL div.) measures discrepancies between generated and reference distributions of selected physicochemical descriptors. Neighborhood Subgraph Pairwise Distance Kernel (NSPDK) evaluates similarity at the level of local graph substructures by comparing distributions of rooted subgraphs and their pairwise distances, thereby capturing higher-order structural consistency between generated and reference molecular graphs.

Let $\mathcal{M}_{\mathrm{gen}} = \{m_i\}_{i=1}^{N}$ denote the set of generated molecules and $\mathcal{M}_{\mathrm{train}}$ the training set.

\textbf{Validity.}
Validity measures the fraction of generated molecules that satisfy basic chemical constraints, such as permissible atom valences and bond types. Formally,
\begin{equation}
\mathrm{Validity}
=
\frac{1}{N}
\sum_{i=1}^{N}
\mathbf{1}\!\left[m_i \text{ is chemically valid}\right],
\end{equation}
where $\mathbf{1}[\cdot]$ denotes the indicator function and chemical validity is assessed using RDKit sanitization rules.

\textbf{Uniqueness.}
Uniqueness quantifies the proportion of structurally distinct molecules among valid samples, determined via canonical SMILES representations:
\begin{equation}
\mathrm{Uniqueness}
=
\frac{|\mathrm{Unique}(\mathcal{M}_{\mathrm{gen}}^{\mathrm{valid}})|}
{|\mathcal{M}_{\mathrm{gen}}^{\mathrm{valid}}|}.
\end{equation}

\textbf{Novelty.}
Novelty measures the fraction of generated molecules that do not appear in the training set:
\begin{equation}
\mathrm{Novelty}
=
\frac{1}{|\mathcal{M}_{\mathrm{gen}}^{\mathrm{valid}}|}
\sum_{m \in \mathcal{M}_{\mathrm{gen}}^{\mathrm{valid}}}
\mathbf{1}\!\left[m \notin \mathcal{M}_{\mathrm{train}}\right].
\end{equation}

\textbf{Similarity to Nearest Neighbour (SNN).}
SNN measures how close each generated molecule is to its most similar reference molecule in fingerprint space. Let $\Tilde{\phi}(\cdot)$ denote a fixed molecular fingerprint map, $\mathcal{M}_{\mathrm{gen}}$ the set of generated molecules, and $\mathcal{M}_{\mathrm{ref}}$ the reference (training) set. We define
\begin{equation}
\mathrm{SNN}
=
\frac{1}{|\mathcal{M}_{\mathrm{gen}}|}
\sum_{m \in \mathcal{M}_{\mathrm{gen}}}
\max_{m' \in \mathcal{M}_{\mathrm{ref}}}
\mathrm{Tanimoto}\big(\Tilde{\phi}(m), \Tilde{\phi}(m')\big),
\end{equation}
where the Tanimoto similarity is computed over binary fingerprints and the maximum is taken over all reference molecules.

\textbf{Scaffold Similarity}
\paragraph{Scaffold similarity (Scaf.).}
Scaffold similarity evaluates how well the distribution of molecular scaffolds is preserved in generated samples. Let $\mathcal{S}$ denote the union of Bemis--Murcko scaffolds observed in the reference and generated sets. We construct scaffold count vectors
\[
\mathbf{r} = (r_s)_{s \in \mathcal{S}}, \qquad
\mathbf{g} = (g_s)_{s \in \mathcal{S}},
\]
where $r_s$ and $g_s$ denote the number of molecules containing scaffold $s$ in the reference and generated sets, respectively (with missing entries treated as zero) \cite{polykovskiy2020molecular}. Scaffold similarity is defined as the cosine similarity between these count vectors:
\begin{equation}
\mathrm{Scaf.}
=
\frac{\langle \mathbf{r}, \mathbf{g} \rangle}
{\|\mathbf{r}\|_2 \, \|\mathbf{g}\|_2}.
\end{equation}
This metric captures agreement between scaffold frequency distributions while being invariant to absolute dataset size.

\textbf{Physicochemical KL divergence.}
To assess distributional fidelity at the level of physicochemical properties, we follow standard GuacaMol-style benchmarks and compare descriptor distributions between training and generated molecules.
Let $\mathcal{D}$ denote a fixed set of scalar descriptors, e.g.\ Bertz complexity, logP, molecular weight, topological polar surface area, and several integer-valued counts (H-bond donors/acceptors, rotatable bonds, aliphatic and aromatic rings) \cite{brown2019guacamol}. For each descriptor $j \in \mathcal{D}$ and molecule $x$, we write $d_j(x) \in \mathbb{R}$ for its value.

Given a reference (training) set $\mathcal{X}_{\mathrm{ref}}$ and a generated set $\mathcal{X}_{\mathrm{gen}}$, we construct empirical distributions of $d_j$ under each set.
For integer-valued descriptors, we estimate discrete histograms
\[
p_{\mathrm{ref}}^{(j)}(k)
\;=\;
\frac{1}{|\mathcal{X}_{\mathrm{ref}}|}
\sum_{x \in \mathcal{X}_{\mathrm{ref}}}
\mathbf{1}\{d_j(x)=k\},
\qquad
p_{\mathrm{gen}}^{(j)}(k)
\;=\;
\frac{1}{|\mathcal{X}_{\mathrm{gen}}|}
\sum_{x \in \mathcal{X}_{\mathrm{gen}}}
\mathbf{1}\{d_j(x)=k\},
\]
where $k$ ranges over a finite set of bins.
The KL divergence for descriptor $j$ is then
\begin{equation}
\label{eq:kl_discrete_desc}
D_{\mathrm{KL}}^{(j)}
\;=\;
\sum_{k}
p_{\mathrm{ref}}^{(j)}(k)
\log
\frac{p_{\mathrm{ref}}^{(j)}(k)}{p_{\mathrm{gen}}^{(j)}(k)}.
\end{equation}

For continuous-valued descriptors, we estimate densities
$p_{\mathrm{ref}}^{(j)}(z)$ and $p_{\mathrm{gen}}^{(j)}(z)$ (e.g.\ via kernel
density estimation or smoothed histograms) and define
\begin{equation}
\label{eq:kl_continuous_desc}
D_{\mathrm{KL}}^{(j)}
\;=\;
\int
p_{\mathrm{ref}}^{(j)}(z)
\log
\frac{p_{\mathrm{ref}}^{(j)}(z)}{p_{\mathrm{gen}}^{(j)}(z)}
\, dz,
\end{equation}
which in practice is approximated numerically.


\textbf{Aggregate KL score (KL div.).}
The overall physicochemical KL metric aggregates these divergences across all descriptors and the internal-similarity distribution.
Let $\mathcal{J}$ denote the index set consisting of all descriptor dimensions together with the internal-similarity term.
We report either the average KL,
\begin{equation}
\label{eq:kl_physchem_avg}
\mathrm{KL}_{\mathrm{physchem}}
\;=\;
\frac{1}{|\mathcal{J}|}
\sum_{j \in \mathcal{J}}
D_{\mathrm{KL}}^{(j)},
\end{equation}
or the corresponding similarity score
\begin{equation}
\label{eq:kl_physchem_score}
\mathrm{Score}_{\mathrm{physchem}}
\;=\;
\frac{1}{|\mathcal{J}|}
\sum_{j \in \mathcal{J}}
\exp\bigl(-D_{\mathrm{KL}}^{(j)}\bigr),
\end{equation}
where smaller KL values (or larger scores) indicate closer agreement between the generated and training distributions of physicochemical descriptors and internal similarities.

\textbf{Neighborhood Subgraph Pairwise Distance Kernel (NSPDK).}
NSPDK evaluates similarity at the level of local graph substructures by comparing distributions of rooted subgraphs and their pairwise distances. Given feature maps $\phi(\cdot)$ induced by the kernel, the similarity between generated and reference sets is measured via a maximum mean discrepancy (MMD):
\begin{equation}
\mathrm{NSPDK}
=
\left\|
\frac{1}{|\mathcal{M}_{\mathrm{gen}}|}
\sum_{m \in \mathcal{M}_{\mathrm{gen}}} \phi(m)
-
\frac{1}{|\mathcal{M}_{\mathrm{train}}|}
\sum_{m \in \mathcal{M}_{\mathrm{train}}} \phi(m)
\right\|_2^2.
\end{equation}

NSPDK captures higher-order structural consistency beyond local degree statistics, making it particularly sensitive to discrepancies in functional groups and subgraph arrangements.

\subsection{Hyperperameter setup for molecular graph generation}
\label{sec:hyperparam-molecules}

For molecular graph benchmarks, we optimize all models using the training objective defined in Eq.~\eqref{eq:total_loss} and employ the AdamW optimizer~\cite{loshchilovdecoupled}. All models are trained for 500 epochs with a learning rate of $1e^{-3}$, hidden dimension $H=128$, batch size $B=32$, and loss weights $\lambda_x=\lambda_e=0.5$.  We set the FGW trade-off parameter to $\alpha=0.5$. For the QM9, ZINC250K, and Guacamol datasets, we use $\beta_{\mathrm{val}}=0.5$, $\beta_{\mathrm{atom}}=0.5$, and $\beta_{\mathrm{end}}=1.0$. For the MOSES dataset, we set $\beta_{\mathrm{val}}=0.3$, $\beta_{\mathrm{atom}}=0.4$, and $\beta_{\mathrm{end}}=0.8$. During sampling, Euler integration is performed with $K=200$ steps. Regarding model architecture, we use $L=5$ attention layers for QM9, ZINC250K, and GuacaMol, and $L=4$ attention layers for MOSES. Following existing evaluation protocols \cite{qinmadeira2024defog}, we generate 10000 graphs for QM9, ZINC250K, and Guacamol, and 25000 graphs for MOSES at inference time to ensure fair comparison.

\subsection{Runtime Analysis} \label{sec:runtime_analysis}
We evaluate the computational efficiency of Flowette under the same experimental setup as DeFoG~\cite{qinmadeira2024defog}, using batch size $B=512$ and $1000$ training epochs. As shown in Table~\ref{tab:runtime}, Flowette remains competitive in total training time (i.e., FGW + Hungarian pairing + model training) across all benchmarks. The additional FGW + Hungarian pairing overhead is small, ranging from 0.016h (Tree) to 1.68h (MOSES), and constitutes only a minor fraction of the overall training cost. Moreover, this overhead can be further reduced, as pairings can be pre-computed offline once before training and reused across epochs. However, Flowette is strictly faster than DeFoG at inference across all benchmarks. Since FGW coupling and Hungarian matching are training-only operations, inference requires only standard Euler integration of the learned velocity field, with no CTMC simulation and no rate matrix computation, that DeFoG requires at sampling time.


\begin{table}[t]
\centering
\caption{Runtime comparison between Flowette and DeFoG. Flowette training time and pairing time (FGW + Hungarian) are reported separately to highlight the marginal overhead of coupling.}
\scalebox{0.62}{
\begin{tabular}{l c c c c c c}
\specialrule{.1em}{.05em}{.05em}
Dataset & DeFoG Model Train (h) & Flowette Pairing (h) & Flowette Model Train (h) & DeFoG Sample (h) & Flowette Sample (h) & Graphs Sampled \\
\hline
Tree & 8.0 & 0.016 & 8.50 & 0.07 & 0.04 & 40 \\
QM9 & 6.5 & 0.560 & 6.90 & 0.20 & 0.16 & 10,000 \\
ZINC250K & 14.0 & 0.750 & 14.60 & 4.80 & 3.70 & 10,000 \\
MOSES & 46.0 & 1.680 & 46.50 & 5.00 & 4.30 & 25,000 \\
\specialrule{.1em}{.05em}{.05em}
\end{tabular}}
\label{tab:runtime}
\end{table}

\subsection{Ablation Analysis}\label{sec:ablation-moreresults}

\subsubsection{Ablation Analysis of Regularization Terms}

To evaluate the role of each component in the Flowette training objective, we conduct an ablation study by selectively removing its key elements as shown in Table~\ref{tab:ablation_qm9_zinc}.
\begin{itemize}
    \item \textbf{NoReg/ only $\mathcal{L}_{vel}$}: This variant removes all regularization terms and only keeps $\mathcal{L}_{vel}$.
    \item \textbf{NoValenceReg/ no $\mathcal{L}_{val}$}: This variant removes only the regularization term $\mathcal{L}_{val}$.
    \item \textbf{NoAtomReg/ no $\mathcal{L}_{atom}$}: This variant removes only the regularization term $\mathcal{L}_{atom}$.
    \item \textbf{NoEndReg/ no $\mathcal{L}_{end}$}: This variant removes only the regularization term $\mathcal{L}_{end}$.
    \item \textbf{NoChemReg/ only $\mathcal{L}_{vel}$ and $\mathcal{L}_{end}$}: This variant removes both chemistry-aware regularization terms $\mathcal{L}_{val}$ and $\mathcal{L}_{atom}$.
\end{itemize}

\begin{table*}[t]
   \caption{Ablation study of graph generation performance of QM9 and ZINC250K. Recall from Equation~\eqref{eq:total_loss}, the components in the training objective:
   the velocity $\mathcal{L}_{\mathrm{vel}}$, end-to-end transport consistency $\mathcal{L}_{\mathrm{end}}$, soft valence constraint $\mathcal{L}_{\mathrm{val}}$, and atom-type matching $\mathcal{L}_{\mathrm{atom}}$.}
   \label{tab:ablation_qm9_zinc}
   \centering
   \footnotesize
   \scalebox{0.95}{\begin{tabular}{ c l c c c c c c c c c | c c c c c c c c } 
     \toprule
     \multicolumn{2}{c}{} & \multicolumn{3}{c}{QM9}  &\multicolumn{3}{c}{ZINC250K} \\\cmidrule(lr){3-5} \cmidrule(lr){6-8}
     \multicolumn{2}{c}{Variant} & Valid & Unique & Novel & Valid & Unique & Novel \\
     \midrule
     \multirow{5}{*}{} 
     & only $\mathcal{L}_{\mathrm{vel}}$ & 69.0 \tiny $\pm$ 0.10 & 65.0 \tiny $\pm$ 0.02 & 65.0 \tiny $\pm$ 0.06 & 75.0 \tiny $\pm$ 0.05 & 83.1 \tiny $\pm$ 0.10 & 83.1 \tiny $\pm$ 0.03 \\
     & no $\mathcal{L}_{\mathrm{val}}$ & 76.0 \tiny $\pm$ 0.07 & 90.7 \tiny $\pm$ 0.04 & 90.7 \tiny $\pm$ 0.06 & 88.0 \tiny $\pm$ 0.10 & 91.0 \tiny $\pm$ 0.05 & 91.0 \tiny $\pm$ 0.04 \\
     & no $\mathcal{L}_{\mathrm{atom}}$ & 79.0 \tiny $\pm$ 0.05 & 93.6 \tiny $\pm$ 0.06 & 93.6 \tiny $\pm$ 0.03 & 92.1 \tiny $\pm$ 0.02 & 95.0 \tiny $\pm$ 0.08 & 95.0 \tiny $\pm$ 0.06 \\
     & no $\mathcal{L}_{\mathrm{end}}$ & 71.0 \tiny $\pm$ 0.06 & 79.0 \tiny $\pm$ 0.05 & 79.0 \tiny $\pm$ 0.05 & 79.0 \tiny $\pm$ 0.11 & 85.0 \tiny $\pm$ 0.04 & 85.0 \tiny $\pm$ 0.07 \\
     & only $\mathcal{L}_{\mathrm{vel}}$ and $\mathcal{L}_{\mathrm{end}}$ & 73.1 \tiny $\pm$ 0.05 & 85.0 \tiny $\pm$ 0.06 & 85.0 \tiny $\pm$ 0.05 & 82.6 \tiny $\pm$ 0.03 & 90.5 \tiny $\pm$ 0.04 & 90.5 \tiny $\pm$ 0.04 \\
     & Full model & \textbf{99.8} \tiny $\pm$ 0.09 & \textbf{99.3} \tiny $\pm$ 0.05 & \textbf{99.4} \tiny $\pm$ 0.06 & \textbf{99.9} \tiny $\pm$ 0.10 & \textbf{100.0} \tiny $\pm$ 0.0 & \textbf{100.0} \tiny $\pm$ 0.0 \\
     \bottomrule
   \end{tabular}}
\end{table*}


The chemistry-aware terms $\mathcal{L}_\mathrm{val}$ and $\mathcal{L}_\mathrm{atom}$ primarily govern feasibility and diversity. Following endpoint consistency, removing valence supervision (\textbf{no $\mathcal{L}_\mathrm{val}$}) causes a pronounced validity drop on both datasets, demonstrating the importance of soft valence constraints for preventing over-bonding. In contrast, removing atom-type marginal matching (\textbf{no $\mathcal{L}_\mathrm{atom}$}) has a milder effect on validity, uniqueness and novelty. Removing both terms (\textbf{only $\mathcal{L}_\mathrm{vel}$ and $\mathcal{L}_\mathrm{end}$}) yields intermediate performance, confirming that the two regularizers provide complementary signals. Velocity matching alone is insufficient, and strong performance emerges only when all components are jointly enforced.

\textbf{Effect of removing regularization (\textbf{NoReg}).}
Training with $\mathcal{L}_{\mathrm{vel}}$ alone leads to a substantial drop in validity on both datasets, indicating that unconstrained continuous transport is insufficient for molecular graph generation. While the learned flow interpolates between noise and data, finite-step integration frequently produces chemically invalid structures, highlighting the need for additional structural supervision.

\textbf{Endpoint consistency (\textbf{NoEndReg}).}
Removing $\mathcal{L}_{\mathrm{end}}$ significantly degrades validity and uniqueness, confirming that locally accurate velocity predictions do not reliably compose into globally coherent transport trajectories. Endpoint supervision is therefore critical for stabilizing continuous-time generation under finite-step solvers.

\textbf{Chemistry-aware regularization.}
The chemistry-aware terms primarily govern feasibility and diversity. Removing valence supervision (\textbf{NoValenceReg}) causes a pronounced validity drop, especially on ZINC250K, demonstrating the importance of soft valence constraints for preventing over-bonding. In contrast, removing atom-type marginal matching (\textbf{NoAtomReg}) has a milder effect on validity but consistently reduces uniqueness and novelty, indicating its role in promoting semantic diversity rather than enforcing hard constraints. Removing both terms (\textbf{NoChemReg}) yields intermediate performance, confirming that the two regularizers provide complementary signals.

Overall, endpoint consistency and valence regularization are the most critical components for stable and chemically valid generation, while atom-type marginal matching primarily improves diversity. Velocity matching alone is insufficient, and strong performance emerges only when all components are jointly enforced.


\subsubsection{Sensitivity Analysis of the Pretrained Encoder $f_{enc}$} \label{sec:sensitivity_analysis_encoder}

\textbf{Pre-trained encoder.} We use a GIN encoder (3-layer GINEConv, hidden size 128, output size 64, learning rate 1e-3, batch-size 32, and epochs 50) trained once in a self-supervised manner as a graph autoencoder. It is optimized using cross-entropy losses to reconstruct atom and bond types from node and edge features, without external labels. The encoder is trained only on the training split of each dataset and is never exposed to validation or test data, preventing data leakage. After training, it is frozen and used as a fixed structural embedding extractor, receiving no gradient updates during FM training. Its role is limited to computing node embeddings for FGW distance computation within mini-batches; it does not influence the velocity field, integration process, or generated outputs.

To evaluate sensitivity to encoder quality, we conduct a three-condition ablation, as shown in Table~\ref{tab:sensitivity_encoder}: (1) GIN encoder (50 epochs, fully pretrained), (2) training-free 1-WL (Weisfeiler-Lehman \cite{shervashidze2011weisfeiler}) features (3 iterations of WL, zero parameters), (3) Early stopped GIN encoder (5 epochs, partially trained), and (3) a random encoder (weights fixed with default initialisation, zero training). Notably, the gap between the pretrained GIN and the training-free 1-WL baseline is consistently the smallest across all conditions, metrics, and datasets, while performance degrades progressively as encoder quality decreases from 1-WL to early-stopped to random. This confirms that structural information is far more critical than learned representations in our framework.

\begin{table*}[t]
   \caption{Sensitivity analysis of the pre-trained encoder on QM9 and ZINC250K datasets.}
   \label{tab:sensitivity_encoder}
   \centering
   \footnotesize
   \scalebox{0.75}{\begin{tabular}{ c l c c c c c c c c c | c c c c c c c c } 
     \toprule
     \multicolumn{2}{c}{} & \multicolumn{3}{c}{QM9}  &\multicolumn{3}{c}{ZINC250K} \\\cmidrule(lr){3-5} \cmidrule(lr){6-8}
     \multicolumn{2}{c}{Variant} & Valid & Unique & NSPDK & Valid & Unique & NSPDK \\
     \midrule
     \multirow{4}{*}{}
     & Flowette (pretrained GIN encoder) & \textbf{99.81} \tiny $\pm$ \textbf{0.09} & \textbf{99.30} \tiny $\pm$ \textbf{0.05} & \textbf{0.0003} \tiny $\pm$ \textbf{0.002} & \textbf{99.90} \tiny $\pm$ \textbf{0.10} & \textbf{100.00} \tiny $\pm$ \textbf{0.0} & \textbf{0.0006} \tiny $\pm$ \textbf{0.004} \\
     & Flowette (1-WL features) & 98.30 \tiny $\pm$ 0.05 & 98.00 \tiny $\pm$ 0.06 & 0.0008 \tiny $\pm$ 0.003 & 98.70 \tiny $\pm$ 0.030 & 99.00 \tiny $\pm$ 0.050 & 0.0018 \tiny $\pm$ 0.007 \\
     & Flowette (Early stopped) & 94.30 \tiny $\pm$ 0.06 & 93.00 \tiny $\pm$ 0.04 & 0.0153 \tiny $\pm$ 0.008 & 93.80 \tiny $\pm$ 0.060 & 92.60 \tiny $\pm$ 0.030 & 0.0316 \tiny $\pm$ 0.003 \\
     & Flowette (Random encoder) & 88.30 \tiny $\pm$ 0.08 & 87.00 \tiny $\pm$ 0.06 & 0.0656 \tiny $\pm$ 0.006 & 88.80 \tiny $\pm$ 0.070 & 88.60 \tiny $\pm$ 0.050 & 0.0819 \tiny $\pm$ 0.006 \\
     \bottomrule
   \end{tabular}}
\end{table*}

\subsubsection{Ablation of Graphette Prior vs. Standard Graphon Prior}
We address this via a controlled ablation comparing Flowette with the graphette prior to a variant using a standard graphon prior, with FGW coupling and all loss terms fixed. As shown in Tables~\ref{tab:graphette_synthetic} and \ref{tab:graphette_molecular}, replacing the graphette prior causes large and consistent degradation. On Tree, Deg increases by $0.04715$ and V.U.N decreases by $32.05$. On SBM, Clus. increases by $0.8007$ and V.U.N decreases by $38.10$. On QM9, Valid decreases by $24.31$, Unique by $30.40$, and NSPDK increases by $0.0757$. On ZINC250K, Valid decreases by $19.90$, Unique by $15.00$, and NSPDK increases by $0.0554$. These consistent changes confirm that the graphette prior is a critical independent component for capturing the dominant topological properties of each graph family.
\begin{table}[t]
\centering
\caption{Ablation study: Graphette prior vs. standard graphon on synthetic datasets.}
\scalebox{0.60}{
\begin{tabular}{l l c c c c c c c}
\specialrule{.1em}{.05em}{.05em}
Dataset & Model Variant & Deg $\downarrow$ & Clus. $\downarrow$ & Orbit $\downarrow$ & Valid $\uparrow$ & Unique $\uparrow$ & Novel $\uparrow$ & V.U.N $\uparrow$ \\
\hline
\multirow{2}{*}{Tree}
& Flowette (graphette prior) & \textbf{0.00005 $\pm$ 0.002} & \textbf{0.0000 $\pm$ 0.0} & \textbf{0.0000 $\pm$ 0.0} & \textbf{100.0 $\pm$ 0.0} & \textbf{100.0 $\pm$ 0.0} & \textbf{100.0 $\pm$ 0.0} & \textbf{100.0 $\pm$ 0.0} \\
& Flowette (standard graphon) & 0.0472 $\pm$ 0.008 & 0.0136 $\pm$ 0.001 & 0.0012 $\pm$ 0.005 & 75.5 $\pm$ 0.03 & 90.0 $\pm$ 0.02 & 100.0 $\pm$ 0.0 & 67.95 $\pm$ 0.06 \\
\hline
\multirow{2}{*}{SBM}
& Flowette (graphette prior) & \textbf{0.0076 $\pm$ 0.003} & \textbf{0.0508 $\pm$ 0.01} & \textbf{0.0000 $\pm$ 0.0} & \textbf{100.0 $\pm$ 0.0} & \textbf{100.0 $\pm$ 0.0} & \textbf{100.0 $\pm$ 0.0} & \textbf{100.0 $\pm$ 0.0} \\
& Flowette (standard graphon) & 0.1207 $\pm$ 0.002 & 0.8515 $\pm$ 0.003 & 0.0105 $\pm$ 0.004 & 80.0 $\pm$ 0.02 & 90.5 $\pm$ 0.03 & 85.5 $\pm$ 0.04 & 61.90 $\pm$ 0.03 \\
\specialrule{.1em}{.05em}{.05em}
\end{tabular}}
\label{tab:graphette_synthetic}
\end{table}

\begin{table}[t]
\centering
\caption{Ablation study: Graphette prior vs. standard graphon on molecular datasets.}
\scalebox{0.8}{
\begin{tabular}{l l c c c}
\specialrule{.1em}{.05em}{.05em}
Dataset & Model Variant & Valid $\uparrow$ & Unique $\uparrow$ & NSPDK $\downarrow$ \\
\hline
\multirow{2}{*}{QM9}
& Flowette (graphette prior) & \textbf{99.81 $\pm$ 0.09} & \textbf{99.30 $\pm$ 0.05} & \textbf{0.0003 $\pm$ 0.002} \\
& Flowette (standard graphon) & 75.50 $\pm$ 0.03 & 68.90 $\pm$ 0.05 & 0.0760 $\pm$ 0.006 \\
\hline
\multirow{2}{*}{ZINC250K}
& Flowette (graphette prior) & \textbf{99.90 $\pm$ 0.10} & \textbf{100.00 $\pm$ 0.0} & \textbf{0.0006 $\pm$ 0.004} \\
& Flowette (standard graphon) & 80.00 $\pm$ 0.05 & 85.00 $\pm$ 0.03 & 0.0560 $\pm$ 0.005 \\
\specialrule{.1em}{.05em}{.05em}
\end{tabular}}
\label{tab:graphette_molecular}
\end{table}

\subsubsection{Ablation Study: FGW coupling vs. Alternative Pairings}
We provide a controlled ablation comparing FGW coupling against random pairing and Euclidean OT, with the graphette prior and all loss terms held fixed. As shown in Tables~\ref{tab:fgw_synthetic} and \ref{tab:fgw_molecular}, FGW consistently outperforms both alternatives across all datasets and metrics. On Tree, Deg increases by 0.00802 (random) and 0.00073 (Euclidean OT), while V.U.N decreases by 18.55 and 9.28, respectively. On SBM, Clus. increases by 0.4251 (random) and 0.0424 (Euclidean OT), while V.U.N decreases by 19.23 and 10.99. On QM9, Valid decreases by 9.78 (random) and 5.31 (Euclidean OT), Unique by 9.65 and 6.05, and NSPDK increases by 0.0177 and 0.0062. On ZINC250K, Valid decreases by 10.40 (random) and 5.90 (Euclidean OT), Unique by 8.50 and 4.50, and NSPDK increases by 0.0154 and 0.0024. These consistent degradations confirm that FGW coupling is a critical independent component, producing more coherent and reliable supervision than both feature-agnostic random pairing and feature-only Euclidean OT.
\begin{table}[t]
\centering
\caption{Ablation study: FGW coupling vs. alternative pairings on synthetic datasets.}
\scalebox{0.60}{
\begin{tabular}{l l c c c c c c c}
\specialrule{.1em}{.05em}{.05em}
Dataset & Model Variant & Deg $\downarrow$ & Clus. $\downarrow$ & Orbit $\downarrow$ & Valid $\uparrow$ & Unique $\uparrow$ & Novel $\uparrow$ & V.U.N $\uparrow$ \\
\hline
\multirow{3}{*}{Tree}
& Flowette (FGW) & \textbf{0.00005 $\pm$ 0.002} & \textbf{0.0000 $\pm$ 0.0} & \textbf{0.0000 $\pm$ 0.0} & \textbf{100.0 $\pm$ 0.0} & \textbf{100.0 $\pm$ 0.0} & \textbf{100.0 $\pm$ 0.0} & \textbf{100.0 $\pm$ 0.0} \\
& Flowette (random) & 0.00807 $\pm$ 0.005 & 0.0055 $\pm$ 0.003 & 0.0000 $\pm$ 0.0 & 90.5 $\pm$ 0.06 & 90.0 $\pm$ 0.02 & 100.0 $\pm$ 0.0 & 81.45 $\pm$ 0.04 \\
& Flowette (Euclidean OT) & 0.00078 $\pm$ 0.003 & 0.0003 $\pm$ 0.005 & 0.0000 $\pm$ 0.0 & 95.5 $\pm$ 0.05 & 95.0 $\pm$ 0.04 & 100.0 $\pm$ 0.0 & 90.72 $\pm$ 0.06 \\
\hline
\multirow{3}{*}{SBM}
& Flowette (FGW) & \textbf{0.0076 $\pm$ 0.003} & \textbf{0.0508 $\pm$ 0.01} & \textbf{0.0000 $\pm$ 0.0} & \textbf{100.0 $\pm$ 0.0} & \textbf{100.0 $\pm$ 0.0} & \textbf{100.0 $\pm$ 0.0} & \textbf{100.0 $\pm$ 0.0} \\
& Flowette (random) & 0.0151 $\pm$ 0.003 & 0.4759 $\pm$ 0.001 & 0.0012 $\pm$ 0.001 & 89.5 $\pm$ 0.03 & 90.25 $\pm$ 0.05 & 100.0 $\pm$ 0.0 & 80.77 $\pm$ 0.05 \\
& Flowette (Euclidean OT) & 0.0091 $\pm$ 0.002 & 0.0932 $\pm$ 0.005 & 0.0005 $\pm$ 0.004 & 93.5 $\pm$ 0.06 & 95.2 $\pm$ 0.02 & 100.0 $\pm$ 0.0 & 89.01 $\pm$ 0.03 \\
\specialrule{.1em}{.05em}{.05em}
\end{tabular}}
\label{tab:fgw_synthetic}
\end{table}

\begin{table}[t]
\centering
\caption{Ablation study: FGW coupling vs. alternative pairings on molecular datasets.}
\scalebox{0.8}{
\begin{tabular}{l l c c c}
\specialrule{.1em}{.05em}{.05em}
Dataset & Model Variant & Valid $\uparrow$ & Unique $\uparrow$ & NSPDK $\downarrow$ \\
\hline
\multirow{3}{*}{QM9}
& Flowette (FGW) & \textbf{99.81 $\pm$ 0.09} & \textbf{99.30 $\pm$ 0.05} & \textbf{0.0003 $\pm$ 0.002} \\
& Flowette (random) & 90.03 $\pm$ 0.01 & 89.65 $\pm$ 0.04 & 0.0180 $\pm$ 0.005 \\
& Flowette (Euclidean OT) & 94.50 $\pm$ 0.03 & 93.25 $\pm$ 0.02 & 0.0065 $\pm$ 0.006 \\
\hline
\multirow{3}{*}{ZINC250K}
& Flowette (FGW) & \textbf{99.90 $\pm$ 0.10} & \textbf{100.00 $\pm$ 0.0} & \textbf{0.0006 $\pm$ 0.004} \\
& Flowette (random) & 89.50 $\pm$ 0.07 & 91.50 $\pm$ 0.05 & 0.0160 $\pm$ 0.003 \\
& Flowette (Euclidean OT) & 94.00 $\pm$ 0.05 & 95.50 $\pm$ 0.03 & 0.0030 $\pm$ 0.004 \\
\specialrule{.1em}{.05em}{.05em}
\end{tabular}}
\label{tab:fgw_molecular}
\end{table}
\section{Graphs generated by Flowette}
\label{sec:generated-graphs}

\begin{figure}[!ht]
    \centering
    \includegraphics[width=0.95\textwidth]{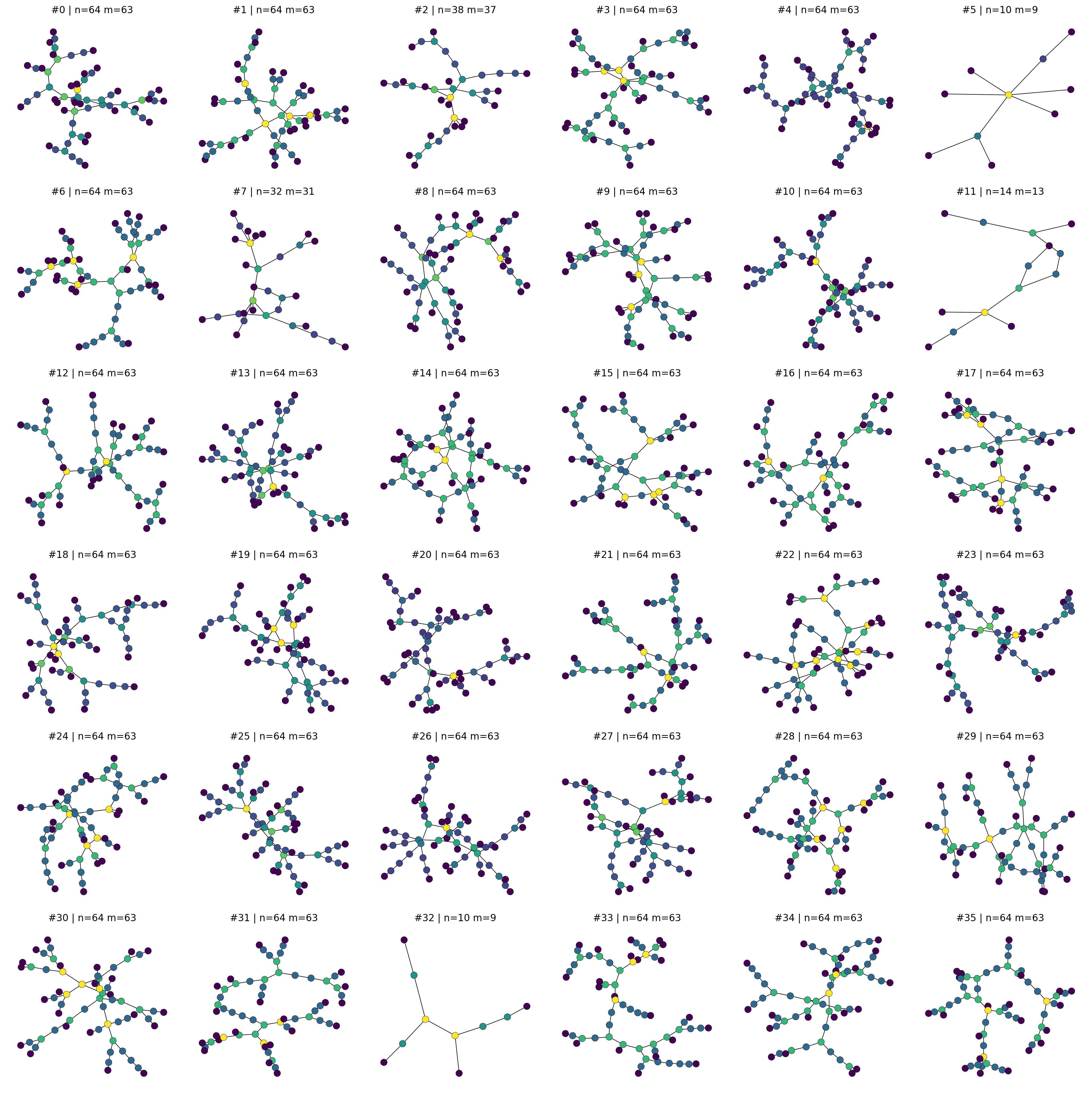}
    \caption{Graphs generated by Flowette for the Tree synthetic datasets.
    \label{fig:tree_graphs}}\vspace{-0.2cm}
\end{figure}

\begin{figure}[t!]
    \centering
    \includegraphics[width=0.95\textwidth]{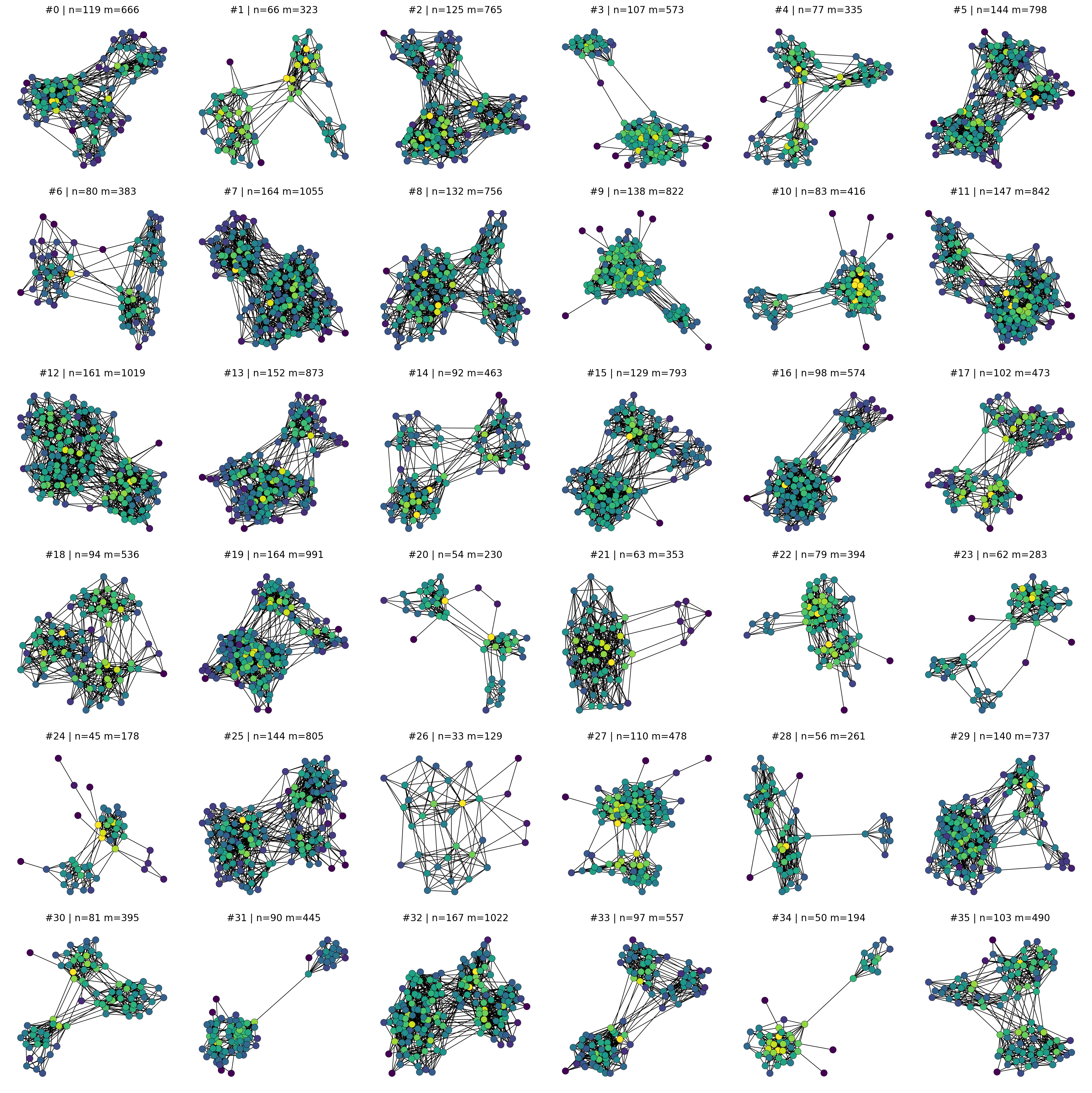}
    \caption{Graphs generated by Flowette for the SBM synthetic datasets.
    \label{fig:tree_graphs}}\vspace{-0.2cm}
\end{figure}

\begin{figure}[t!]
    \centering
    \includegraphics[width=0.95\textwidth]{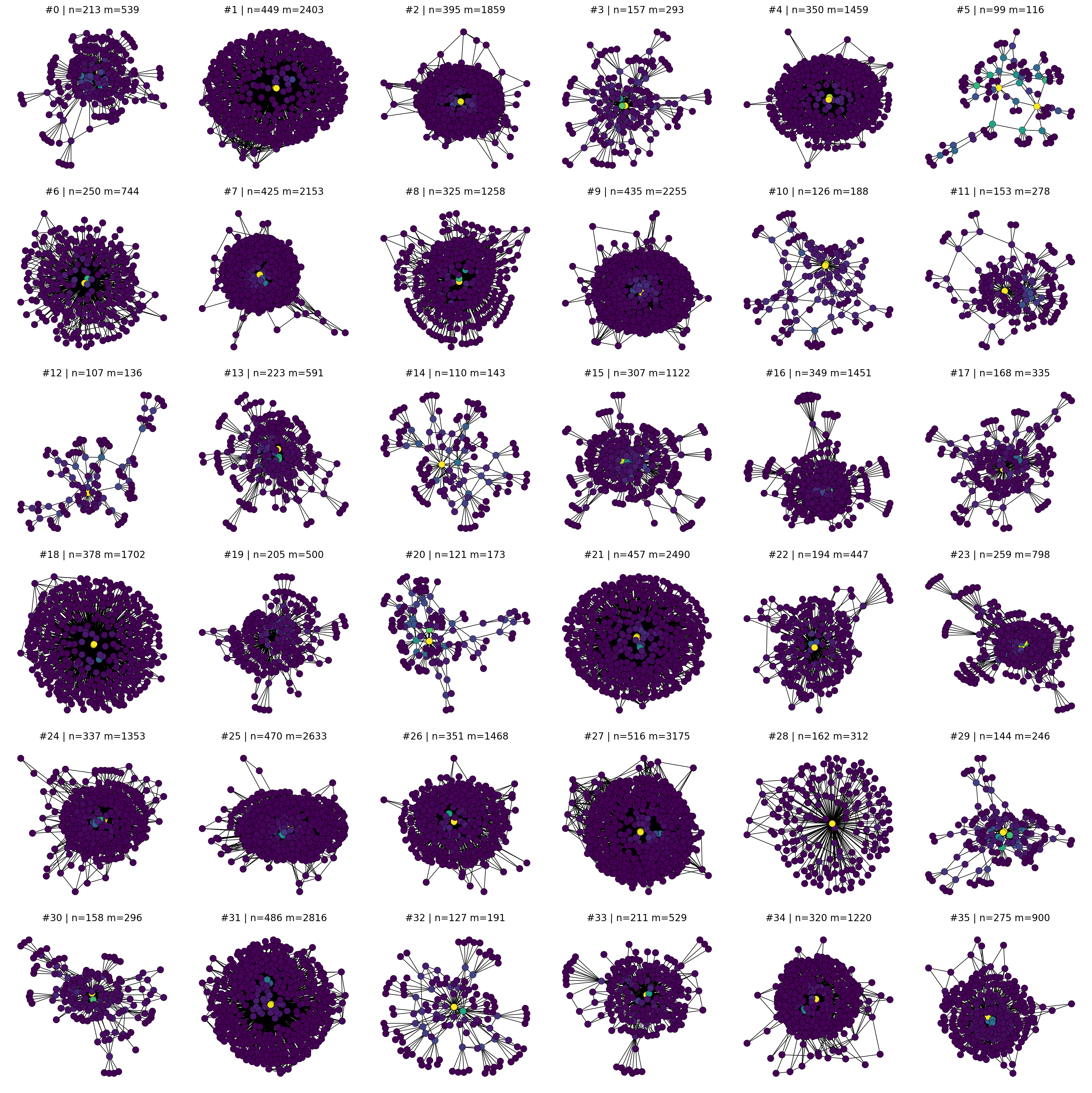}
    \caption{Graphs generated by Flowette for the Ego-small synthetic datasets.
    \label{fig:tree_graphs}}\vspace{-0.2cm}
\end{figure}

\begin{figure}[t!]
    \centering
    \includegraphics[width=0.70\textwidth]{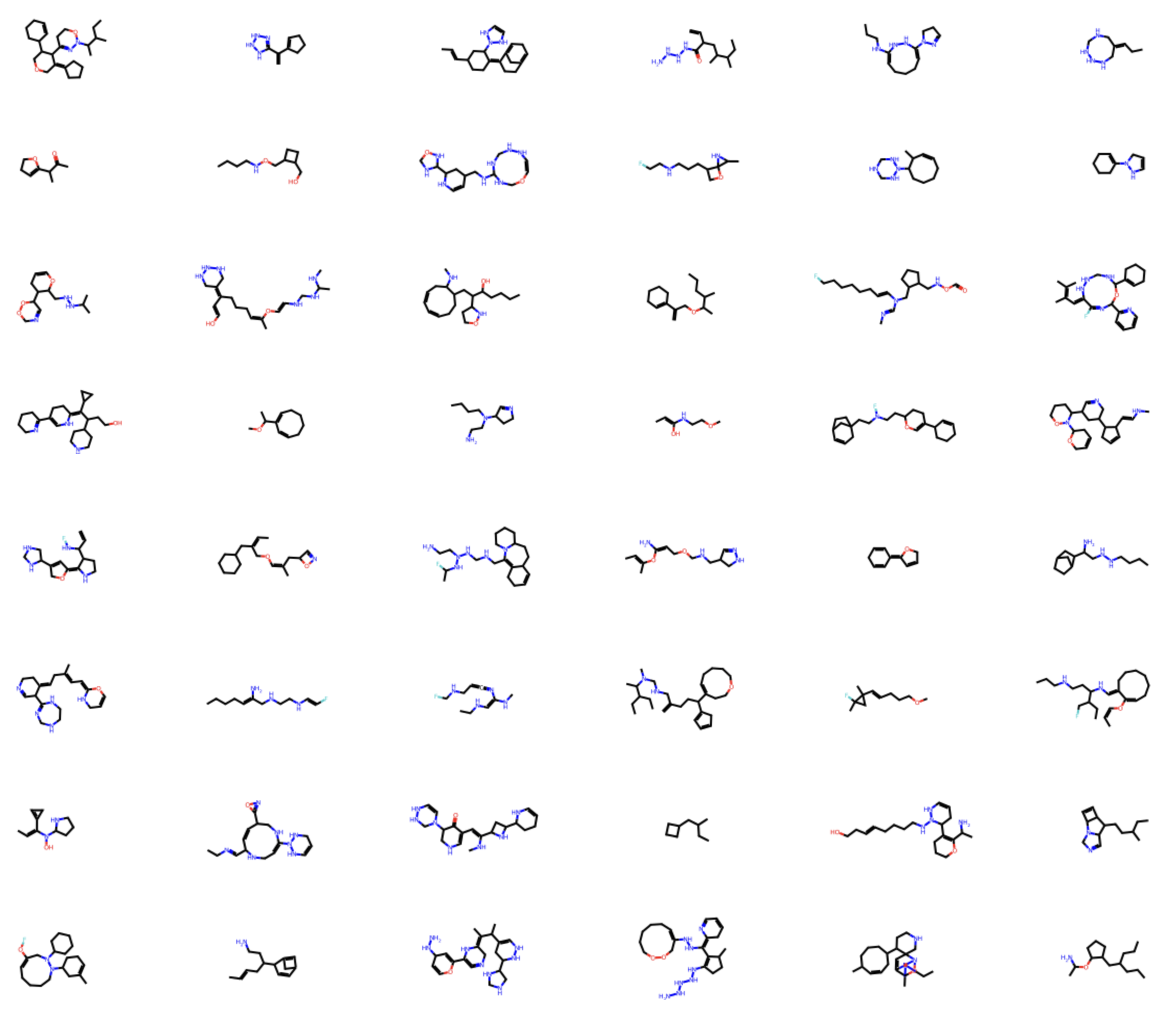}
    \caption{Graphs generated by Flowette for the ZINC250k molecular datasets.
    \label{fig:tree_graphs}}\vspace{-0.2cm}
\end{figure}

\begin{figure}[t!]
    \centering
    \includegraphics[width=0.70\textwidth]{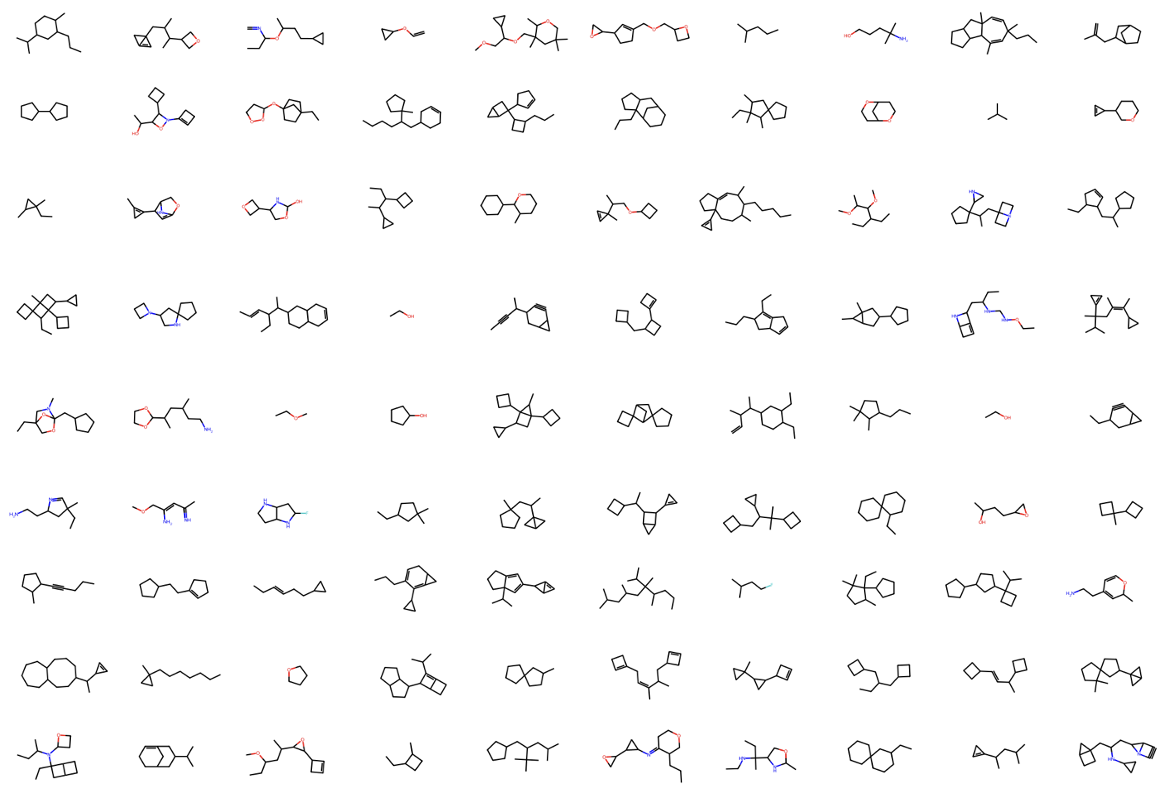}
    \caption{Graphs generated by Flowette for the QM9 molecular datasets.
    \label{fig:tree_graphs}}\vspace{-0.2cm}
\end{figure}

\begin{figure}[t!]
    \centering
    \includegraphics[width=0.70\textwidth]{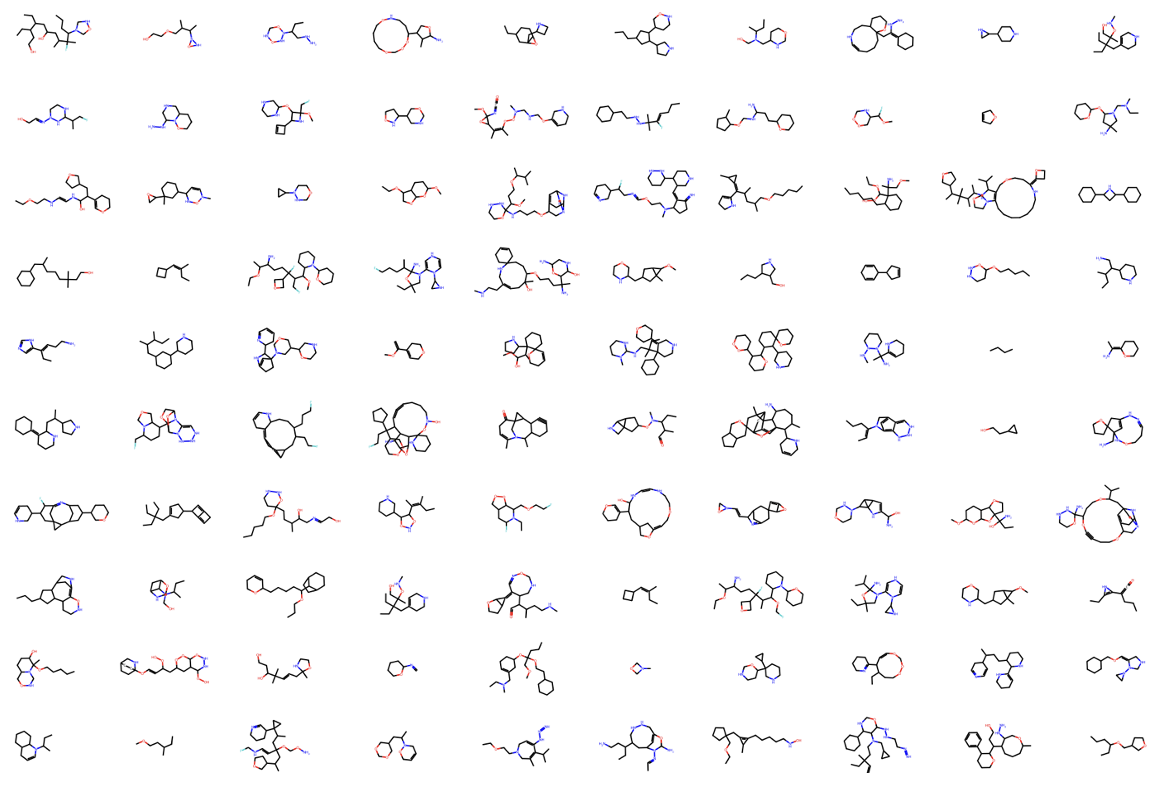}
    \caption{Graphs generated by Flowette for the Guacamol molecular datasets.
    \label{fig:tree_graphs}}\vspace{-0.2cm}
\end{figure}

\begin{figure}[t!]
    \centering
    \includegraphics[width=0.70\textwidth]{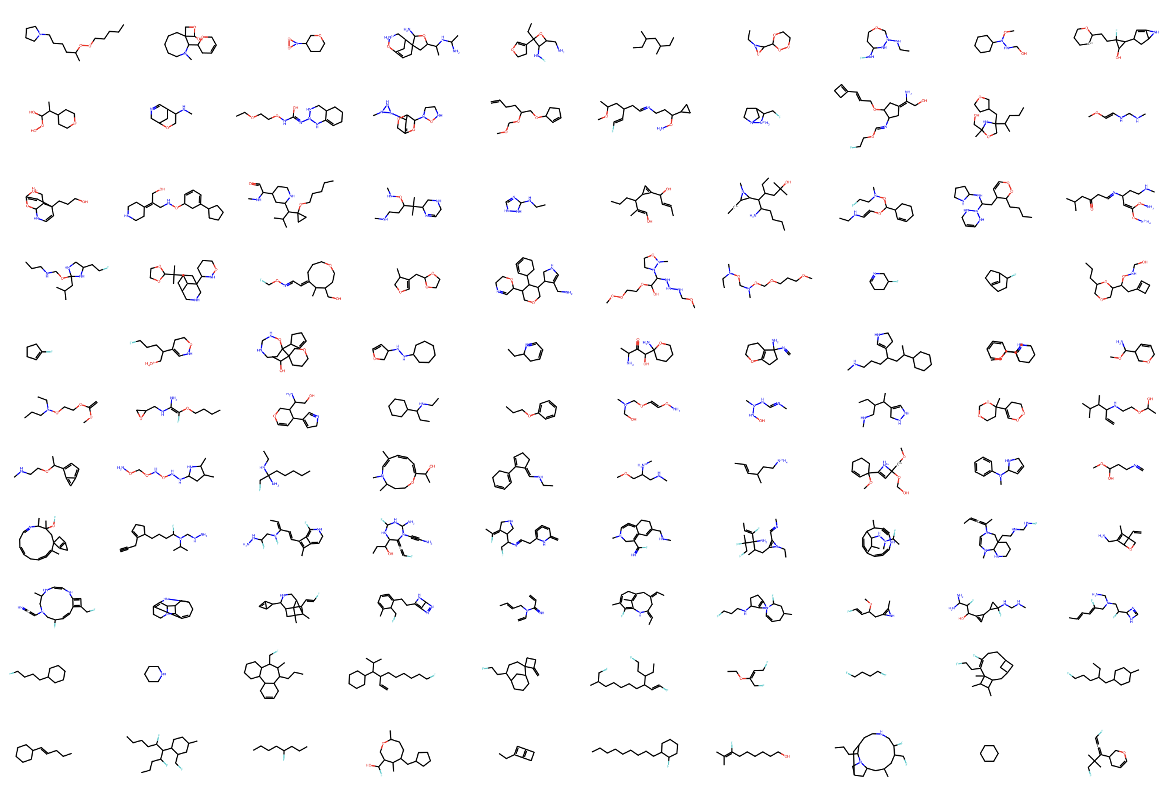}
    \caption{Graphs generated by Flowette for the MOSES molecular datasets.
    \label{fig:tree_graphs}}\vspace{-0.2cm}
\end{figure}


\end{document}